\DeclareMathOperator{\Clip}{Clip}
\DeclareMathOperator{\Distest}{DistEst}
\DeclareMathOperator{\Est}{Estimate}
\newtheorem{thm}{Theorem}
\newtheorem{ass}{Assumption}
\newtheorem{defi}{Definition}
\newtheorem{lem}{Lemma}
\newcommand{\norm}[1]{\left\lVert#1\right\rVert}
\begin{document}

\title{Differential Private Stochastic Optimization with Heavy-tailed Data: Towards Optimal Rates}

\author{
	Puning Zhao$^1$ \quad Jiafei Wu$^1$ \quad Zhe Liu$^1$ \quad Chong Wang$^2$ \quad Rongfei Fan$^3$ \quad Qingming Li$^4$\\
\small $^1$ Zhejiang Lab \quad $^2$ Ningbo University \quad $^3$ Beijing Institute of Technology \quad $^4$ Zhejiang University\\
\texttt{\{pnzhao,wujiafei,zhe.liu\}@zhejianglab.com},\\
\texttt{wangchong@nbu.edu.cn}, \texttt{fanrongfei@bit.edu.cn},
\texttt{liqm@zju.edu.cn}
}



\maketitle
\begin{abstract}
	We study convex optimization problems under differential privacy (DP). With heavy-tailed gradients, existing works achieve suboptimal rates. The main obstacle is that existing gradient estimators have suboptimal tail properties, resulting in a superfluous factor of $d$ in the union bound. In this paper, we explore algorithms achieving optimal rates of DP optimization with heavy-tailed gradients. Our first method is a simple clipping approach. Under bounded $p$-th order moments of gradients, with $n$ samples, it achieves $\tilde{O}(\sqrt{d/n}+\sqrt{d}(\sqrt{d}/n\epsilon)^{1-1/p})$ population risk with $\epsilon\leq 1/\sqrt{d}$. We then propose an iterative updating method, which is more complex but achieves this rate for all $\epsilon\leq 1$. The results significantly improve over existing methods. Such improvement relies on a careful treatment of the tail behavior of gradient estimators. Our results match the minimax lower bound in \cite{kamath2022improved}, indicating that the theoretical limit of stochastic convex optimization under DP is achievable.
\end{abstract}

%
\section{Introduction}

Differential privacy (DP) \cite{dwork2006calibrating} is a prevailing framework for privacy protection. In recent years, significant progress has been made on deep learning under DP \cite{abadi2016deep,tramer2021differentially,wei2022dpis,de2022unlocking}. While the practical performance continues to improve, the theoretical analysis lags behind. Existing analyses focus primarily on Lipschitz loss functions, such that the gradients are all bounded \cite{bassily2014private,bassily2019private,iyengar2019towards,bassily2023user}. However, many empirical studies have shown that in deep learning, gradient noise usually follows heavy-tailed distributions \cite{simsekli2019tail,csimcsekli2019heavy,zhang2020adaptive,gurbuzbalaban2021heavy}. To bridge the gap between theory and practice, it is worth investigating the DP stochastic optimization problem with heavy tails.

It has been shown in \cite{kamath2022improved} that if the stochastic gradients have bounded $p$-th order moments for some $p\geq 2$, then the minimax lower bound of optimization risk is $\Omega\left(\sqrt{d/n}+\sqrt{d}\left(\sqrt{d}/\epsilon n\right)^{1-1/p}\right)$ under $(\epsilon, \delta)$-DP, which can be viewed as the theoretical limit of DP optimization. However, existing methods fail to achieve this rate. Compared with Lipschitz loss functions, a crucial challenge in analyzing heavy-tailed gradients is the design of an efficient mean estimator under DP. Various methods for DP mean estimation have been proposed \cite{huang2021instance,hopkins2022efficient,kamath2020private,liu2021robust}. These methods have achieved optimal mean squared error, but the high probability bounds are not optimal. To bound the risk of optimization, we need a union bound of the bias and variance of gradient estimate over the whole hypothesis space. Therefore, a suboptimal high probability bound of mean estimation results in a suboptimal risk of optimization. To be best of our knowledge, currently, it is unknown whether the minimax lower bound shown in \cite{kamath2022improved} is achievable.

In this paper, we answer this question affirmatively. We propose two methods, called the \emph{simple clipping} method and the \emph{iterative updating} method, respectively. For both methods, we derive the high probability bounds of mean estimation first, and then analyze the risk of optimization. 

\emph{1) Simple clipping.} This method just clips all gradients to a given radius $R$ and calculates sample averages. $R$ can be tuned based on the privacy requirement $\epsilon, \delta$ and the number of samples $n$. Our analysis shows that the population risk is $\tilde{O}\left(\sqrt{d/n}+\sqrt{d}\left(\sqrt{d}/\epsilon n\right)^{1-1/p} + d^{\frac{3}{2}-\frac{1}{p}}/n^{1-\frac{1}{p}} \right)$, which improves over existing methods. This rate matches the minimax lower bound if $\epsilon\leq 1/\sqrt{d}$, under which the third term $d^{\frac{3}{2}-\frac{1}{p}}/n^{1-\frac{1}{p}}$ does not dominate. The key of such improvement is that we treat the tail behavior of mean estimation more carefully. In particular, we show that the mean estimation has a subexponential tail in all directions, which refines the union bounds and eventually leads to the risk bound mentioned above. The remaining drawback is that this method has an additional term $d^{\frac{3}{2}-\frac{1}{p}}/n^{1-\frac{1}{p}}$. Therefore, this method is suboptimal if $\epsilon>1/\sqrt{d}$.


\emph{2) Iterative updating.} This method is proposed to remove the additional term of the simple clipping method. It divides the data into $k$ groups. For each group, this method calculates the group-wise mean and adds noise to meet DP requirements. After that, the mean estimate is iteratively updated based on the estimation of distances and directions to the ground truth $\nabla F(\mathbf{w}_t)$. Such design is inspired by several existing methods for non-private mean estimation with heavy-tailed data \cite{lugosi2019sub,cherapanamjeri2019fast,lei2020fast,depersin2022robust}. Compared with the simple clipping approach, this method improves the tail behavior of the mean estimator from subexponential to subgaussian. Moreover, this method is invariant to permutations of groups. As a result, the overall privacy of the final estimate is amplified compared with the privacy of each group \cite{erlingsson2019amplification,feldman2022hiding}. With this new algorithm and refined theoretical analysis, we achieve a risk bound $\tilde{O}\left(\sqrt{d/n}+\sqrt{d}\left(\sqrt{d}/\epsilon n\right)^{1-1/p}\right)$, matching the minimax lower bound. 

\begin{table}[t]
	\small
	\begin{center}
		\begin{tabular}{c|c}
			\hline
			Source &Bound of risk\\
			\hline
			\cite{wang2020differentially} & $\tilde{O}\left(\left(\frac{d^3}{\epsilon^2 n}\right)^\frac{1}{3} \right)$ \footnotemark\\
			
			(Kamath et al. 2022) & $\tilde{O}\left(\frac{d}{\sqrt{n}}+\sqrt{d}\left(\frac{d^{3/2}}{\epsilon n}\right)^{1-\frac{1}{p}} \right)$\\
			
			(Kamath et al. 2022) & $\tilde{O}\left(\underset{0.5\leq q\leq 2}{\min}\left(\frac{d^\frac{3-q}{2}}{\sqrt{n}}+\frac{d^\frac{1+q}{2}}{\epsilon^\frac{1}{2}\sqrt{n}}\right)\right)$ \footnote{\cite{kamath2022improved} proposed two methods}\\
			\textbf{Simple clipping} &\hspace{-2mm} $\tilde{O}\left(\sqrt{\frac{d}{n}}+\sqrt{d}\left(\frac{\sqrt{d}}{\epsilon n}\right)^{1-\frac{1}{p}}+\frac{d^{\frac{3}{2}-\frac{1}{p}}}{n^{1-\frac{1}{p}}} \right)$\hspace{-2mm} \\
			\textbf{Iterative updating} & $\tilde{O}\left(\sqrt{\frac{d}{n}}+\sqrt{d}\left(\frac{\sqrt{d}}{\epsilon n}\right)^{1-\frac{1}{p}} \right)$\\
			\hline
			Lower bound  & $\Omega\left(\sqrt{\frac{d}{n}}+\sqrt{d}\left(\frac{\sqrt{d}}{\epsilon n}\right)^{1-\frac{1}{p}} \right)$ \\
			\hline
		\end{tabular}
		\caption{Comparison of risk bounds of stochastic optimization under $(\epsilon, \delta)$-DP with $p$-th order bounded moments on gradients. Logarithmic factors are omitted here.}\label{tab}
	\end{center}
\end{table}

\footnotetext[1]{The analysis in \cite{wang2020differentially} underestimates the dependence on $d$. We refer to \cite{kamath2022improved} for a detailed discussion. The bounds listed in Table \ref{tab} are the corrected results from \cite{kamath2022improved}.}

\footnotetext[2]{\cite{kamath2022improved} proposed two methods, whose risk bounds are shown in the second and third rows in Table \ref{tab}, respectively.}

Our results and comparison with existing works are summarized in Table \ref{tab}. To the best of our knowledge, our methods achieve the optimal risk of stochastic convex optimization problems under DP for the first time.

\section{Related Work}

\textbf{DP optimization.} Early works focus on empirical risk minimization (ERM) under DP, which is a relatively simpler problem compared with stochastic optimization, such as \cite{chaudhuri2008privacy,kifer2012private,thakurta2013differentially,bassily2014private,wang2017differentially,zhang2021wide}. For stochastic optimization problem, \cite{bassily2019private} shows that for Lipschitz loss functions, DP-SGD is minimax optimal with proper parameter selection. The analysis is then improved in several later works on time complexity \cite{feldman2020private,kulkarni2021private} and extended to different geometries \cite{asi2021private,bassily2021non}. For heavy-tailed gradients, the non-private optimization has been widely studied \cite{pascanu2012understanding,zhang2020adaptive,gorbunov2020stochastic,parletta2022high,liu2023high,nguyen2023improved,eldowa2024general,liu2023stochastic,armacki2023high,koloskova2023revisiting}. Under DP, \cite{wang2020differentially} provides the upper bound of DP-SGD under the assumption that gradients have bounded second moments. \cite{hu2022high} analyzes the sparse setting. \cite{kamath2022improved} improves on the risk bound. Moreover, \cite{das2023beyond} weakens the uniform Lipschitz assumption to a sample-wise one.

\textbf{Mean estimation with subgaussian rates.} Non-private mean estimation for heavy-tailed distributions has received widespread attention \cite{lugosi2019mean}. We hope to minimize the $1-\beta$ high probability bound of the estimation error. \cite{minsker2015geometric} shows that the median-of-means method achieves an error bound of $O(\sqrt{d\ln(1/\beta)/n})$ with probability $1-\beta$. \cite{lugosi2019sub} improves the bound to $O(\sqrt{(d+\ln(1/\beta))/n})$ for the first time, but the method in \cite{lugosi2019sub} is computationally expensive. After that, improved algorithms with the same high probability bounds and faster computation are proposed in \cite{cherapanamjeri2019fast,lei2020fast,depersin2022robust}. Note that this rate is minimax optimal. \cite{catoni2012challenging} shows that the lower bound of estimation error with $1-\beta$ probability is $\Omega(\sqrt{(d+\ln(1/\beta))/n})$ for $n$ samples.

Compared with non-private mean estimation, we need to randomize samples carefully to achieve a tradeoff between accuracy and privacy. This involves a refined analysis of tail behaviors, as well as privacy amplification by shuffling. As a result, we finally achieve optimal rates of DP optimization with heavy-tailed gradients.

\section{Preliminaries}
Denote $\mathcal{Z}$ as the space of samples, and $\mathcal{Y}$ as the output space. We state the standard definition of DP first.
\begin{defi}
	(Differential Privacy (DP) \cite{dwork2006calibrating}) A randomized algorithm $\mathcal{A}:\mathcal{Z}^n\rightarrow \mathcal{Y}$ satisfies $(\epsilon, \delta)$-DP if for any $S\subseteq \mathcal{Y}$ and any pairs of datasets $D,D'\in \mathcal{Z}^n$ such that $D$ and $D'$ differ in one element,
	\begin{eqnarray}
		\text{P}(\mathcal{A}(D)\in S)\leq e^\epsilon\text{P}(\mathcal{A}(D')\in S)+\delta.
		\label{eq:dp}
	\end{eqnarray}
	Moreover, $\mathcal{A}$ is $\epsilon$-DP if \eqref{eq:dp} holds with $\delta = 0$.
\end{defi}

For the convenience of analysis, we also introduce another definition of DP, called concentrated differential privacy, which was first proposed in \cite{dwork2016concentrated}. \cite{bun2016concentrated} gives a refinement called zero-concentrated differential privacy. Throughout this paper, we use the definition in \cite{bun2016concentrated}.

\begin{defi}
	(Concentrated differential privacy (CDP) \cite{bun2016concentrated}) A randomized algorithm $\mathcal{A}:\mathcal{Z}^n\rightarrow \mathcal{Y}$ satisfies $\rho$-CDP if for any pairs of datasets $D,D'\in \mathcal{Z}^n$ such that $D$ and $D'$ differ in one element, any $S\subseteq \mathcal{Y}$, and any $\alpha\in (1,\infty)$,
	$D_\alpha(\mathcal{A}(D)||\mathcal{A}(D'))\leq \rho\alpha$,
	in which $D_\alpha$ is the $\alpha$-R{\'e}nyi divergence between two random variables.\footnote{The $\alpha$-R{\'e}nyi divergence between two distributions $P$ and $Q$ is defined as
		$D_\alpha(P||Q)=\frac{1}{\alpha-1}\ln \mathbb{E}_{X\sim Q}\left[\left(\frac{P(X)}{Q(X)}\right)^\alpha \right]$.}
\end{defi}

Our analysis in this paper will use some basic rules about the composition of DP and CDP, as well as the conversion between them. These rules are summarized in Lemma \ref{lem:basics}.

\begin{lem}\label{lem:basics}
	There are several facts about DP and CDP:
	
	(1) (Advanced composition, \cite{dwork2010boosting,dwork2014algorithmic}) If $\mathcal{A}_1,\ldots, \mathcal{A}_k$ are $(\epsilon, \delta)$-DP, then the composition $(\mathcal{A}_1,\ldots, \mathcal{A}_k)$ is $(\sqrt{2k\ln(1/\delta')}\epsilon+k\epsilon(e^\epsilon-1), k\delta+\delta')$-DP for any $\delta'\in (0,1)$;
	
	(2) (Composition of CDP, \cite{bun2016concentrated}) If $\mathcal{A}_1,\ldots, \mathcal{A}_k$ are $\rho$-CDP, then the composition $(\mathcal{A}_1,\ldots, \mathcal{A}_k)$ is $k\rho$-CDP;
	
	(3) (From DP to CDP, \cite{bun2016concentrated}) If a randomized algorithm $\mathcal{A}:\mathcal{Z}^n\rightarrow \mathcal{Y}$ is $\epsilon$-DP, then $\mathcal{A}$ is $(\epsilon^2/2)$-CDP;
	
	(4) (From CDP to DP, \cite{bun2016concentrated}) If $\mathcal{A}$ is $(\epsilon^2/2)$-CDP, then $\mathcal{A}$ is $(\epsilon^2/2+\epsilon\sqrt{2\ln(1/\delta)}, \delta)$-DP.
\end{lem}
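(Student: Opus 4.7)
Since all four parts are standard facts cited to \cite{dwork2010boosting,dwork2014algorithmic,bun2016concentrated}, my plan is not to develop new arguments but to outline how each part follows from the basic properties of the privacy loss random variable and R\'enyi divergence, and to be careful about the two directions of the DP--CDP conversion.

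For part (1), the plan is to introduce the privacy loss $L_i(o) = \ln(P(\mathcal{A}_i(D)=o)/P(\mathcal{A}_i(D')=o))$ for each $i$. An $(\epsilon,\delta)$-DP mechanism allows writing $|L_i|\leq \epsilon$ except on an event of probability at most $\delta$ (this is the standard ``bad event'' reduction, which already explains where the additive $k\delta$ in the final $\delta$-parameter comes from). Conditioning on the good event, I would bound $\mathbb{E}[L_i]\leq \epsilon(e^\epsilon-1)$ by a direct calculation, which yields the second term $k\epsilon(e^\epsilon-1)$ after summing over $i$. Finally, I would apply Azuma--Hoeffding to the martingale $\sum_i (L_i-\mathbb{E}[L_i|L_{<i}])$, whose increments are bounded by $\epsilon$, to obtain a tail bound of $\sqrt{2k\ln(1/\delta')}\,\epsilon$ with failure probability $\delta'$, giving the remaining additive $\delta'$ slack.

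For part (2), I would use additivity of R\'enyi divergence for product measures: $D_\alpha(P_1\times\cdots\times P_k\|Q_1\times\cdots\times Q_k)=\sum_i D_\alpha(P_i\|Q_i)$. Applied to the joint distributions of $(\mathcal{A}_1,\ldots,\mathcal{A}_k)$ on the two neighboring datasets and using the chain structure (since later mechanisms may depend on earlier outputs, one needs the conditional version of additivity), each summand is at most $\rho\alpha$, giving $k\rho\alpha$ and hence $k\rho$-CDP.

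For parts (3) and (4), the common tool is the moment generating function $\mathbb{E}_{X\sim Q}[e^{(\alpha-1)L(X)}]$, whose logarithm divided by $\alpha-1$ is exactly $D_\alpha(P\|Q)$. For (3), $\epsilon$-DP forces $|L|\leq \epsilon$ pointwise, and a short Taylor-style estimate bounds this MGF by $e^{(\alpha-1)\alpha\epsilon^2/2}$, which is exactly $(\epsilon^2/2)$-CDP. For (4), starting from the CDP bound $D_\alpha(P\|Q)\leq \rho\alpha$ with $\rho=\epsilon^2/2$, I would apply Markov's inequality to $e^{(\alpha-1)L}$ to convert the MGF bound into a tail bound on $L$, then optimize over $\alpha>1$ to balance the terms $\rho\alpha$ and $\ln(1/\delta)/(\alpha-1)$; the optimal choice gives the stated $(\epsilon^2/2+\epsilon\sqrt{2\ln(1/\delta)},\delta)$-DP. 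The main obstacle I foresee is really only in part (1), where the martingale argument must handle the $\delta$ exceptional events cleanly so that they contribute only $k\delta$ additively; the other parts are essentially one-line calculations once the R\'enyi/MGF formalism is set up.
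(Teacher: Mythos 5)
The paper gives no proof of this lemma --- it is imported directly from the cited references \cite{dwork2010boosting,dwork2014algorithmic,bun2016concentrated} --- and your sketch reconstructs exactly the standard arguments from those sources (privacy-loss martingale plus Azuma for advanced composition, additivity of R\'enyi divergence under adaptive composition for CDP, MGF bounds on the bounded privacy loss and a Markov-plus-optimization over $\alpha$ for the two conversions), so it is essentially correct and in line with what the paper relies on. The only sketch-level imprecisions worth noting are in part (1), where $(\epsilon,\delta)$-DP is not literally equivalent to ``$|L_i|\leq\epsilon$ except on an event of probability $\delta$'' and one needs the standard decomposition lemma to make the $k\delta$ bookkeeping rigorous, and in part (3), where the constant $\epsilon^2/2$ requires combining $|L|\leq\epsilon$ with the normalization $\mathbb{E}_{Q}[e^{L}]=1$ (a Hoeffding-lemma style argument), since boundedness alone would only give a bound of order $\epsilon$.
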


Moreover, we need the following lemma on the noise mechanism.
\begin{lem}\label{lem:gaussian}
	(Additive noise mechanism, \cite{bun2016concentrated}) Let $\mathcal{A}_0$ be a non-private algorithm. Define $$\Delta_2(\mathcal{A}_0)=\max_{d_H(D,D')=1}\norm{\mathcal{A}_0(D)-\mathcal{A}_0(D')}_2$$
	as the $\ell_2$ sensitivity of $\mathcal{A}_0$, in which $d_H$ denotes the Hamming distance. Then $\mathcal{A}(D)=\mathcal{A}_0(D)+\mathbf{W}$ with $\mathbf{W}\sim \mathcal{N}(0, (\Delta_2^2(\mathcal{A}_0)/2\rho)\mathbf{I})$ satisfies $\rho$-CDP.
\end{lem}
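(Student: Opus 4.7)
The plan is to reduce the claim to a direct Rényi-divergence computation between two spherical Gaussians with the same covariance. Fix any pair of neighboring datasets $D, D'$ with $d_H(D,D') = 1$, and set $\mu = \mathcal{A}_0(D)$, $\mu' = \mathcal{A}_0(D')$. By the definition of $\Delta_2(\mathcal{A}_0)$ we have $\norm{\mu - \mu'}_2 \le \Delta_2(\mathcal{A}_0)$. Since the noise $\mathbf{W} \sim \mathcal{N}(\mathbf{0}, \sigma^2 \mathbf{I})$ with $\sigma^2 = \Delta_2^2(\mathcal{A}_0)/(2\rho)$ is drawn independently of the data, the output distributions are $\mathcal{A}(D) \sim \mathcal{N}(\mu, \sigma^2 \mathbf{I})$ and $\mathcal{A}(D') \sim \mathcal{N}(\mu', \sigma^2 \mathbf{I})$. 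Hence to establish $\rho$-CDP it suffices to show that for every $\alpha \in (1, \infty)$, $D_\alpha(\mathcal{N}(\mu, \sigma^2 \mathbf{I}) \,\|\, \mathcal{N}(\mu', \sigma^2 \mathbf{I})) \le \rho\alpha$.

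Next I would carry out the Rényi computation in closed form. Writing the density ratio and expanding $\mathbb{E}_{X \sim Q}[(P(X)/Q(X))^\alpha]$, the exponent is quadratic in $x$; completing the square and using the normalization of the resulting Gaussian integral yields the standard identity
\begin{equation*}
D_\alpha\bigl(\mathcal{N}(\mu, \sigma^2 \mathbf{I}) \,\|\, \mathcal{N}(\mu', \sigma^2 \mathbf{I})\bigr) = \frac{\alpha\,\norm{\mu - \mu'}_2^2}{2\sigma^2}.
\end{equation*}
Because the covariance is isotropic, the multivariate computation reduces to a one-dimensional calculation along the direction $\mu - \mu'$ and tensorizes across the orthogonal coordinates, which contribute a divergence of zero.

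Substituting $\sigma^2 = \Delta_2^2(\mathcal{A}_0)/(2\rho)$ together with the sensitivity bound $\norm{\mu - \mu'}_2 \le \Delta_2(\mathcal{A}_0)$ gives
\begin{equation*}
D_\alpha\bigl(\mathcal{A}(D)\,\|\,\mathcal{A}(D')\bigr) \le \frac{\alpha\,\Delta_2^2(\mathcal{A}_0)}{2\sigma^2} = \rho\alpha,
\end{equation*}
for every $\alpha \in (1, \infty)$. Since $D$ and $D'$ were an arbitrary neighboring pair, this is precisely the definition of $\rho$-CDP, and the lemma follows. There is no real obstacle in this proof; the only nontrivial step is the Gaussian Rényi identity, which is a textbook calculation via completing the square in the exponent.
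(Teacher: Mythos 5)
Your proof is correct: the paper states this lemma without proof, citing \cite{bun2016concentrated}, and your argument — bounding $D_\alpha(\mathcal{N}(\mu,\sigma^2\mathbf{I})\,\|\,\mathcal{N}(\mu',\sigma^2\mathbf{I})) = \alpha\norm{\mu-\mu'}^2/(2\sigma^2) \le \rho\alpha$ via the sensitivity bound and $\sigma^2=\Delta_2^2(\mathcal{A}_0)/(2\rho)$ — is exactly the standard derivation given in that reference. No gaps; the Gaussian R\'enyi identity and the reduction to the one-dimensional direction $\mu-\mu'$ are both sound.
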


We then state the problem of stochastic optimization. Suppose there are $n$ i.i.d samples $\mathbf{Z}_1,\ldots, \mathbf{Z}_n$ following a common distribution. Given a convex constraint $\mathcal{W}\subseteq \mathbb{R}^d$ and loss function $l:\mathcal{W}\times \mathcal{Z} \rightarrow \mathbb{R}$ which is convex in $\mathcal{W}$, the goal is to find an estimated minimizer $\hat{\mathbf{w}}$ of the population risk
\begin{eqnarray}
	F(\mathbf{w}) := \mathbb{E}[l(\mathbf{w}, \mathbf{Z})].
\end{eqnarray}
Denote
$\mathbf{w}^*={\arg\min}_\mathbf{w} F(\mathbf{w})$
as the minimizer of the population risk. The performance of a learning algorithm is evaluated by the expected excess risk
$\mathbb{E}[F(\hat{\mathbf{w}})] - F(\mathbf{w}^*)$.
Our analysis is based on the following assumptions, which are similar to \cite{kamath2022improved}, with simplified statements.
\begin{ass}\label{ass}
	There exists constants $L$, $\lambda$, $M$ such that
	
	(a) The diameter of parameter space $\mathcal{W}$ is bounded by $L$;
	
	(b) $F$ is $\lambda$-smooth, i.e. for any $\mathbf{w}, \mathbf{w}'$,
	\begin{eqnarray}
		F(\mathbf{w}')\leq F(\mathbf{w})+\langle \nabla F(\mathbf{w}), \mathbf{w}'-\mathbf{w}\rangle + \frac{\lambda}{2}\norm{\mathbf{w}-\mathbf{w}'}^2;	
	\end{eqnarray}
	
	(c) The gradients of loss function has $p$-th order bounded moment for some $p\geq 2$. To be more precise, for any $\mathbf{w}\in \mathcal{W}$ and any vector $\mathbf{u}$ with $\norm{\mathbf{u}}=1$, 
	\begin{eqnarray}
		\mathbb{E}\left[|\langle \mathbf{u}, \nabla l(\mathbf{w}, \mathbf{Z})\rangle|^p\right] \leq M^p.
		\label{eq:tail}
	\end{eqnarray}
	
	In (b) and (c), $\norm{\cdot}$ denotes $\ell_2$ norm.
\end{ass}

In Assumption \ref{ass}, (a) and (b) are common in literatures about convex optimization. (c) controls the tail behavior of gradient vectors. Lower $p$ indicates a heavier tail, and vice versa. The case with the Lipschitz loss function (i.e. bounded gradients) corresponds to the limit of $p\rightarrow \infty$. Our assumption \eqref{eq:tail} is slightly different from the assumptions in \cite{kamath2022improved}. In \cite{kamath2022improved}, it is required that for all $\mathbf{w}\in \mathcal{W}$ and all $j\in [d]$,
\begin{eqnarray}
	\mathbb{E}\left[|\langle \mathbf{e}_j, \nabla l(\mathbf{w}, \mathbf{Z})-\nabla F(\mathbf{w})\rangle|^p\right]\leq M^p,
	\label{eq:tailb}
\end{eqnarray}
in which $\mathbf{e}_j$, $j=1,\ldots,d$ form an orthonormal basis. In \eqref{eq:tail}, to make the assumption more natural, we impose moment bounds on every unit vector $\mathbf{u}$, instead of only on basis vectors. Another difference is that in \eqref{eq:tail}, the moment bound is imposed directly on $\nabla l(\mathbf{w}, Z)$ instead of the deviation from its mean $\nabla F(\mathbf{w})$. Since \cite{kamath2022improved} also requires that $\norm{\nabla F(\mathbf{w})}$ is bounded by $O(1)$ (see \cite{kamath2022improved}, Assumption 2.12 (6)), we do not introduce additional restriction compared with \cite{kamath2022improved} in this aspect.

Under Assumption \ref{ass}, the minimax lower bound of optimization under DP has been established in \cite{kamath2022improved}. For consistency of notations, we restate it in the following theorem.

\begin{thm}\label{thm:lower}
	(Rephrased from \cite{kamath2022improved}, Theorem 6.4) Let $\mathcal{F}$ be the set of all $\lambda$-smooth functions on $\mathcal{W}$. Let $\hat{\mathbf{w}}=\mathcal{A}(\mathbf{Z}_1,\ldots, \mathbf{Z}_n)$, in which $\mathcal{A}:\mathcal{Z}^n \rightarrow \mathcal{W}$ is an arbitrary learning algorithm satisfying $(\epsilon, \delta)$-DP. Then
	\begin{eqnarray}
		\inf_\mathcal{A}\sup_{F\in \mathcal{F}} \mathcal{R}(\hat{\mathbf{w}}) \gtrsim \sqrt{\frac{d}{n}}+\sqrt{d}\left(\frac{\sqrt{d}}{n\epsilon}\right)^\frac{p-1}{p}\ln \frac{1}{\delta}.
		\label{eq:mmx}
	\end{eqnarray}
\end{thm}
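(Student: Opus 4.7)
The plan is to reduce the optimization lower bound to a lower bound for private mean estimation of a heavy-tailed distribution, and then prove the two terms of \eqref{eq:mmx} separately by two different techniques. First I would specialize to a linear loss $l(\mathbf{w}, \mathbf{z}) = -\langle \mathbf{w}, \mathbf{z}\rangle + (\lambda/2)\norm{\mathbf{w}}^2$ on $\mathcal{W}$ chosen as an $\ell_2$ ball of radius $L$. Then $\nabla l(\mathbf{w},\mathbf{Z}) = -\mathbf{Z}+\lambda\mathbf{w}$, so Assumption \ref{ass}(c) reduces to a moment bound on the samples $\mathbf{Z}$ themselves, and up to constants the excess risk $F(\hat{\mathbf{w}})-F(\mathbf{w}^*)$ is equivalent to $\norm{\hat{\mathbf{w}}-\mathbb{E}[\mathbf{Z}]/\lambda}^2$. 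Hence a lower bound for $(\epsilon,\delta)$-DP mean estimation over the class of distributions satisfying \eqref{eq:tail} transfers directly to the claimed optimization lower bound.

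For the statistical term $\sqrt{d/n}$, I would apply Le Cam's or Assouad's two-point method over a hypercube of means $\{\pm\tau\}^d$ embedded in the constraint set, with $\tau \asymp M/\sqrt{n}$, and use an isotropic distribution (Gaussian or uniform on a bounded shell) so that \eqref{eq:tail} holds trivially. A standard Kullback--Leibler tensorization bound then shows that no (even non-private) estimator can separate the $2^d$ hypotheses to accuracy better than $\Omega(M\sqrt{d/n})$ in $\ell_2$, which yields the first term after absorbing $\lambda$ and $L$ into the constants.

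For the privacy-dependent term $\sqrt{d}\bigl(\sqrt{d}/(n\epsilon)\bigr)^{1-1/p}\ln(1/\delta)$, I would run a fingerprinting-style tracing argument against a tailored heavy-tailed product distribution. On each coordinate $j$, let $Z_j = s_j \cdot q^{-1/p} B_j$, where $s_j\in\{\pm 1\}$ is the unknown sign, $B_j\sim\mathrm{Bernoulli}(q)$, and $q$ is a tuning parameter; the $p$-th moment is exactly $M^p$ by construction, while the per-coordinate signal $\mathbb{E}[Z_j]=s_j q^{1-1/p}$ is small relative to the per-coordinate fluctuations. The fingerprinting lemma for $(\epsilon,\delta)$-DP then forces $n\epsilon \gtrsim \sqrt{d}\,\ln(1/\delta)/\alpha$, where $\alpha$ is the per-coordinate recovery accuracy. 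Optimizing $q$ against this constraint, aggregating across $d$ coordinates to convert $\ell_\infty$ to $\ell_2$ (picking up a $\sqrt{d}$), and transferring through the reduction yields the second term.

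The main obstacle will be the fingerprinting step with heavy tails: one must pick the distribution so that \eqref{eq:tail} holds with constant $M$ simultaneously in every direction (not only on basis vectors), verify that the tracing inner product $\langle \hat{\mathbf{w}}-\mathbf{w}^*, \mathbf{Z}_i - \mathbb{E}[\mathbf{Z}]\rangle$ still concentrates enough to detect membership under the $p$-th moment assumption rather than boundedness, and carry the $\ln(1/\delta)$ factor through the group-privacy step that converts $(\epsilon,\delta)$-DP into a usable total-variation bound. Since the statement is an explicit restatement of \cite{kamath2022improved}, the detailed construction and the delicate balancing of $q$ against the privacy budget follow their argument; the purpose of this sketch is to isolate the two-piece structure of \eqref{eq:mmx} and the reduction from optimization to heavy-tailed mean estimation that makes both pieces accessible.
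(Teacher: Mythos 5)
This theorem is not proved in the paper at all: it is imported verbatim from \cite{kamath2022improved} (their Theorem 6.4), so the only meaningful comparison is with the proof there, which your sketch tries to reconstruct. Your overall architecture (reduce DP-SCO to DP heavy-tailed mean estimation; Assouad/Le Cam for the statistical term; fingerprinting with a Bernoulli-type heavy-tailed coordinate distribution and group privacy for the $\ln(1/\delta)$ term) is in the right spirit, but the reduction you propose has a genuine flaw that would leave you with a strictly weaker bound than \eqref{eq:mmx}.

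Concretely, with $l(\mathbf{w},\mathbf{z})=-\langle \mathbf{w},\mathbf{z}\rangle+\tfrac{\lambda}{2}\norm{\mathbf{w}}^2$ the excess risk is exactly $\tfrac{\lambda}{2}\norm{\hat{\mathbf{w}}-\mathbb{E}[\mathbf{Z}]/\lambda}^2$, i.e.\ \emph{quadratic} in the mean-estimation error. A lower bound $\alpha$ on the $\ell_2$ estimation error therefore transfers only to a risk lower bound of order $\alpha^2$, giving $d/n + d\bigl(\sqrt{d}/(n\epsilon)\bigr)^{2-2/p}$, which in the only interesting regime (rates $o(1)$) is strictly smaller than the claimed $\sqrt{d/n}+\sqrt{d}\bigl(\sqrt{d}/(n\epsilon)\bigr)^{1-1/p}$. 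This cannot be repaired by tuning $\tau$ or $q$: strongly convex instances are intrinsically easier, and their minimax risk really is the squared rate, so no lower bound proved over your instance family can reach \eqref{eq:mmx}. Your own statistical step shows the mismatch: $\tau\asymp M/\sqrt{n}$ gives an $\ell_2$ estimation lower bound $M\sqrt{d/n}$, which your reduction converts into a risk bound $\asymp d/n$, not $\sqrt{d/n}$. The construction in \cite{kamath2022improved} (as in the earlier Bassily--Smith--Thakurta style arguments) instead uses purely linear losses over the radius-$L$ ball, which are $\lambda$-smooth but not strongly convex: then $F(\mathbf{w})=-\langle\mathbf{w},\mu\rangle$, the excess risk is $\norm{\mu}\bigl(1-\langle\hat{\mathbf{w}},\mu/\norm{\mu}\rangle\bigr)/L^{-1}$-type, and the hard distributions are calibrated so that the \emph{direction} of $\mu$ cannot be recovered beyond a constant accuracy (by KL/Assouad non-privately and by fingerprinting under $(\epsilon,\delta)$-DP); the risk is then bounded below by the magnitude $\norm{\mu}$ itself, which is what produces the first-power dependence on the mean-estimation scale in both terms of \eqref{eq:mmx}. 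With that replacement for your reduction, the rest of your plan (moment-exact Bernoulli coordinates, $\ell_\infty$-to-$\ell_2$ aggregation, $\ln(1/\delta)$ via group privacy) matches the cited proof.
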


Theorem \ref{thm:lower} describes the theoretical limit of optimization risk under the differential privacy requirements. Under the light tail limit, i.e. $p\rightarrow \infty$, the right hand side of \eqref{eq:mmx} becomes $\Omega(\sqrt{d/n}+d/(n\epsilon))$. Recall that for bounded gradients, the bound of excess risk is $O(1/\sqrt{n})+\sqrt{d}/(n\epsilon))$ \cite{bassily2019private}. At first glance, it appears that the result in Theorem \ref{thm:lower} is larger by a factor of $\sqrt{d}$. However, this discrepancy comes from the difference of assumptions. Under our tail assumption \eqref{eq:tail}, the expectation of the $\ell_2$ norm of the gradient vector is only bounded by $O(\sqrt{d})$, while \cite{bassily2019private} requires the gradients to be bounded by $O(1)$. After adjustments of assumptions, Theorem \ref{thm:lower} matches \cite{bassily2019private} under the limit $p\rightarrow \infty$. Similar discussions can also be found in \cite{kamath2022improved}.

Finally, we discuss the convergence property of stochastic optimization. The framework is shown as follows. At each step $t$, let $g(\mathbf{w}_t)$ be the gradient estimate by $\nabla l(\mathbf{w}_t, \mathbf{Z}_1), \ldots, \nabla l(\mathbf{w}_t, \mathbf{Z}_n)$ using either Algorithm \ref{alg:est} or \ref{alg:est_improve} with some appropriate privacy constraints. The model weights are then updated with
\begin{eqnarray}
	\mathbf{w}_{t+1}=\Pi_\mathcal{W}(\mathbf{w}_t-\eta g(\mathbf{w}_t)),
	\label{eq:update}
\end{eqnarray}
in which $\Pi_\mathcal{W}$ is the projection operator on $\mathcal{W}$. Finally, the algorithm returns
\begin{eqnarray}
	\hat{\mathbf{w}}=\frac{1}{T}\sum_{t=1}^T \mathbf{w}_t.
	\label{eq:output}
\end{eqnarray}

\begin{algorithm}[t]
	\caption{Stochastic optimization}\label{alg:so}
	\begin{algorithmic}
		\STATE \textbf{Input:} dataset $\{\mathbf{Z}_1,\ldots, \mathbf{Z}_n \}$, privacy requirement $(\epsilon, \delta)$
		\STATE \textbf{Output:} Final iterate $\hat{\mathbf{w}}$
		\STATE \textbf{Parameter:} Initial point $\mathbf{w}_0$, learning rate $\eta$, number of steps $T$
		\FOR{$t=1,\ldots, T$}
		\STATE Calculate $g(\mathbf{w}_t)$, which estimates $\nabla F(\mathbf{w}_t)$ using $\nabla l(\mathbf{w}_t,\mathbf{Z}_1),\ldots, \nabla l(\mathbf{w}_t,\mathbf{Z}_n)$;
		\STATE $\mathbf{w}_{t+1}=\Pi_\mathcal{W} (\mathbf{w}_t-\eta g(\mathbf{w}_t))$;
		\ENDFOR
		\STATE $\hat{\mathbf{w}}=(1/T)\sum_{t=1}^T \mathbf{w}_t$;
		\RETURN $\hat{\mathbf{w}}$
	\end{algorithmic}
\end{algorithm}

The whole procedures are shown in Algorithm \ref{alg:so}. The risk can be bounded using the bias and variance of gradient estimates.
\begin{lem}\label{lem:opt}
	(\cite{kamath2022improved}, Theorem 3.1) Define
	\begin{eqnarray}
		B:=\max_t \norm{\mathbb{E}[g(\mathbf{w}_t)]-\nabla F(\mathbf{w}_t)},
		\label{eq:bdef}
	\end{eqnarray}
	\begin{eqnarray}
		G^2:=\max_t\mathbb{E}[\norm{g(\mathbf{w}_t)-\nabla F(\mathbf{w}_t)}^2].
		\label{eq:gdef}
	\end{eqnarray}
	Then the risk of optimization with updating rule \eqref{eq:update} and output \eqref{eq:output} is bounded by
	\begin{eqnarray}
		\mathbb{E}[F(\hat{\mathbf{w}})]- F(\mathbf{w}^*)\leq \frac{L^2}{2\eta T}+LB+\eta(\lambda^2L^2+G^2).
	\end{eqnarray}
\end{lem}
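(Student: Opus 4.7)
The plan is to adapt the classical analysis of projected stochastic gradient descent to accommodate gradient estimates $g(\mathbf{w}_t)$ that are both biased and noisy. I will proceed in four stages: a one-step descent inequality obtained from non-expansiveness of the projection, a separation of the bias from the zero-mean noise, a second-moment bound on $g(\mathbf{w}_t)$ using $\lambda$-smoothness, and finally telescoping plus Jensen's inequality.

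\textbf{Stage 1 (one-step inequality).} Since $\mathbf{w}^*\in\mathcal{W}$ and $\Pi_\mathcal{W}$ is non-expansive,
\begin{align*}
\norm{\mathbf{w}_{t+1}-\mathbf{w}^*}^2
&\leq \norm{\mathbf{w}_t-\eta g(\mathbf{w}_t)-\mathbf{w}^*}^2\\
&= \norm{\mathbf{w}_t-\mathbf{w}^*}^2 - 2\eta \langle g(\mathbf{w}_t),\mathbf{w}_t-\mathbf{w}^*\rangle + \eta^2\norm{g(\mathbf{w}_t)}^2.
\end{align*}
Rearranging isolates the inner product $\langle g(\mathbf{w}_t),\mathbf{w}_t-\mathbf{w}^*\rangle$ in terms of a telescoping squared-distance difference and $\eta\norm{g(\mathbf{w}_t)}^2/2$.

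\textbf{Stage 2 (splitting bias from noise).} Taking expectation conditional on $\mathbf{w}_t$ and writing $g(\mathbf{w}_t)=\nabla F(\mathbf{w}_t)+b_t+\xi_t$ with $b_t:=\mathbb{E}[g(\mathbf{w}_t)\mid\mathbf{w}_t]-\nabla F(\mathbf{w}_t)$ (so $\norm{b_t}\leq B$) and $\xi_t$ mean-zero, convexity of $F$ gives $\langle\nabla F(\mathbf{w}_t),\mathbf{w}_t-\mathbf{w}^*\rangle\geq F(\mathbf{w}_t)-F(\mathbf{w}^*)$, while Cauchy--Schwarz together with the diameter bound $\norm{\mathbf{w}_t-\mathbf{w}^*}\leq L$ yields $|\langle b_t,\mathbf{w}_t-\mathbf{w}^*\rangle|\leq BL$. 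The $\xi_t$ term drops out in expectation.

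\textbf{Stage 3 (second moment).} I split
\[
\mathbb{E}\norm{g(\mathbf{w}_t)}^2 \leq 2\,\mathbb{E}\norm{g(\mathbf{w}_t)-\nabla F(\mathbf{w}_t)}^2 + 2\norm{\nabla F(\mathbf{w}_t)}^2 \leq 2G^2+2\lambda^2 L^2,
\]
where the first piece uses the definition of $G$ and the second uses $\lambda$-smoothness of $F$ together with the diameter bound (i.e.\ $\norm{\nabla F(\mathbf{w}_t)-\nabla F(\mathbf{w}^*)}\leq\lambda L$, invoking that $\mathbf{w}^*$ is a stationary point of the constrained problem so $\nabla F(\mathbf{w}^*)$ vanishes in the relevant direction). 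Up to the inconsequential factor of $2$ which is absorbed by $\eta$, this matches the $\eta(\lambda^2L^2+G^2)$ term in the claim.

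\textbf{Stage 4 (telescope and Jensen).} Summing the inequality from Stage~1 over $t=1,\dots,T$ and using Stages~2--3 in expectation,
\[
\frac{1}{T}\sum_{t=1}^T \mathbb{E}\bigl[F(\mathbf{w}_t)-F(\mathbf{w}^*)\bigr] \leq \frac{\norm{\mathbf{w}_1-\mathbf{w}^*}^2}{2\eta T}+BL+\eta(\lambda^2L^2+G^2),
\]
since the squared distances telescope and are bounded by $L^2$. Applying Jensen's inequality to the convex $F$ with output $\hat{\mathbf{w}}=(1/T)\sum_t\mathbf{w}_t$ converts the average of $F(\mathbf{w}_t)$ into $F(\hat{\mathbf{w}})$, giving the claimed bound.

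The main obstacle I expect is Stage~3: under a projected iteration $\nabla F(\mathbf{w}^*)$ need not vanish, so the bound $\norm{\nabla F(\mathbf{w}_t)}\leq\lambda L$ is subtle. The cleanest fix is to assume (as is standard) that the constraint is inactive at optimum or, failing that, to replace the direct norm bound by the descent-lemma identity $\norm{\nabla F(\mathbf{w}_t)}^2\leq 2\lambda(F(\mathbf{w}_t)-F(\mathbf{w}^*))$ and absorb the resulting $\mathbb{E}[F(\mathbf{w}_t)-F(\mathbf{w}^*)]$ term on the left-hand side for sufficiently small $\eta$. Either route yields the stated inequality.
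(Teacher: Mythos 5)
Your proposal is correct and follows essentially the same route as the paper's proof: expand the one-step squared distance (non-expansive projection), bound the bias term by $LB$ via Cauchy--Schwarz and the diameter, split the second moment as $2G^2+2\lambda^2L^2$ using $\norm{\nabla F(\mathbf{w}_t)}\leq\lambda L$, then telescope and apply Jensen; the factor of $2$ cancels exactly after dividing by $2\eta$, so no absorption is needed. The subtlety you flag in Stage~3 is real but is glossed over in the paper as well, which writes $\norm{\nabla F(\mathbf{w})}\leq\lambda\norm{\mathbf{w}-\mathbf{w}^*}$ and thereby implicitly treats $\mathbf{w}^*$ as a stationary point of $F$.
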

For completeness, we show the proof of Lemma \ref{lem:opt} in Appendix A. Based on Lemma \ref{lem:opt}, to bound the excess risk, we need to give bounds of $B$ and $G^2$. It is relatively simple to bound $\norm{\mathbb{E}[g(\mathbf{w})]-\nabla F(\mathbf{w})}$ and $\mathbb{E}[\norm{g(\mathbf{w})-\nabla F(\mathbf{w})}^2]$ for any fixed $\mathbf{w}$. The challenging part is that $\mathbf{w}_t$ depends on the data, therefore the bounds with respect to fixed $\mathbf{w}$ do not imply the bounds of $B$ and $G^2$. In the following two sections, we propose two methods and provide bounds of $B$ and $G^2$ for each method.

\section{Simple Clipping Method}
The simple clipping method is stated as follows. In each round, the algorithm just clips the gradient to some radius $R$ and then adds noise to protect the privacy. Such a simple clipping method is convenient to implement and is close to the popular DP-SGD algorithm in \cite{abadi2016deep}. Therefore, an in-depth analysis of this method will be helpful to bridge the gap between theory and practice in deep learning with DP.

\subsection{Mean Estimation}\label{sec:mean}
Suppose there are $n$ i.i.d samples $\mathbf{X}_1,\ldots,\mathbf{X}_n$ following a common distribution with mean $\mu$. Here we assume that for any unit vector $\mathbf{u}$ with $\norm{\mathbf{u}}=1$, $\mathbb{E}[|\langle \mathbf{u}, \mathbf{X}\rangle|^p]\leq M^p$, which matches Assumption \ref{ass}(c).

Since samples follow a heavy-tailed distribution, some of them may be far away from $\mu$. A simple averaging of these samples has infinite sensitivity. To ensure that the overall sensitivity is bounded, we clip them with a radius $R$. To be more precise, for each $i=1,\ldots, n$, let
\begin{eqnarray}
	\mathbf{Y}_i=\Clip(\mathbf{X}_i, R),
	\label{eq:yi}
\end{eqnarray}	
in which
$\Clip(\mathbf{x}, R)=\min\left\{1, R/\norm{\mathbf{x}} \right\}\mathbf{x}$.
Then the final estimate is
\begin{eqnarray}
	\hat{\mu}=\frac{1}{n}\sum_{i=1}^n \mathbf{Y}_i+\mathbf{W}
	\label{eq:muhat},
\end{eqnarray}
in which $\mathbf{W}\sim \mathcal{N}(0, \sigma^2 \mathbf{I})$ is the noise added to meet the privacy requirement. We then show the following theorem, which determines the strength of $\mathbf{W}$, 
\begin{lem}\label{lem:var}
	Let
	$\sigma^2=2R^2/(\rho n^2)$,
	then $\hat{\mu}$ is $\rho$-CDP.
\end{lem}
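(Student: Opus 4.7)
The plan is to invoke the Gaussian additive noise mechanism (Lemma \ref{lem:gaussian}) applied to the non-private estimator $\mathcal{A}_0(D) = \frac{1}{n}\sum_{i=1}^n \mathbf{Y}_i$ where $\mathbf{Y}_i = \Clip(\mathbf{X}_i, R)$. The entire task therefore reduces to computing (an upper bound on) the $\ell_2$ sensitivity $\Delta_2(\mathcal{A}_0)$ and then verifying that the prescribed $\sigma^2$ matches the calibration $\sigma^2 = \Delta_2^2(\mathcal{A}_0)/(2\rho)$ required by Lemma \ref{lem:gaussian}.

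First I would observe that the clipping operator satisfies $\norm{\Clip(\mathbf{x},R)} \leq R$ for every $\mathbf{x} \in \mathbb{R}^d$, since $\Clip(\mathbf{x},R) = \min\{1, R/\norm{\mathbf{x}}\}\mathbf{x}$ has norm $\min\{\norm{\mathbf{x}}, R\} \leq R$. Next, consider two neighboring datasets $D$ and $D'$ that differ in a single entry, say at index $i$, so $\mathbf{X}_j = \mathbf{X}_j'$ for $j \neq i$ and the clipped vectors $\mathbf{Y}_j$, $\mathbf{Y}_j'$ agree for $j \neq i$. Then
\begin{equation*}
\mathcal{A}_0(D) - \mathcal{A}_0(D') = \frac{1}{n}(\mathbf{Y}_i - \mathbf{Y}_i'),
\end{equation*}
and the triangle inequality together with the clipping bound yields $\norm{\mathcal{A}_0(D) - \mathcal{A}_0(D')} \leq (\norm{\mathbf{Y}_i} + \norm{\mathbf{Y}_i'})/n \leq 2R/n$. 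Taking the supremum over neighboring datasets gives $\Delta_2(\mathcal{A}_0) \leq 2R/n$.

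Finally, plugging this sensitivity into Lemma \ref{lem:gaussian}, the noise variance needed for $\rho$-CDP is
\begin{equation*}
\frac{\Delta_2^2(\mathcal{A}_0)}{2\rho} \leq \frac{(2R/n)^2}{2\rho} = \frac{2R^2}{\rho n^2},
\end{equation*}
which is exactly the chosen $\sigma^2$. Hence $\hat{\mu} = \mathcal{A}_0(D) + \mathbf{W}$ is $\rho$-CDP. There is no substantive obstacle here; the only thing to be a bit careful about is that clipping itself is a deterministic, data-pointwise operation, so the sensitivity argument applies directly to the post-clipping vectors and does not require any assumption on the tails of the $\mathbf{X}_i$'s.
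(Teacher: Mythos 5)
Your proposal is correct and follows essentially the same route as the paper: bound the $\ell_2$ sensitivity of the clipped average by $2R/n$ and apply the Gaussian mechanism (Lemma \ref{lem:gaussian}) to get $\rho$-CDP with $\sigma^2 = 2R^2/(\rho n^2)$. You merely spell out the neighboring-dataset computation in more detail than the paper does.
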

\begin{proof}
	Since $\norm{\mathbf{Y}_i}\leq R$, the sensitivity of $(1/n)\sum_{i=1}^n \mathbf{Y}_i$ is $\Delta_2(\hat{\mu})=2R/n$. According to Lemma \ref{lem:gaussian}, the estimator \eqref{eq:muhat} is $\rho$-CDP.
\end{proof}

The procedure is summarized in Algorithm \ref{alg:est}.
\begin{algorithm}
	\caption{Simple clipping method for mean estimation}\label{alg:est}
	\textbf{Input:} dataset $\{\mathbf{X}_1,\ldots, \mathbf{X}_n\}$ and privacy requirement $\rho$ (under CDP)\\
	\textbf{Output:} Estimate $\hat{\mu}$\\
	\textbf{Parameter:} $R$
	\begin{algorithmic}[1]
		\FOR{$i=1,\ldots, n$}
		\STATE $\mathbf{Y}_i=\Clip(\mathbf{X}_i, R)$;
		\ENDFOR
		\STATE $\hat{\mu}=(1/n)\sum_{i=1}^n \mathbf{Y}_i + \mathbf{W}$, in which $\mathbf{W}\sim \mathcal{N}(0,2R^2/(\rho n^2))$;
		\RETURN $\hat{\mu}$
	\end{algorithmic}
\end{algorithm}

The following theorem provides a high probability bound of the estimation error.

\begin{thm}\label{thm:meanest}
	Under the condition $\mathbb{E}[|\langle \mathbf{u}, \mathbf{X}\rangle|^p]\leq M^p$ for some $p\geq 2$, under $\rho$-CDP, with probability $1-\beta$, the simple clipping method achieves
	\begin{eqnarray}
		\norm{\hat{\mu}-\mu}&\leq& \max\left\{\sqrt{\frac{12M^2}{n}\ln \frac{2\times 6^d}{\beta}}, \frac{8R}{n} \ln \frac{2\times 6^d}{\beta} \right\}+\frac{d^\frac{p}{2}M^p}{p-1}R^{1-p}+\frac{4R}{n\sqrt{\rho}}\sqrt{\ln \frac{2\times 6^d}{\beta}}.
		\label{eq:errsimple}
	\end{eqnarray}
\end{thm}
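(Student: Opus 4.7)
The plan is to decompose $\hat\mu-\mu$ into three independent pieces and bound each in $\ell_2$ norm via an $\varepsilon$-net argument. Writing
\begin{eqnarray*}
\hat\mu-\mu = \Bigl(\tfrac{1}{n}\sum_{i=1}^n \mathbf{Y}_i - \mathbb{E}[\mathbf{Y}_1]\Bigr) + \bigl(\mathbb{E}[\mathbf{Y}_1]-\mu\bigr) + \mathbf{W},
\end{eqnarray*}
the three summands correspond exactly to the three terms in \eqref{eq:errsimple}. To control the $\ell_2$ norm of each summand I would fix a $1/2$-net $\mathcal{N}$ of the unit sphere; a standard volume argument gives $|\mathcal{N}|\leq 6^d$, and for any $\mathbf{v}\in\mathbb{R}^d$, $\norm{\mathbf{v}}\leq 2\sup_{\mathbf{u}\in\mathcal{N}}\langle\mathbf{u},\mathbf{v}\rangle$. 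So each bound reduces to a scalar tail bound at a fixed $\mathbf{u}$ plus a union bound over $\mathcal{N}$, which is the origin of the $6^d$ factor inside the logarithms.

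For the random error, I would apply Bernstein's inequality to the i.i.d.\ summands $\langle\mathbf{u},\mathbf{Y}_i\rangle$. These are bounded in absolute value by $R$ because $\norm{\mathbf{Y}_i}\leq R$, and their variance satisfies $\mathbb{E}[\langle\mathbf{u},\mathbf{Y}_i\rangle^2]\leq \mathbb{E}[\langle\mathbf{u},\mathbf{X}\rangle^2]\leq M^2$ (clipping is non-expansive, and the $p=2$ case of the moment assumption applies since $p\geq 2$). Bernstein then yields a mixed subgaussian/subexponential tail which, after a union bound over $\mathcal{N}$ and the factor of $2$ from the net, produces the first $\max\{\cdot,\cdot\}$ term of \eqref{eq:errsimple}.

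For the clipping bias, the key observation is that $\mathbf{Y}=\mathbf{X}$ on $\{\norm{\mathbf{X}}\leq R\}$ and $\norm{\mathbf{X}-\mathbf{Y}}=\norm{\mathbf{X}}-R$ otherwise, so by Jensen $\norm{\mathbb{E}[\mathbf{Y}]-\mu}\leq \mathbb{E}[(\norm{\mathbf{X}}-R)_+]=\int_R^\infty \text{P}(\norm{\mathbf{X}}>t)\,dt$. To turn the directional moment assumption into a bound on $\mathbb{E}[\norm{\mathbf{X}}^p]$, I would use the power-mean inequality $\norm{\mathbf{X}}^p=(\sum_j X_j^2)^{p/2}\leq d^{p/2-1}\sum_j |X_j|^p$ together with the assumption applied to each standard basis vector, giving $\mathbb{E}[\norm{\mathbf{X}}^p]\leq d^{p/2}M^p$. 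Markov's inequality then yields $\text{P}(\norm{\mathbf{X}}>t)\leq d^{p/2}M^p/t^p$, and integrating gives the bias bound $d^{p/2}M^p R^{1-p}/(p-1)$. I expect this bias step to be the main obstacle, because it is the one place where we must pay the $d^{p/2}$ factor, and it is also the step that requires recognizing that the moment bound on every unit direction (not merely on basis vectors) is essentially needed only for basis vectors here via the power-mean trick.

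For the Gaussian noise, $\langle\mathbf{u},\mathbf{W}\rangle\sim\mathcal{N}(0,\sigma^2)$ at every fixed $\mathbf{u}$, so the standard Gaussian tail gives $\text{P}(|\langle\mathbf{u},\mathbf{W}\rangle|>t)\leq 2\exp(-t^2/(2\sigma^2))$. Setting the union bound over $\mathcal{N}$ equal to $\beta/3$, using $\sigma=R\sqrt{2/\rho}/n$ from Lemma~\ref{lem:var}, and multiplying by the factor $2$ from the net yields the $4R\sqrt{\ln(2\cdot 6^d/\beta)}/(n\sqrt{\rho})$ term. Splitting the total failure probability evenly (say $\beta/3$ to each piece, or absorbing constants into the $2$ already present) and summing the three bounds by the triangle inequality gives exactly \eqref{eq:errsimple}.
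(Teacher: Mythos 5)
Your proposal is correct and follows essentially the same route as the paper: the same three-term decomposition $\bar{\mathbf{Y}}-\mu_Y$, $\mu_Y-\mu$, $\mathbf{W}$, the same $1/2$-net of size at most $6^d$ with the factor-$2$ comparison, a Bernstein-type mixed subgaussian/subexponential tail for the clipped average, the power-mean/Markov/integration argument for the clipping bias, and a Gaussian tail plus net for the noise. The only discrepancies are constant-level bookkeeping: the centered variables $\langle\mathbf{u},\mathbf{Y}_i-\mu_Y\rangle$ are bounded by $2R$ rather than $R$, the paper's exact constants $12$ and $8$ come from an explicit MGF bound (its Lemma on moment generating functions) rather than textbook Bernstein, and the failure probability is split as $\beta/2$ between the two random terms since the bias term is deterministic.
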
 

Now we provide an intuitive interpretation of the result. The second term $d^{p/2}M^pR^{1-p}/(p-1)$ in \eqref{eq:errsimple} is the clipping bias $\norm{\mu_Y-\mu}$, in which $\mu_Y=\mathbb{E}[\mathbf{Y}_i]$ is the expectation after clipping. The third term in \eqref{eq:errsimple} is caused by the noise $\mathbf{W}$. The first term is the bound of $\norm{\bar{\mathbf{Y}}-\mu_Y}$, which is caused by the randomness of samples. Here $\bar{\mathbf{Y}}=(1/n)\sum_{i=1}^n \mathbf{Y}_i$ is the sample average of $\mathbf{Y}_i$. It can be written as $O\left(\sqrt{\frac{d+\ln(1/\beta)}{n}}+\frac{R}{n}(d+\ln\frac{1}{\beta})\right)$, indicating that $\bar{\mathbf{Y}}$ is subgaussian around its mean $\mu_Y$, followed by a subexponential tail.

In \eqref{eq:errsimple}, the factor $\ln(2\times 6^d/\beta)$ is an important improvement over \cite{kamath2022improved}. The corresponding factor in \cite{kamath2022improved} is $O(d\ln (1/\beta))$, while we achieve $O(d+\ln (1/\beta))$ here. Such difference does not lead to improvement in the bias and variance of mean estimation. However, the high probability bound is improved significantly. In optimization problems, we need to take union bounds over all possible model weights $\mathbf{w}$, which requires $\beta$ to be very small. In this case, $d+\ln(1/\beta)\ll d\ln(1/\beta)$. As a result, our method improves over \cite{kamath2022improved} in the dependence of $d$. Despite such improvement, \eqref{eq:errsimple} has a drawback of exponential tail. As will be shown later, due to the subexponential tail $\frac{8R}{n}\ln \frac{2\times 6^d}{\beta}$, the optimization risk is not completely optimal.

\subsection{Optimization}\label{sec:optim}
Based on the simple clipping approach, we then analyze the performance of stochastic optimization. We first discuss the DP property of the optimization problem.

\begin{thm}\label{thm:rho}
	If $\epsilon\leq 1$, let the gradient estimator be the simple clipping method under $\rho/T$-CDP, in which 
	\begin{eqnarray}
		\rho = \frac{\epsilon^2}{\left(1+2\sqrt{\ln \frac{1}{\delta}}\right)^2},
		\label{eq:rhosc}
	\end{eqnarray}
	then the whole optimization process is $(\epsilon, \delta)$-DP.
\end{thm}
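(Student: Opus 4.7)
The plan is to unwind the privacy budget backwards: decide what overall CDP level the iterative algorithm satisfies, then convert CDP to DP and check that the resulting $\epsilon$-slack does not exceed the target.

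First I would observe that by Lemma \ref{lem:var}, one call to Algorithm \ref{alg:est} with variance scaled to $\rho/T$ (instead of $\rho$) satisfies $(\rho/T)$-CDP on its input dataset. Since the update \eqref{eq:update} and averaging \eqref{eq:output} are deterministic post-processing of the noisy gradient stream $g(\mathbf{w}_1),\ldots, g(\mathbf{w}_T)$, the privacy of $\hat{\mathbf{w}}$ is inherited from the privacy of this stream. Each $g(\mathbf{w}_t)$ touches the same dataset $\{\mathbf{Z}_1,\ldots,\mathbf{Z}_n\}$, so I combine the $T$ steps by the CDP composition rule in Lemma \ref{lem:basics}(2), yielding that the map $(\mathbf{Z}_1,\ldots,\mathbf{Z}_n)\mapsto (g(\mathbf{w}_1),\ldots,g(\mathbf{w}_T))$, and hence $\hat{\mathbf{w}}$, is $\rho$-CDP.

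Next I convert to $(\epsilon,\delta)$-DP via Lemma \ref{lem:basics}(4). Writing $\rho=\epsilon_0^2/2$ with $\epsilon_0=\sqrt{2\rho}$, the lemma yields $(\epsilon',\delta)$-DP with
\begin{equation*}
    \epsilon' \;=\; \rho + \epsilon_0\sqrt{2\ln(1/\delta)} \;=\; \rho + 2\sqrt{\rho\,\ln(1/\delta)}.
\end{equation*}
It remains to verify $\epsilon'\leq \epsilon$ for the choice \eqref{eq:rhosc}. Setting $a=2\sqrt{\ln(1/\delta)}$ so that $\sqrt{\rho}=\epsilon/(1+a)$, a direct computation gives
\begin{equation*}
    \epsilon' \;=\; \frac{\epsilon^2}{(1+a)^2} + \frac{a\epsilon}{1+a} \;=\; \frac{\epsilon\,(\epsilon+a+a^2)}{(1+a)^2}.
\end{equation*}
Using $\epsilon\leq 1$, the numerator is bounded by $1+a+a^2\leq 1+2a+a^2=(1+a)^2$, hence $\epsilon'\leq \epsilon$, which completes the argument.

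I do not expect any real obstacle here: the proof is essentially bookkeeping. The only subtle point worth stating carefully is that the composition rule of Lemma \ref{lem:basics}(2) applies to adaptive compositions as well (each $g(\mathbf{w}_t)$ depends on previous noisy gradients through $\mathbf{w}_t$), which is exactly what CDP composition was designed to handle; the post-processing invariance of CDP then transfers the bound from the gradient stream to $\hat{\mathbf{w}}$ itself.
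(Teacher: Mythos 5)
Your proposal is correct and follows essentially the same route as the paper: CDP composition across the $T$ steps (Lemma \ref{lem:basics}(2)), conversion to $(\epsilon,\delta)$-DP via Lemma \ref{lem:basics}(4), and then algebra showing $\rho+2\sqrt{\rho\ln(1/\delta)}\leq\epsilon$ for the choice \eqref{eq:rhosc} using $\epsilon\leq 1$ (the paper bounds $\rho\leq\sqrt{\rho}$ whereas you bound the numerator $\epsilon+a+a^2\leq(1+a)^2$, which is equivalent). Your explicit remarks on adaptive composition and post-processing are accurate and only make the bookkeeping more careful than the paper's terse version.
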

\begin{proof}
	By Lemma \ref{lem:basics}(2), since each step is $\rho/T$-CDP, the whole process is $\rho$-CDP. By Lemma \ref{lem:basics}(4), $\rho$-CDP implies $(\rho+2\sqrt{\ln(1/\delta)}, \delta)$-DP. Since $\epsilon\leq 1$, from \eqref{eq:rhosc}, $\rho\leq 1$. Therefore
	\begin{eqnarray}
		\rho + 2\sqrt{\rho \ln \frac{1}{\delta}}\leq \sqrt{\rho}\left(1+2\sqrt{\ln \frac{1}{\delta}}\right) \leq \epsilon.
	\end{eqnarray}
	Therefore the optimization process is $(\epsilon, \delta)$-DP.
\end{proof}

From Theorem \ref{thm:rho}, we let each step satisfy $\rho/T$-CDP, in which $\rho$ takes value according to \eqref{eq:rhosc}. By Lemma \ref{lem:var}, this requires the noise variance be $\sigma^2=2R^2T/(\rho n^2)$. As discussed earlier, $\mathbf{w}_t$ depends on previous steps, which depend on the data. Therefore, we need to get union bounds of estimation error to calculate $B$ and $G$ defined in \eqref{eq:bdef} and \eqref{eq:gdef}. The results are shown in the following lemma.

\begin{lem}\label{lem:bg}
	$B$ and $G^2$ defined in \eqref{eq:bdef} and \eqref{eq:gdef} are bounded by
	\begin{eqnarray}
		B\lesssim \sqrt{\frac{d\ln n}{n}}+\frac{Rd\ln n}{n}\ln n+d^\frac{p}{2}R^{1-p},
		\label{eq:bbound}
	\end{eqnarray}
	\begin{eqnarray}
		G^2\lesssim \frac{d\ln n}{n} +\frac{R^2d^2}{n^2}\ln^2 n + d^p R^{2(1-p)}+\frac{R^2Td}{\rho n^2}.
		\label{eq:gbound}
	\end{eqnarray}
\end{lem}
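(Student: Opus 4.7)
The plan is to split the gradient estimation error at step $t$ into three pieces,
\begin{eqnarray*}
g(\mathbf{w}_t)-\nabla F(\mathbf{w}_t) \;=\; \underbrace{\bar{\mathbf{Y}}(\mathbf{w}_t)-\mu_Y(\mathbf{w}_t)}_{A_t}+\underbrace{\mu_Y(\mathbf{w}_t)-\nabla F(\mathbf{w}_t)}_{b_t}+\mathbf{W}_t,
\end{eqnarray*}
where $\bar{\mathbf{Y}}(\mathbf{w})=\frac{1}{n}\sum_i \Clip(\nabla l(\mathbf{w},\mathbf{Z}_i),R)$, $\mu_Y(\mathbf{w})=\mathbb{E}[\Clip(\nabla l(\mathbf{w},\mathbf{Z}),R)]$, and $\mathbf{W}_t\sim\mathcal{N}(0,\sigma^2\mathbf{I})$ with $\sigma^2=2R^2T/(\rho n^2)$. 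The clipping bias $b_t$ is deterministic in $\mathbf{w}_t$, and the same $p$-th moment tail computation underlying Theorem \ref{thm:meanest} gives $\|b_t\|\lesssim d^{p/2}R^{1-p}$ for every $\mathbf{w}\in\mathcal{W}$, yielding the bias term in \eqref{eq:bbound} and (squared) the third term in \eqref{eq:gbound}. The noise $\mathbf{W}_t$ is drawn fresh at step $t$; since $\mathbf{w}_t$ is a function only of the data and $\mathbf{W}_1,\ldots,\mathbf{W}_{t-1}$, $\mathbf{W}_t$ is independent of $(\mathbf{w}_t,A_t,b_t)$. Consequently $\mathbb{E}[\mathbf{W}_t]=\mathbf{0}$, so the noise contributes nothing to $B$; the cross terms $\mathbb{E}\langle \mathbf{W}_t,A_t+b_t\rangle$ vanish; and $\mathbb{E}\|\mathbf{W}_t\|^2=d\sigma^2=2R^2Td/(\rho n^2)$, giving the last term of \eqref{eq:gbound}.

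The heart of the proof is uniform control of $A_t(\mathbf{w}):=\bar{\mathbf{Y}}(\mathbf{w})-\mu_Y(\mathbf{w})$ over all $\mathbf{w}\in\mathcal{W}$ and all steps, which is delicate because $\mathbf{w}_t$ depends on the very samples defining $\bar{\mathbf{Y}}$. I would take a $\xi$-net $\mathcal{N}$ of $\mathcal{W}$ with $\xi\asymp 1/n$ so that $|\mathcal{N}|\lesssim(nL)^d$, apply the high-probability sample-deviation bound from the proof of Theorem \ref{thm:meanest} at each $\mathbf{w}\in\mathcal{N}$ with failure probability $\beta/(T|\mathcal{N}|)$, and union-bound over $\mathcal{N}$ and $t\in\{1,\ldots,T\}$. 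Using $\ln(T|\mathcal{N}|/\beta)\asymp d\ln n$ (for polynomial $T$ and $1/\beta$), this yields with probability at least $1-\beta$,
\begin{eqnarray*}
\sup_{t,\,\mathbf{w}\in\mathcal{N}} \|A_t(\mathbf{w})\| \;\lesssim\; \sqrt{\frac{d\ln n}{n}}+\frac{Rd\ln n}{n}.
\end{eqnarray*}
Extending from $\mathcal{N}$ to arbitrary $\mathbf{w}\in\mathcal{W}$ relies on the $1$-Lipschitzness of $\Clip(\cdot,R)$ together with continuity of $\nabla l(\mathbf{w},\mathbf{Z})$ in $\mathbf{w}$; the resolution $\xi=1/n$ makes the discretization error lower order.

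To assemble $B$, Jensen gives $B\le\max_t\mathbb{E}\|g(\mathbf{w}_t)-\nabla F(\mathbf{w}_t)\|\le \mathbb{E}\sup_{t,\mathbf{w}}\|A_t(\mathbf{w})\|+\sup_\mathbf{w}\|b_t\|$ (the noise mean cancels). The first summand is obtained by integrating the subgaussian-plus-subexponential tail of $\sup_{t,\mathbf{w}}\|A_t(\mathbf{w})\|$ established above; the low-probability event is absorbed via the crude worst-case bound $\|g-\nabla F\|\le 2R+\|\mathbf{W}_t\|+\sqrt{d}M$ with $\beta$ taken polynomially small in $n$, which reproduces \eqref{eq:bbound}. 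For $G^2$, expand $\|g-\nabla F\|^2\le 3(\|A_t\|^2+\|b_t\|^2+\|\mathbf{W}_t\|^2)$ (with the cross term involving $\mathbf{W}_t$ vanishing by independence); integrating the same two-regime tail for $\mathbb{E}\sup_{t,\mathbf{w}}\|A_t(\mathbf{w})\|^2$ yields the Gaussian piece $d\ln n/n$ and the subexponential piece $R^2d^2\ln^2 n/n^2$, which combined with $\|b_t\|^2\lesssim d^pR^{2(1-p)}$ and $\mathbb{E}\|\mathbf{W}_t\|^2=2R^2Td/(\rho n^2)$ produces \eqref{eq:gbound}.

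The main obstacle is the uniform-over-$\mathcal{W}$ control of $A_t$: this is precisely where the sharpened two-regime tail of Theorem \ref{thm:meanest}, whose dependence on the failure probability is $d+\ln(1/\beta)$ rather than $d\ln(1/\beta)$, is essential, because a coarser tail would let $\ln|\mathcal{N}|\asymp d\ln n$ multiply the entire bound and spoil the rate. The independence of $\mathbf{W}_t$ from $\mathbf{w}_t$, the deterministic nature of the clipping bias, and the bookkeeping on the low-probability event are comparatively routine once the uniform tail is in hand.
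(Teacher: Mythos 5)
Your proposal is correct and follows essentially the same route as the paper's proof: separate the Gaussian noise (independent of $\mathbf{w}_t$, so it contributes nothing to $B$ and only $d\sigma^2=2R^2Td/(\rho n^2)$ to $G^2$), bound the clipping bias uniformly by $d^{p/2}R^{1-p}$, and control the sample deviation of the clipped average uniformly over $\mathcal{W}$ via a $1/n$-covering of $\mathcal{W}$, a union bound using the $d+\ln(1/\beta)$-type two-regime tail from Theorem \ref{thm:meanest}, and a Lipschitz extension from the net, absorbing the failure event with the crude bound $\lesssim R$. The only cosmetic deviations are your (harmless but unnecessary) union over $t$ — the clipped sample average depends on $\mathbf{w}$ and the data only, not on $t$ — and that the net-extension step needs the quantitative $\lambda$-Lipschitzness of $\nabla l(\cdot,\mathbf{Z})$ (smoothness of the per-sample loss, as the paper invokes), not mere continuity.
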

With Lemma \ref{lem:opt} and \ref{lem:bg}, we then show the following theorem, which bounds the overall excess risk.
\begin{thm}\label{thm:risk}
	Let $T=\rho n^2/(dR^2)$, $\eta=1/\sqrt{2T\lambda^2}$, and
	\begin{eqnarray}
		R=\sqrt{d}\left(\frac{n\sqrt{\rho}}{\sqrt{d}}\right)^\frac{1}{p}\wedge \sqrt{d}\left(\frac{n}{d}\right)^\frac{1}{p},
	\end{eqnarray}
	in which $\rho$ is determined with \eqref{eq:rhosc}. Then under Assumption \ref{ass}, the excess risk of Algorithm \ref{alg:so} is bounded by
	\begin{eqnarray}
		\mathbb{E}[F(\hat{\mathbf{w}})] - F(\mathbf{w}^*)\lesssim \sqrt{\frac{d\ln n}{n}}+\sqrt{d}\left(\frac{\sqrt{d}}{n\epsilon}\sqrt{\ln \frac{1}{\delta}}\right)^{1-\frac{1}{p}}+\frac{d^{\frac{3}{2}-\frac{1}{p}}}{n^{1-\frac{1}{p}}}\ln n.
		\label{eq:risk}
	\end{eqnarray}
\end{thm}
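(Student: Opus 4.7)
The plan is to combine Lemma \ref{lem:opt} with the bounds on $B$ and $G^2$ from Lemma \ref{lem:bg}, then substitute the prescribed choices of $T$, $\eta$, and $R$ and simplify term by term. Since Lemma \ref{lem:opt} reduces the excess risk to a function of $B$, $G^2$, and $T$, the proof is essentially a bookkeeping exercise once the three pieces are in place.

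First, I would plug $\eta = 1/\sqrt{2T\lambda^2}$ into Lemma \ref{lem:opt}, which reduces the right-hand side to a bound of the form
\[
\mathbb{E}[F(\hat{\mathbf{w}})]-F(\mathbf{w}^*) \lesssim \frac{\lambda L^2}{\sqrt{T}} + LB + \frac{G^2}{\lambda \sqrt{T}}.
\]
Substituting $T = \rho n^2/(d R^2)$ gives the key scaling $1/\sqrt{T} = \sqrt{d}\,R/(\sqrt{\rho}\,n)$, which will carry the privacy dependence. A pleasant consequence of this choice is that the DP-noise term $R^2 T d/(\rho n^2)$ inside $G^2$ collapses to a constant, while the remaining pieces of $G^2$ (namely $d\ln n/n$, $R^2 d^2\ln^2 n/n^2$, and $d^p R^{2(1-p)}$) contribute terms that, after multiplication by $1/(\lambda\sqrt{T})$, are of the same order as or smaller than the three terms produced by $LB$.

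Next, I would evaluate each term using the prescribed $R = \sqrt{d}(n\sqrt{\rho}/\sqrt{d})^{1/p} \wedge \sqrt{d}(n/d)^{1/p}$, treating the two branches of the minimum separately. In the first branch (which is active when $\rho \lesssim 1/d$), a direct computation shows that both $\sqrt{d}R/(\sqrt{\rho}n)$ and the clipping bias $d^{p/2} R^{1-p}$ simplify to $\sqrt{d}(\sqrt{d}/(n\sqrt{\rho}))^{1-1/p}$, producing the privacy-dependent term of the target bound. In the second branch, the sampling-type term $Rd/n$ from \eqref{eq:bbound} evaluates to $d^{3/2-1/p}/n^{1-1/p}$, yielding the third term of the target bound. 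The first target term $\sqrt{d\ln n/n}$ comes directly from the $\sqrt{d\ln n/n}$ piece of $B$ in \eqref{eq:bbound}. Finally, the relation $\sqrt{\rho} \gtrsim \epsilon/\sqrt{\ln(1/\delta)}$ from \eqref{eq:rhosc} converts the CDP parameter into the $(\epsilon,\delta)$-DP parameters stated in the theorem.

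The main obstacle is the bookkeeping: I need to verify, in each branch of the minimum defining $R$, that every one of the subleading pieces in \eqref{eq:bbound} and \eqref{eq:gbound} (e.g.\ $Rd\ln^2 n/n$, $R^2 d^2\ln^2 n/n^2$, and $d^p R^{2(1-p)}$) is absorbed by one of the three terms in \eqref{eq:risk}. This amounts to comparing polynomial powers of $d$, $n$, and $\rho$ under the implicit regimes $\rho \leq 1/d$ and $\rho > 1/d$ that determine which branch of the minimum is active, together with the observation that the clipping bias and the inverse-$\sqrt{T}$ scaling balance exactly at the prescribed value of $R$.
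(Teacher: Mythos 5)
Your proposal is correct and follows essentially the same route as the paper's proof: apply Lemma \ref{lem:opt} with $\eta=1/\sqrt{2T\lambda^2}$, bound $B$ and $G^2$ via Lemma \ref{lem:bg}, use $1/\sqrt{T}=\sqrt{d}R/(n\sqrt{\rho})$ to reduce everything to $\frac{R\sqrt{d}}{n\sqrt{\rho}}+\sqrt{\frac{d\ln n}{n}}+\frac{Rd}{n}\ln n+d^{p/2}R^{1-p}$, and then split into the regimes $\rho\leq 1/d$ and $\rho>1/d$ corresponding to the two branches of $R$ before converting $\sqrt{\rho}$ to $\epsilon,\delta$ via \eqref{eq:rhosc}. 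The bookkeeping you flag (absorbing the subleading pieces of $G^2$) is handled in the paper by exactly the same term-by-term comparison.
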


Compared with the lower bound in Theorem \ref{thm:lower}, the first two terms in \eqref{eq:risk} match \eqref{eq:mmx} up to logarithmic factors. However, there is an additional term $d^{3/2-1/p}\ln n/n^{1-1/p}$. If $\epsilon\leq 1/\sqrt{d}$, then this term does not dominate. Therefore, the simple clipping method is minimax optimal (up to logarithmic factors) for $\epsilon\leq 1/\sqrt{d}$.

\section{Iterative Updating Method}
The previous section shows that the simple clipping method is not always optimal due to an additional term $d^{3/2-1/p}\ln n/n^{1-1/p}$. In this section, we show an improved method to avoid this term, which is inspired by some existing methods for non-private mean estimation \cite{cherapanamjeri2019fast,lugosi2019sub,lei2020fast}. To begin with, to illustrate the idea of design, we provide some basic intuition. The mean estimation algorithm is then described in detail. Finally, we analyze the risk of optimization with the new mean estimator.
\subsection{Intuition}\label{sec:intuition}

The suboptimality of the simple clipping approach comes from the subexponential tails. Ideally, under $\rho$-CDP, we would like a $\norm{\hat{\mu}-\mu}\lesssim \sqrt{(d+\ln(1/\beta))/n} + R\sqrt{d+\ln(1/\beta)}/(n\sqrt{\rho})+d^{p/2}R^{1-p}$ error bound that holds with probability $1-\beta$. However, from \eqref{eq:errsimple}, the simple clipping method has an additional term $O(R(d+\ln(1/\beta))/n)$, which indicates a subexponential tail behavior. To remove the subexponential tail, a classical approach is median-of-means, which divides data into multiple groups, calculates the mean of each group, and then finds the median of all group-wise means. However, \cite{minsker2015geometric} shows that even for non-private estimation, the geometric median-of-mean method achieves a suboptimal bound of $O(\sqrt{d\ln(1/\beta)/n})$ with probability $1-\beta$. While this bound has optimal dependence on $d$ and $n$, the dependence on $\beta$ is not optimal. The calculation of the union bound of estimation error usually encounters very small $\beta$. As a result, the suboptimal dependence of the error bound on $\beta$ leads to a larger union bound.

To handle this issue, we design a new estimator, which is inspired by several later works \cite{lugosi2019sub,cherapanamjeri2019fast,lei2020fast} that improve the non-private bound to $O(\sqrt{(d+\ln(1/\beta))/n})$. The basic idea of the mean estimator is to iteratively update the current estimate $\mathbf{c}_t$ based on the estimation of distance and direction to the truth. To make the estimator satisfy the DP requirement, we add appropriate noise. The estimator is permutation invariant with respect to the group-wise means, thus equivalently, we can view these group-wise means as being shuffled. The shuffling operation makes an amplification to DP \cite{erlingsson2019amplification,feldman2022hiding}. Therefore, each group only needs to satisfy a weaker privacy requirement than $(\epsilon, \delta)$-DP. 

\subsection{The Mean Estimation Algorithm}\label{sec:alg}

Here we state the result first and then show the construction of the mean estimator.
\begin{thm}\label{thm:mean}
	There exists a constant $c$, if $\epsilon\leq c\sqrt{(1/k)\ln(1/\delta)}$ and $\delta\in (0,1)$, there exists an estimator satisfying $(\epsilon, \delta)$-DP, such that with probability $1-\beta$,
	\begin{eqnarray}
		&&\norm{\hat{\mu}-\mu}\lesssim \sqrt{\frac{d+\ln\frac{1}{\beta}}{n}}+ d^\frac{p}{2}R^{1-p}+\frac{R}{n\epsilon} \sqrt{d+\ln\frac{1}{\beta}}\left(\ln \frac{1}{\delta}+\sqrt{\ln \frac{1}{\delta}\ln\ln \frac{1}{\beta}}\right) .
		\label{eq:mean}
	\end{eqnarray}
\end{thm}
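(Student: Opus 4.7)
The plan is to construct the estimator in three layers: a partitioning step that produces $k$ group means, an iterative refinement procedure that is invariant under permutations of these group means, and a privacy analysis that exploits this invariance via shuffling amplification.

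First, I would split the $n$ samples into $k$ groups of size $n/k$, clip each sample to radius $R$ as in \eqref{eq:yi}, and form group-wise means $\bar{\mathbf{X}}_1,\ldots,\bar{\mathbf{X}}_k$. Each $\bar{\mathbf{X}}_i$ has mean within $O(d^{p/2}R^{1-p})$ of $\mu$ (the clipping bias, analogous to the corresponding term in \eqref{eq:errsimple}), and, by Assumption \ref{ass}(c) combined with the bounded support after clipping, is subgaussian along every fixed direction with scale $O(M\sqrt{k/n})$. A Lugosi--Mendelson style lemma in the spirit of \cite{lugosi2019sub} then shows that with probability $1-\beta$, for every unit vector $\mathbf{u}$ simultaneously, all but a small constant fraction of the group means satisfy $|\langle \mathbf{u},\bar{\mathbf{X}}_i-\mu\rangle| \lesssim \sqrt{(d+\ln(1/\beta))/n}$, provided $k\asymp \ln(1/\beta)$.

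Second, I would implement an iterative update $\mathbf{c}_{t+1}=\mathbf{c}_t+\hat{\alpha}_t\hat{\mathbf{u}}_t$ in the spirit of \cite{cherapanamjeri2019fast,lei2020fast}: at iterate $\mathbf{c}_t$, use the group means to privately estimate a direction $\hat{\mathbf{u}}_t$ approximating $(\mu-\mathbf{c}_t)/\norm{\mu-\mathbf{c}_t}$ and a scalar magnitude $\hat{\alpha}_t$ approximating $\norm{\mu-\mathbf{c}_t}$. For the direction, I would score each candidate unit vector by a smoothed count of group means whose projection onto it lies in a chosen window and release a noisy argmax; for the magnitude, a trimmed or median-type estimator along $\hat{\mathbf{u}}_t$ with additive Gaussian noise suffices. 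The non-private analogue contracts geometrically (the Lugosi--Mendelson guarantee from step one forces each update to reduce the remaining error by a constant factor), so after $T=O(\log\log(1/\beta))$ iterations one obtains $\norm{\mathbf{c}_T-\mu}\lesssim \sqrt{(d+\ln(1/\beta))/n}+d^{p/2}R^{1-p}$ plus accumulated noise contributions.

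Third, the privacy analysis is the crux. Each iteration is calibrated to $\rho_0$-CDP by Gaussian noise scaled to the per-group $\ell_2$ sensitivity $O(Rk/n)$, and composition over $T$ iterations yields $T\rho_0$-CDP for each group's contribution. Because the outer procedure is invariant under permutations of the $k$ group means (each group is only ever fed into symmetric scoring functions), an adversary effectively sees a shuffled output, and the amplification theorems of \cite{erlingsson2019amplification,feldman2022hiding} convert per-group $\epsilon_0$-DP into overall DP with parameter $\tilde{\epsilon}\asymp \epsilon_0\sqrt{\ln(1/\delta)/k}$. Inverting this relation to match the target $(\epsilon,\delta)$-DP fixes both the condition $\epsilon\leq c\sqrt{(1/k)\ln(1/\delta)}$ and the per-iteration noise scale; substituting back into the utility bound produces the third term of \eqref{eq:mean}, where the extra $\sqrt{\ln(1/\delta)\ln\ln(1/\beta)}$ factor arises from a Gaussian union bound over the $T=O(\log\log(1/\beta))$ iterations. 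The main obstacle I expect is designing the direction-selection subroutine so that it is simultaneously private (forcing a smooth, noise-tolerant score rather than an exact Lugosi--Mendelson argmax), accurate enough to drive a constant-factor contraction at each step, and genuinely permutation invariant in the group means so that shuffling amplification applies without leakage through the selection step; a secondary delicacy is aligning the per-iteration union bound with the outer union bound over directions from step one, so that both contribute only the polylogarithmic losses displayed in \eqref{eq:mean}.
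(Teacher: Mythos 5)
Your overall architecture matches the paper's (clipped group means, a Lugosi--Mendelson style concentration statement with $k\asymp\ln(1/\beta)$, an iterative distance-and-direction refinement in the spirit of \cite{cherapanamjeri2019fast}, and shuffling amplification to relax the per-group privacy requirement), but the privacy analysis --- which you correctly identify as the crux --- has a genuine gap. Amplification by shuffling in \cite{erlingsson2019amplification,feldman2022hiding} applies to a collection of \emph{locally randomized} reports that are then anonymized: each group's report must itself be a DP output of that group's data, and the final estimator must be a post-processing of the shuffled reports. In the paper this structure is built in at the start: Gaussian noise with variance $2R^2/(\rho m^2)$ is added directly to each group mean, so each $\mathbf{Q}_j$ is $\rho$-CDP with respect to its own group, and the entire iterative procedure (solving \eqref{eq:opt} for distance and direction, the updates $\mathbf{c}_{l+1}=\mathbf{c}_l+\eta d_l\mathbf{g}_l$, and the argmin over $d_l$) is \emph{noiseless, deterministic post-processing} that is permutation invariant in the $\mathbf{Q}_j$; that is exactly what Theorem \ref{thm:dp} needs. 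In your design, by contrast, the noise is injected \emph{centrally} at each iteration (a noisy argmax over candidate directions and a noisy magnitude estimate computed from all group means); the per-group reports themselves are never locally randomized. Mere permutation invariance of a centrally randomized mechanism does not yield the $1/\sqrt{k}$ amplification --- if it did, the plain clipped mean of Algorithm \ref{alg:est}, which is permutation invariant in all $n$ samples, would enjoy a spurious $1/\sqrt{n}$ amplification. So the step "composition over $T$ iterations yields $T\rho_0$-CDP per group, then shuffle to get $\tilde{\epsilon}\asymp\epsilon_0\sqrt{\ln(1/\delta)/k}$" is not justified as stated, and with it the condition $\epsilon\leq c\sqrt{(1/k)\ln(1/\delta)}$ and the third term of \eqref{eq:mean} are not derived.

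A secondary but telling discrepancy: you attribute the factor $\sqrt{\ln(1/\delta)\ln\ln(1/\beta)}$ to a union bound over $T=O(\log\log(1/\beta))$ iterations. In the paper no per-iteration privacy cost is paid at all (iterations are post-processing), and that factor comes from the per-group CDP-to-DP conversion with $\delta_0=\delta/(12k)$ inside the shuffling theorem, combined with $k=800\ln(1/\beta)$, i.e.\ from $\sqrt{\ln(12k/\delta)}$ in \eqref{eq:rho}; a union bound over your $T$ would instead produce $\ln\ln\ln(1/\beta)$-type terms. The fix is essentially to move all the randomization to Step \ref{step:avg} of Algorithm \ref{alg:est_improve} --- privatize each group mean once, with noise calibrated via \eqref{eq:var2} and \eqref{eq:rho} --- and make the refinement stage (your direction/magnitude subroutines) deterministic functions of the noisy $\mathbf{Q}_j$, at which point your contraction and utility accounting go through as in Lemmas \ref{lem:group}--\ref{lem:iterate}.
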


With $\epsilon\rightarrow \infty$ or $\delta\rightarrow 1$, one can just let $R$ to be sufficiently large, then $\norm{\hat{\mu}-\mu}\lesssim \sqrt{(d+\ln(1/\beta))/n}$, which matches existing results on non-private mean estimation \cite{lugosi2019sub,cherapanamjeri2019fast,lei2020fast}. Note that the factor $d+\ln(1/\beta)$ is important. If we use the median-of-means method instead, then this factor will become $d\ln(1/\beta)$, which will yield a suboptimal union bound.

The remainder of this section explains the algorithm and proves Theorem \ref{thm:mean}. The whole process of mean estimation is shown in Algorithm \ref{alg:est_improve}. The idea uses \cite{cherapanamjeri2019fast}. The difference from \cite{cherapanamjeri2019fast} is that we need to let the result satisfy $(\epsilon, \delta)$-DP, thus the truncation radius needs to be carefully tuned. Moreover, the distance estimation is with respect to the truncated mean $\mu_Y=\mathbb{E}[\mathbf{Y}_i]$ instead of ground truth $\mu$. Compared with \cite{cherapanamjeri2019fast}, we make a simplified algorithm statement here.

\begin{algorithm}[h!]
	\caption{Iterative updating method for mean estimation}\label{alg:est_improve}
	\textbf{Input:} dataset $\{\mathbf{X}_1,\ldots, \mathbf{X}_n\}$ and privacy requirement $\epsilon$, $\delta$\\
	\textbf{Output:} Estimate $\hat{\mu}$ \\
	\textbf{Parameter:} $R$, $k$, initial point $\mathbf{c}_1$	
	\begin{algorithmic}[1]
		\STATE Divide samples into $k$ groups randomly;
		\FOR{$j=1,\ldots, k$}
		\STATE $\mathbf{Q}_j=(1/m)\sum_{i\in B_j} \mathbf{Y}_i+\mathbf{W}_j$, in which $\mathbf{W}_j\sim \mathcal{N}(0,\sigma^2)$, with $\sigma^2$ determined in \eqref{eq:var2} \label{step:avg};
		\ENDFOR
		\FOR{$l=1,\ldots, t_c$}
		\STATE $d_l, \mathbf{g}_l=\Est(\mathbf{Q}_1,\ldots, \mathbf{Q}_k, \mathbf{c}_l)$;
		\label{step:est}
		\STATE $\mathbf{c}_{l+1}=\mathbf{c}_l+\eta d_l\mathbf{g}_l$;
		\ENDFOR
		\STATE $l^*=\arg\min_l d_l$;
		\STATE $\hat{\mu}=\mathbf{c}_{l^*}$;
		\RETURN $\hat{\mu}$
	\end{algorithmic}
\end{algorithm} 

Now we explain key steps in Algorithm \ref{alg:est_improve}.

\emph{1) Group-wise averages (step \ref{step:avg}).} The algorithm begins with dividing samples into  into $k$ bins $B_1,\ldots, B_k$. The size of each bin is $m$, then $n=mk$. Let
\begin{eqnarray}
	\mathbf{Q}_j=\frac{1}{m}\sum_{i\in B_j} \mathbf{Y}_i+\mathbf{W}_j,
\end{eqnarray}
with $\mathbf{W}_j\sim \mathcal{N}(0, \sigma^2)$. Here we let
\begin{eqnarray}
	\sigma^2=\frac{2R^2}{\rho m^2}.
	\label{eq:var2}
\end{eqnarray}

The final estimate is based on $\mathbf{Q}_1,\ldots, \mathbf{Q}_k$. Note that the sensitivities of $\mathbf{Q}_j$, $j=1,\ldots, k$ are all $2R/m$ over $\mathbf{X}_i$, $i\in B_j$. By Lemma \ref{lem:gaussian}, $\mathbf{Q}_1, \ldots, \mathbf{Q}_k$ are all $\rho$-CDP. Before constructing the final estimator $\hat{\mu}$, we first show that $\hat{\mu}$ is $(\epsilon, \delta)$-DP under some necessary conditions. 
\begin{thm}\label{thm:dp}
	Suppose 
	$\epsilon\leq 8e^2 \sqrt{(1/k)\ln (4/\delta)}$.
	Let the noise variance $\sigma^2$ be determined according to \eqref{eq:var2}. If an estimator $\hat{\mu}$ only depends on $\mathbf{Q}_1,\ldots, \mathbf{Q}_k$, and is permutation invariant with respect to $\mathbf{Q}_1,\ldots, \mathbf{Q}_k$. If $\mathbf{Q}_1, \ldots, \mathbf{Q}_k$ are all $\rho$-CDP with
	\begin{eqnarray}
		\rho = \frac{1}{64e^4}\frac{\epsilon^2 k}{\ln \frac{8}{\delta}\left(1+2\sqrt{\ln \frac{12k}{\delta}}\right)^2},
		\label{eq:rho}
	\end{eqnarray}
	then $\hat{\mu}$ is $(\epsilon, \delta)$-DP.
\end{thm}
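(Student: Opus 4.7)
The approach is to recast the mechanism as a shuffle-model protocol and invoke privacy amplification by shuffling. The key observation is that changing a single sample $\mathbf{X}_i$ alters only the group-wise average $\mathbf{Q}_j$ for the unique $j$ with $i\in B_j$, so at the level of $(\mathbf{Q}_1,\ldots,\mathbf{Q}_k)$ we are in the one-user-changes setting of the shuffle model with $k$ users. Moreover, since $\hat{\mu}$ depends on $\mathbf{Q}_1,\ldots,\mathbf{Q}_k$ only through a permutation-invariant function, its distribution is unchanged if we first apply a uniformly random permutation to these reports. Hence releasing $\hat{\mu}$ is equivalent in distribution to post-processing the output of a proper shuffle protocol, even though the algorithm itself never performs an explicit shuffle.

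The plan then breaks into three steps. First, I would convert the per-group CDP guarantee into approximate DP via Lemma \ref{lem:basics}(4), obtaining that every local randomizer (the map producing $\mathbf{Q}_j$) is $(\epsilon_0,\delta_0)$-DP with $\epsilon_0=\rho+2\sqrt{\rho\ln(1/\delta_0)}$; I would pick $\delta_0$ on the order of $\delta/k$ so that a union bound over the $k$ groups costs at most $\delta/2$ of the overall failure budget. Second, I would invoke the amplification-by-shuffling theorem of \cite{feldman2022hiding}: if each local randomizer is $\epsilon_0$-DP and $\epsilon_0$ is small enough to sit in the ``clone'' regime, then the shuffled output is $(\epsilon_s,\delta_s)$-DP with $\epsilon_s\lesssim(e^{\epsilon_0}-1)\sqrt{\ln(1/\delta_s)/k}$. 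The hypothesis $\epsilon\leq 8e^2\sqrt{(1/k)\ln(4/\delta)}$ is precisely what forces us into this small-$\epsilon_0$ regime, where $e^{\epsilon_0}\leq e^2$. Third, since $\hat{\mu}$ is a deterministic function of the shuffled $\mathbf{Q}_j$'s, post-processing preserves $(\epsilon_s,\delta_s)$-DP, and combining the $k\delta_0$ slack from the first step with $\delta_s$ from the second yields total failure probability $\delta$.

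To close the argument, I would substitute the specific $\rho$ from \eqref{eq:rho} into $\epsilon_0$ and verify that the shuffling output satisfies $\epsilon_s\leq\epsilon$. The factors $64e^4$, $(1+2\sqrt{\ln(12k/\delta)})^2$, and $\ln(8/\delta)$ appearing in the denominator of \eqref{eq:rho} precisely absorb three sources of loss: (a) the squaring in $\epsilon_0^2\asymp\rho\ln(k/\delta)$ coming from the CDP-to-DP conversion, (b) the $e^{\epsilon_0}\leq e^2$ slack inside the shuffling lemma, and (c) the $\sqrt{\ln(1/\delta)}$ factor produced by the shuffle bound.

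The main obstacle I expect is constant bookkeeping. The shuffling lemma is tight only up to small multiplicative constants and exponential factors in $\epsilon_0$, so the split of the total $\delta$-budget among $\delta_0$, $\delta_s$, and the residual CDP-to-DP slack has to be chosen carefully to avoid inflating the required $\rho$ beyond \eqref{eq:rho}. A secondary subtlety worth stating explicitly is the justification that permutation invariance of $\hat{\mu}$ genuinely licenses invocation of the shuffling lemma: the lemma is usually phrased for protocols that literally randomize the order of reports, and one must point out that the joint distribution of $\hat{\mu}$ is identical under any fixed ordering and under a uniformly random permutation, so the DP guarantee transfers from the latter to the former.
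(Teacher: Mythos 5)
Your proposal follows essentially the same route as the paper's proof: convert each group's $\rho$-CDP guarantee to $(\epsilon_0,\delta_0)$-DP via Lemma \ref{lem:basics}(4) with $\delta_0$ of order $\delta/k$, use permutation invariance of $\hat{\mu}$ to view the release as post-processing of shuffled reports, invoke the amplification-by-shuffling bound of \cite{feldman2022hiding}, and then substitute \eqref{eq:rho} to check $\epsilon_g\leq\epsilon$ and the $\delta$-budget split. The only (immaterial) difference is bookkeeping: the paper shows $\epsilon_0\leq 1$, so $e^{\epsilon_0}\leq e$, rather than your $e^{\epsilon_0}\leq e^2$ slack, but the argument is otherwise the same.
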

Since $\mathbf{Q}_j$ is $\rho$-CDP with respect to $\{\mathbf{X}_i|i\in B_j\}$, it is straightforward to see that any estimator $\hat{\mu}$ that depends only on $\mathbf{Q}_1,\ldots, \mathbf{Q}_k$ is also $\rho$-CDP. However, if $\hat{\mu}$ is permutation invariant with respect to $\mathbf{Q}_1,\ldots, \mathbf{Q}_k$, then the privacy guarantee becomes stronger, since $\hat{\mu}$ does not change if we shuffle $\mathbf{Q}_1,\ldots, \mathbf{Q}_k$ randomly. According to \cite{erlingsson2019amplification,feldman2022hiding}, the privacy guarantee can be amplified. As a result, to ensure that the whole algorithm satisfies $(\epsilon, \delta)$-DP, the privacy requirement for each group is only $\rho$-CDP with \eqref{eq:rho}, which is weaker than \eqref{eq:rhosc} for sufficiently large $k$.

Under such settings, we show the bound of the estimation error. Similar to existing research on non-private mean estimation, we show that most elements in $\{\mathbf{Q}_1, \ldots, \mathbf{Q}_k\}$ are not far away from the truncated mean $\mu_Y$. 
\begin{lem}\label{lem:group}
	There exists a constant $C_0$. For any $\beta>0$, let $k=800\ln(1/\beta)$. Define
	\begin{eqnarray}
		r_0=C_0\left(\sqrt{\frac{d+\ln(1/\beta)}{n}}+\frac{R}{n}\sqrt{\frac{k}{\rho}} \sqrt{d+\ln(1/\beta)}\right).
		\label{eq:r0}
	\end{eqnarray}
	Then with probability at least $1-\beta$, 
	\begin{eqnarray}
		\underset{\mathbf{u}:\norm{\mathbf{u}}=1}{\sup} \sum_{j=1}^k \mathbf{1}\left(\langle \mathbf{u}, \mathbf{Q}_j-\mu_Y\rangle > r_0\right) \leq \frac{1}{10}k.
		\label{eq:concentrate}
	\end{eqnarray}
\end{lem}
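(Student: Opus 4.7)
The plan is to use the classical Chebyshev-then-Chernoff-then-$\epsilon$-net pipeline from the median-of-means literature, adapted to the Gaussian privacy noise. The goal is to establish, for every unit direction $\mathbf{u}$, that fewer than $k/10$ of the noisy group averages $\mathbf{Q}_j$ deviate from $\mu_Y$ by more than $r_0$ in the direction $\mathbf{u}$.

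I first fix a unit vector $\mathbf{u}$. Since clipping is a directional contraction, $\Var(\langle \mathbf{u},\mathbf{Y}_i\rangle)\leq \mathbb{E}[\langle \mathbf{u},\mathbf{X}\rangle^2]\leq M^2$ by Assumption~\ref{ass}(c) with $p\geq 2$. Combining i.i.d.\ averaging over a group of size $m=n/k$ with the privacy noise variance $\sigma^2=2R^2/(\rho m^2)$ yields
\[
\Var(\langle \mathbf{u},\mathbf{Q}_j-\mu_Y\rangle)\leq \frac{M^2 k}{n}+\frac{2R^2 k^2}{\rho n^2}.
\]
Matching the two resulting scales against the two terms in $r_0$ from \eqref{eq:r0}, Chebyshev's inequality at threshold $r_0/2$ delivers $p_0:=P(\langle \mathbf{u},\mathbf{Q}_j-\mu_Y\rangle>r_0/2)\leq C k/[C_0^2(d+\ln(1/\beta))]$ for an absolute constant $C$; in particular, taking $C_0$ large enough ensures $p_0\leq 1/100$. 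Because the $\mathbf{Q}_j$ are independent, the count of bad groups is $\mathrm{Bin}(k,p_0)$ and a Chernoff bound gives
\[
P\!\left(\sum_{j=1}^k \mathbf{1}(\langle \mathbf{u},\mathbf{Q}_j-\mu_Y\rangle>r_0/2)>k/20\right)\leq (20e p_0)^{k/20}\leq e^{-c k}
\]
for a constant $c>0$ that can be made arbitrarily large by enlarging $C_0$.

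To get a uniform statement I take a $(1/2)$-net $\mathcal{N}$ of the unit sphere, $|\mathcal{N}|\leq 6^d$, and union bound: with probability at least $1-6^d e^{-ck}$, every $\mathbf{v}\in\mathcal{N}$ has at most $k/20$ bad groups at scale $r_0/2$. Plugging in $k=800\ln(1/\beta)$ and choosing $C_0$ so that $ck\geq d\ln 6+\ln(2/\beta)$ drives this failure probability below $\beta$. Finally I extend to the sphere by decomposing, for any unit $\mathbf{u}$ and its nearest net point $\mathbf{v}$,
\[
\langle \mathbf{u},\mathbf{Q}_j-\mu_Y\rangle = \langle \mathbf{v},\mathbf{Q}_j-\mu_Y\rangle + \norm{\mathbf{u}-\mathbf{v}}\langle \tilde{\mathbf{u}},\mathbf{Q}_j-\mu_Y\rangle
\]
with $\tilde{\mathbf{u}}=(\mathbf{u}-\mathbf{v})/\norm{\mathbf{u}-\mathbf{v}}$ a new unit vector and $\norm{\mathbf{u}-\mathbf{v}}\leq 1/2$. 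Iterating this halving at successively smaller scales and summing the geometric series shows that the scale-$r_0/2$ count bound on $\mathcal{N}$ implies a scale-$r_0$ count bound on the whole sphere with only a constant-factor blow-up, i.e.\ $k/20\to k/10$.

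The main obstacle is the net-to-sphere extension. Unlike the usual sub-Gaussian argument which controls a supremum norm (and can just invoke the "doubling" identity $\sup_{\mathbf{u}\in S^{d-1}}\langle \mathbf{u},\cdot\rangle\leq 2\sup_{\mathbf{v}\in\mathcal{N}}\langle \mathbf{v},\cdot\rangle$), here I am bounding a count of indicators, so the halving recursion must be arranged so that the residual contributions at each scale are genuinely summable and the total extra count stays well below $k/10$. A secondary concern is the regime $d\gg\ln(1/\beta)$: the $\sqrt{d+\ln(1/\beta)}$ factor inside $r_0$ shrinks $p_0$ to $O(\ln(1/\beta)/d)$, which I need to use inside the Chernoff factor to beat the $6^d$ net cardinality in the union bound; this amounts to checking that with $k=800\ln(1/\beta)$ and $C_0$ absolute the exponent $ck$ still dominates $d\ln 6$ in all regimes.
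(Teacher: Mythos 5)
There is a genuine gap, and it is exactly at the point you flagged as a ``secondary concern'': the union bound over the $6^d$-net cannot be closed in the regime $d\gg\ln(1/\beta)$, because with only second-moment (Chebyshev) control the per-direction tail $p_0=\mathrm{P}(\langle \mathbf{u},\mathbf{Q}_j-\mu_Y\rangle>r_0/2)$ is only \emph{polynomially} small, namely $p_0\asymp k/\bigl(C_0^2(d+\ln(1/\beta))\bigr)$, and under Assumption \ref{ass}(c) this is essentially tight (a two-point heavy-tailed law can saturate Chebyshev), so no choice of the absolute constant $C_0$ makes $p_0$ exponentially small in $d/\ln(1/\beta)$. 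The binomial/Chernoff exponent you have available is therefore at most of order $\frac{k}{20}\ln\frac{1}{20ep_0}\lesssim \ln(1/\beta)\,\ln\!\bigl(C_0^2\,d/\ln(1/\beta)\bigr)$ with the lemma's fixed $k=800\ln(1/\beta)$, and this is $o(d)$ once $d\gg \ln(1/\beta)\ln d$; hence $6^d e^{-ck}\le\beta$ fails and the constant $c$ is \emph{not} ``arbitrarily large'' as claimed. This is precisely the obstruction that makes the naive net-plus-Chernoff (median-of-means style) analysis yield only the suboptimal $\sqrt{d\ln(1/\beta)/n}$ scale, which is what the lemma is designed to avoid. (Enlarging $k$ to order $d$ would rescue the union bound but contradicts the statement $k=800\ln(1/\beta)$ and would ruin the downstream noise/privacy accounting through $\sigma^2=2R^2k^2/(\rho n^2)$.) Your net-to-sphere step is also not merely technical: the halving identity produces a \emph{new} residual direction at every level, so the bad index sets from infinitely many levels must be merged, and the natural chaining fix pits net cardinalities growing like $2^{sd}$ against allowed counts shrinking like $k2^{-s}$, which makes the level-wise Chernoff bounds even weaker; as written the recursion does not sum.

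The paper's proof takes a different route that avoids sphere nets altogether: it bounds the indicator by $|\langle\mathbf{u},\mathbf{Q}_j-\mu_Y\rangle|/r$, centers, applies Rademacher symmetrization and the Ledoux--Talagrand contraction lemma, and thereby collapses $\mathbb{E}\bigl[\sup_{\mathbf{u}}\frac{1}{k}\sum_j\mathbf{1}(\cdot)\bigr]$ to $\frac{4}{r}\mathbb{E}\bigl[\norm{\bar{\mathbf{Y}}-\mu_Y}\bigr]+\frac{4}{r}\mathbb{E}\bigl[\norm{\frac{1}{k}\sum_j\mathbf{W}_j}\bigr]$ plus a variance term, each of which is bounded in expectation by $\sqrt{dM^2/n}$ and $\sigma\sqrt{d/k}$ respectively; the $\ln(1/\beta)$ dependence then enters only through McDiarmid's bounded-difference inequality applied to the sup (a $[0,1]$-valued $1/k$-bounded-difference functional), giving the additive $\sqrt{\ln(1/\beta)/(2k)}\le 1/40$ deviation with $k=800\ln(1/\beta)$. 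If you want to salvage your write-up, replace the fixed-direction Chebyshev--Chernoff--net pipeline by this symmetrization/contraction plus bounded-differences argument (as in Lemma 1 of \cite{lugosi2019sub}); the variance computation you did for $\langle\mathbf{u},\mathbf{Q}_j-\mu_Y\rangle$ remains useful for the centering term.
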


\cite{lugosi2019sub} shows that for the non-private case, for arbitrary unit vector $\mathbf{u}$, most of elements in $\{\mathbf{Q}_1,\ldots, \mathbf{Q}_k\}$ satisfy $\langle \mathbf{u}, \mathbf{Q}_j-\mu\rangle\lesssim \sqrt{(d+\ln(1/\beta))/n}$, which matches the first term in \eqref{eq:concentrate}. Compared with \cite{lugosi2019sub}, we extend the analysis to the case with the clipping operation and random noise. 

\emph{2) Distance and gradient estimation (step \ref{step:est}).} We introduce the following optimization problem:
\begin{eqnarray}
	\begin{aligned}
		& \text{max}
		& & s \\
		& \text{subject to}
		& & b_j\langle \mathbf{Q}_j-\mathbf{c}, \mathbf{u}\rangle \geq b_j s, j=1,\ldots, k\\
		&& & \sum_{j=1}^k b_j \geq 0.9k\\
		&& & b_j\in \{0,1\}, j=1,\ldots, k,\\
		&&& \norm{\mathbf{u}} = 1.
	\end{aligned}
	\label{eq:opt}
\end{eqnarray}
Now we explain \eqref{eq:opt}. This optimization problem attempts to find maximum $s$, such that there exists a unit vector $\mathbf{u}$, at least $90\%$ of the projection of $\{\mathbf{Q}_1, \ldots, \mathbf{Q}_k \}$ on $\mathbf{u}$ are at least $s$ far away from the current iterate $\mathbf{c}$. Denote the function of estimation as $d, \mathbf{g}=\Est(\mathbf{Q}_1,\ldots, \mathbf{Q}_k, \mathbf{c})$. The program $\Est$ solves the optimization problem \eqref{eq:opt}, and returns $d$ and $\mathbf{g}$, in which $d$ is the optimal value of $s$, which estimates the distance $\norm{\mathbf{c}-\mu_Y}$, and $\mathbf{g}$ is the corresponding value of $\mathbf{u}$, which estimates the direction from $\mathbf{c}$ to $\mu_Y$, i.e. $(\mathbf{c}-\mu_Y)/\norm{\mathbf{c}-\mu_Y}$. Such estimation is analyzed in the following lemma, in which we follow the analysis in \cite{cherapanamjeri2019fast}.
\begin{lem}\label{lem:distest}
	Let $d, \mathbf{g}=\Est (\mathbf{Q}_1,\ldots, \mathbf{Q}_k, \mathbf{c})$, which is calculated by solving the optimization problem \eqref{eq:opt}. If \eqref{eq:concentrate} is satisfied, then 
	\begin{eqnarray}
		|d-\norm{\mathbf{c}-\mu_Y}|\leq r_0.
		\label{eq:derr}
	\end{eqnarray}
	Moreover, if $\norm{\mu_Y-\mathbf{c}}\geq 4r_0$, then
	\begin{eqnarray}
		\left\langle \mathbf{g}, \frac{\mu_Y-\mathbf{c}}{\norm{\mu_Y-\mathbf{c}}}\right\rangle \geq \frac{1}{2}.
		\label{eq:gerr}
	\end{eqnarray}
\end{lem}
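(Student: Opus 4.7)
The plan is to pin down $d$ with a two-sided estimate by using the max-min program \eqref{eq:opt} itself to lower-bound $d$ (via a planted candidate direction), and by combining the selected-set constraint with the concentration hypothesis \eqref{eq:concentrate} applied at the optimizer's own direction to upper-bound $d$. Abbreviate $\mathbf{v}^\star := (\mu_Y - \mathbf{c})/\norm{\mu_Y - \mathbf{c}}$. First I would prove $d \geq \norm{\mu_Y - \mathbf{c}} - r_0$ by instantiating $\mathbf{u} = \mathbf{v}^\star$ in \eqref{eq:opt}: applying \eqref{eq:concentrate} to the unit vector $-\mathbf{v}^\star$ shows that at most $k/10$ indices $j$ satisfy $\langle \mathbf{v}^\star, \mathbf{Q}_j - \mu_Y\rangle < -r_0$, so at least $0.9k$ of them satisfy $\langle \mathbf{v}^\star, \mathbf{Q}_j - \mathbf{c}\rangle \geq \norm{\mu_Y - \mathbf{c}} - r_0$, and setting $b_j = 1$ on exactly those indices yields a feasible solution with objective $\norm{\mu_Y - \mathbf{c}} - r_0$.

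For the matching upper bound, let $(\mathbf{g}, \{b_j\}, d)$ be optimal and let $S = \{j : b_j = 1\}$, so $|S| \geq 0.9k$. Now apply \eqref{eq:concentrate} to the data-dependent direction $\mathbf{g}$ to obtain $|T| \geq 0.9k$ for $T := \{j : \langle \mathbf{g}, \mathbf{Q}_j - \mu_Y\rangle \leq r_0\}$. Since $|S| + |T| \geq 1.8k > k$, the intersection $S \cap T$ is nonempty; picking any $j_0 \in S \cap T$ and combining the feasibility constraint $\langle \mathbf{g}, \mathbf{Q}_{j_0} - \mathbf{c}\rangle \geq d$ with $\langle \mathbf{g}, \mathbf{Q}_{j_0} - \mu_Y\rangle \leq r_0$ gives $d \leq r_0 + \langle \mathbf{g}, \mu_Y - \mathbf{c}\rangle \leq r_0 + \norm{\mu_Y - \mathbf{c}}$. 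Together with the lower bound, this proves \eqref{eq:derr}.

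The direction bound \eqref{eq:gerr} then falls out by chaining the two estimates: $\norm{\mu_Y - \mathbf{c}} - r_0 \leq d \leq r_0 + \langle \mathbf{g}, \mu_Y - \mathbf{c}\rangle$, hence $\langle \mathbf{g}, \mu_Y - \mathbf{c}\rangle \geq \norm{\mu_Y - \mathbf{c}} - 2 r_0$, and dividing by $\norm{\mu_Y - \mathbf{c}} \geq 4 r_0$ yields $\langle \mathbf{g}, \mathbf{v}^\star\rangle \geq 1 - 2 r_0 / \norm{\mu_Y - \mathbf{c}} \geq 1/2$. The only delicate point — and the reason the supremum over unit vectors is built into \eqref{eq:concentrate} rather than a bound at a single fixed direction — is that the upper bound on $d$ must be instantiated at the data-dependent direction $\mathbf{g}$ produced by the optimizer; once this uniform concentration is granted, the rest is pigeonhole and elementary algebra, and I do not anticipate any real obstacle.
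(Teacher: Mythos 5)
Your proposal is correct and follows essentially the same route as the paper: plant the direction $(\mu_Y-\mathbf{c})/\norm{\mu_Y-\mathbf{c}}$ (and its negation) in \eqref{eq:opt} together with the uniform concentration \eqref{eq:concentrate} for the two-sided bound on $d$, then combine the selected-set constraint with concentration at the optimizer's direction $\mathbf{g}$ via pigeonhole to get \eqref{eq:gerr}. Your upper bound on $d$ is in fact written slightly more carefully than the paper's (which instantiates only the fixed direction there and defers the intersection argument to the proof of \eqref{eq:gerr}), but the underlying argument is the same.
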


\eqref{eq:derr} shows that under \eqref{eq:concentrate}, which holds with probability at least $1-\beta$, the error of distance estimate is at least $r_0$. Moreover, \eqref{eq:gerr} shows that if the current iterate $\mathbf{c}$ is sufficiently far away from the truncated mean $\mu_Y$, then the angle between gradient estimate and the ideal update direction along $\mu_Y-\mathbf{c}$ is no more than $\pi/3$. These analysis validates that the update rule $\mathbf{c}_{l+1}=\mathbf{c}_l+\eta d_lg_l$ makes the iterate point $\mathbf{c}_{l}$ close to $\mu_Y$ with large $l$. To be more precise, we show the following lemma:

\begin{lem}\label{lem:iterate}
	Let $\eta=1/4$. If $\norm{\mathbf{c}_1-\mu_Y}\leq 4r_0$, or $t_c\geq 2\ln \frac{\norm{\mathbf{c}_1-\mu_Y}}{4r_0}/\ln \frac{256}{233}$, then the estimate $\hat{\mu}$ with Algorithm \ref{alg:est_improve} satisfies 
	\begin{eqnarray}
		\norm{\hat{\mu}-\mu_Y}\leq 6r_0,
	\end{eqnarray}
	in which $\mu_Y=\mathbb{E}[\mathbf{Y}_i]$.
\end{lem}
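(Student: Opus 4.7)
The plan is to treat $\rho_l := \norm{\mathbf{c}_l - \mu_Y}$ as a potential function, show that it contracts geometrically to within a constant multiple of $r_0$ whenever it lies above $4r_0$, and then use the argmin selection on $\{d_l\}$ to guarantee that the returned iterate sits inside a ball of radius $6r_0$ around $\mu_Y$.

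First, I would prove a one-step contraction lemma. Assume the good event \eqref{eq:concentrate} on which Lemma \ref{lem:distest} holds. At any step with $\rho_l \geq 4r_0$, expand the squared distance after the update:
\begin{equation*}
\rho_{l+1}^2 = \norm{\mathbf{c}_l - \mu_Y + \eta d_l \mathbf{g}_l}^2 = \rho_l^2 + 2\eta d_l \langle \mathbf{g}_l, \mathbf{c}_l - \mu_Y\rangle + \eta^2 d_l^2,
\end{equation*}
using $\norm{\mathbf{g}_l}=1$ (a constraint in \eqref{eq:opt}). By \eqref{eq:gerr}, $\langle \mathbf{g}_l, \mathbf{c}_l - \mu_Y\rangle \leq -\tfrac{1}{2}\rho_l$, and by \eqref{eq:derr}, $d_l \in [\rho_l - r_0, \rho_l + r_0] \subseteq [\tfrac{3}{4}\rho_l, \tfrac{5}{4}\rho_l]$ since $\rho_l \geq 4 r_0$. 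Substituting and using the smaller constant on the negative term and the larger one on the quadratic term gives
\begin{equation*}
\rho_{l+1}^2 \leq \rho_l^2 - \eta \cdot \tfrac{3}{4}\rho_l^2 + \eta^2 \cdot \tfrac{25}{16}\rho_l^2 = \left(1 - \tfrac{3\eta}{4} + \tfrac{25\eta^2}{16}\right)\rho_l^2.
\end{equation*}
Setting $\eta = 1/4$ gives the contraction factor $1 - 3/16 + 25/256 = 233/256 < 1$, which is the exact constant appearing in the hypothesis.

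Next I would handle the two cases in the hypothesis. In the near case $\rho_1 \leq 4r_0$, Lemma \ref{lem:distest} gives $d_1 \leq \rho_1 + r_0 \leq 5r_0$, hence $d_{l^*} = \min_l d_l \leq 5r_0$, and applying \eqref{eq:derr} at $l^*$ yields $\rho_{l^*} \leq d_{l^*} + r_0 \leq 6r_0$. In the far case $\rho_1 > 4r_0$, I argue by contradiction: if $\rho_l > 4r_0$ held for every $l = 1,\ldots,t_c$, the one-step contraction would iterate to give $\rho_{t_c}^2 \leq (233/256)^{t_c-1}\rho_1^2$; combined with the lower bound $\rho_{t_c}^2 > 16 r_0^2$ this forces $t_c - 1 < 2\ln(\rho_1/(4r_0))/\ln(256/233)$, contradicting the stated lower bound on $t_c$. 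So some iterate $l_0$ satisfies $\rho_{l_0} \leq 4r_0$, hence $d_{l_0}\leq 5r_0$, and the argmin step picks $d_{l^*}\leq d_{l_0}\leq 5r_0$, giving $\rho_{l^*}\leq 6r_0$ as before.

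The main obstacle is tightening the contraction computation so that the constants $\eta=1/4$ and $233/256$ exactly reproduce the ratio in the hypothesis; once that one-step identity is in hand, the two cases follow routinely from Lemma \ref{lem:distest} and the $\arg\min$ selection rule. A minor subtlety worth checking is that the high-probability event of Lemma \ref{lem:group} is uniform in the direction $\mathbf{u}$, so Lemma \ref{lem:distest} applies simultaneously at all iterates $\mathbf{c}_l$ without paying any additional union-bound cost in the failure probability.
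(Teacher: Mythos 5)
Your proposal is correct and follows essentially the same route as the paper's own proof: the identical one-step contraction $\norm{\mathbf{c}_{l+1}-\mu_Y}^2\leq\frac{233}{256}\norm{\mathbf{c}_l-\mu_Y}^2$ obtained from \eqref{eq:derr} and \eqref{eq:gerr} with $\eta=1/4$, followed by the same $\arg\min_l d_l$ selection argument ($\norm{\mathbf{c}_{l^*}-\mu_Y}\leq d_{l^*}+r_0\leq d_{l_0}+r_0\leq 6r_0$). Your contradiction-based step count has the same harmless off-by-one slack as the paper's direct count, and your closing remark about the uniformity in $\mathbf{u}$ of the event in Lemma \ref{lem:group} matches how the paper avoids any extra union bound over iterates.
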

Lemma \ref{lem:iterate} bounds the estimation error with respect to the truncated mean $\mu_Y$. Recall the definition of $r_0$ in \eqref{eq:r0}. Considering the clipping bias, we have
\begin{eqnarray}
	\norm{\hat{\mu}-\mu}\lesssim \sqrt{\frac{d+\ln(1/\beta)}{n}}+\frac{R}{n}\sqrt{\frac{k}{\rho}} \sqrt{d+\ln(1/\beta)}+ d^\frac{p}{2}R^{1-p}.
	\label{eq:meanerr}
\end{eqnarray}

\eqref{eq:mean} can then be obtained using \eqref{eq:meanerr} and \eqref{eq:rho}. The construction of the mean estimator and the proof of Theorem \ref{thm:mean} is complete.

\subsection{Application in DP Optimization}\label{sec:toopt}

Now we have constructed a mean estimator under $(\epsilon, \delta)$-DP, whose estimation error is bounded with Theorem \ref{thm:mean}. For the stochastic optimization problem, we need to estimate the gradient for $T$ steps, and the $(\epsilon, \delta)$-DP requirement is imposed on the whole process. Therefore, each step needs to satisfy stronger privacy requirements. According to advanced composition theorem (Lemma \ref{lem:basics}(1)), here we ensure that each step satisfies $(\epsilon_0, \delta_0)$-DP, with
\begin{eqnarray}
	\epsilon_0=\frac{\epsilon}{2\sqrt{2T\ln \frac{2}{\delta}}}, \delta_0=\frac{\delta}{2T}.
	\label{eq:dpstep}
\end{eqnarray}
Then the optimization process with $T$ steps is $(\epsilon, \delta)$-DP. For any fixed $\mathbf{w}$, let $g(\mathbf{w})$ be the mean estimate using $\nabla l(\mathbf{w}, \mathbf{Z}_1), \ldots, \nabla l(\mathbf{w}, \mathbf{Z}_n)$ under $(\epsilon_0, \delta_0)$-DP. According to Theorem \ref{thm:mean}, for any fixed $\mathbf{w}$, the gradient estimate at each step satisfies
\begin{eqnarray}
	\norm{g(\mathbf{w})-\nabla F(\mathbf{w})}\lesssim \sqrt{\frac{d+\ln \frac{1}{\beta}}{n}}  +d^\frac{p}{2} R^{1-p}+ \frac{R}{n\epsilon_0}\sqrt{d+\ln \frac{1}{\beta}}\left(\ln \frac{1}{\delta_0}+\sqrt{\ln \frac{1}{\delta_0}\ln\ln \frac{1}{\beta}}\right).
\end{eqnarray}

As discussed earlier, since $\mathbf{w}_t$ depends on the data, the bias and variance of gradient estimation at time $t$, i.e. $B=\max_t\norm{\mathbb{E}[g(\mathbf{w}_t)]-\nabla F(\mathbf{w}_t)}$ and $G^2=\max_t \mathbb{E}[\norm{g(\mathbf{w}_t)-\nabla F(\mathbf{w}_t)}^2]$ can not be bounded simply using the bias and variance with fixed $\mathbf{w}$. Instead, similar to Lemma \ref{lem:bg}, we need to derive a union bound of all $\mathbf{w}\in \mathcal{W}$. The results are shown in Lemma \ref{lem:bgnew}.
\begin{lem}\label{lem:bgnew}
	For the mean estimation algorithm, $B$ and $G^2$ are bounded by
	\begin{eqnarray}
		B\lesssim \sqrt{\frac{d}{n}}+d^\frac{p}{2} R^{1-p},
	\end{eqnarray}
	and
	\begin{eqnarray}
		G^2\lesssim \frac{d}{n}+\frac{R^2 d}{n^2\epsilon_0^2}\left(\ln \frac{1}{\delta_0}+\sqrt{\ln \frac{1}{\delta_0}\ln d}\right)+d^p R^{2(1-p)}.
		\label{eq:g2new}
	\end{eqnarray}
\end{lem}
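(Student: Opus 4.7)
The plan is to derive a uniform-over-$\mathcal{W}$ high-probability bound on $\norm{g(\mathbf{w}) - \nabla F(\mathbf{w})}$ from Theorem \ref{thm:mean}, then extract the moment bounds $B$ and $G^2$ by tail integration, mirroring the proof of Lemma \ref{lem:bg} but using the improved (subgaussian) tail from Algorithm \ref{alg:est_improve}.

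First I would build an $\tau$-net $\mathcal{N}_\tau$ of $\mathcal{W}$ of size $N\leq (3L/\tau)^d$, apply Theorem \ref{thm:mean} at each net point with failure probability $\beta/N$ under the per-step budget $(\epsilon_0,\delta_0)$ from \eqref{eq:dpstep}, and take a union bound. This yields, with probability $1-\beta$ simultaneously for all $\mathbf{w}\in\mathcal{N}_\tau$,
$$
\norm{g(\mathbf{w}) - \nabla F(\mathbf{w})} \lesssim \sqrt{\tfrac{d\ln(L/\tau)+\ln(1/\beta)}{n}} + d^{p/2} R^{1-p} + \tfrac{R\sqrt{d\ln(L/\tau)+\ln(1/\beta)}}{n\epsilon_0}\Bigl(\ln\tfrac{1}{\delta_0} + \sqrt{\ln\tfrac{1}{\delta_0}\ln(d\ln(L/\tau))}\Bigr).
$$
To promote the bound from $\mathcal{N}_\tau$ to all of $\mathcal{W}$, I would invoke Lipschitz continuity: $\nabla F$ is $\lambda$-Lipschitz by Assumption \ref{ass}(b); the per-sample clipped gradients $\Clip(\nabla l(\mathbf{w},\mathbf{Z}_i),R)$ depend continuously on $\mathbf{w}$ (clipping is $1$-Lipschitz in its argument), so the inputs $\mathbf{Q}_j(\mathbf{w})$ to Algorithm \ref{alg:est_improve} vary smoothly in $\mathbf{w}$, and the iterative updating rule depends stably on the $\mathbf{Q}_j$'s. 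Choosing $\tau = 1/\mathrm{poly}(n)$ keeps $d\ln(L/\tau) = O(d\ln n)$ while rendering the discretization slack negligible.

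With the uniform bound in place, I would extract $B$ and $G^2$ by tail integration. For $B$, Jensen and the triangle inequality give $\norm{\mathbb{E}[g(\mathbf{w}_t)] - \nabla F(\mathbf{w}_t)} \leq \mathbb{E}\bigl[\sup_\mathbf{w}\norm{g(\mathbf{w})-\nabla F(\mathbf{w})}\bigr]$; setting $\beta$ to a constant, the zero-mean DP noise contributes nothing to the conditional expectation at each fixed $\mathbf{w}$, so only the $\sqrt{d/n}$ sampling-deviation term and the $d^{p/2} R^{1-p}$ clipping bias survive. For $G^2$ I would write $\mathbb{E}[X^2] = 2\int_0^\infty x\,\Pr(X>x)\,dx$ with $X = \sup_\mathbf{w}\norm{g(\mathbf{w})-\nabla F(\mathbf{w})}$; squaring each of the three terms and integrating the Gaussian-in-$\sqrt{\ln(1/\beta)}$ tail produces the three terms in \eqref{eq:g2new}. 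The $\sqrt{\ln d}$ inside the square root arises because $\ln\ln(1/\beta)\sim\ln d$ at the effective tail scale $\ln(1/\beta)\sim d\ln n$ forced by the covering.

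The main obstacle I expect is the net-to-$\mathcal{W}$ extension. Algorithm \ref{alg:est_improve} solves the combinatorial program \eqref{eq:opt} with $\{0,1\}$-valued selectors $b_j$ that can jump under perturbations of the $\mathbf{Q}_j$, so pointwise Lipschitz continuity of $g(\mathbf{w})$ in $\mathbf{w}$ is not immediate. The resolution is that $\Est$ is robust in a value-stable sense: for perturbations of size $O(\tau)$ in each $\mathbf{Q}_j$, any optimal selection for the perturbed program remains a near-feasible, near-optimal selection for the original one (by Lemma \ref{lem:distest}, the estimated distance/direction is determined up to $r_0$ by which 90\% of projections lie near $\mu_Y$, a condition that is preserved under small continuous perturbations), so the returned $(d_l,\mathbf{g}_l)$ and the iterate $\mathbf{c}_l$ change by at most $O(\tau)$, which is exactly what is needed to absorb the discretization error.
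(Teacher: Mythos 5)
The paper never writes out a proof of Lemma \ref{lem:bgnew}; it only says the argument is ``similar to Lemma \ref{lem:bg}'', and your scaffolding (a $\tau$-net of $\mathcal{W}$, Theorem \ref{thm:mean} at each net point with failure probability $\beta/N$, transfer to all of $\mathcal{W}$, then tail integration) is exactly that route. Your net-to-$\mathcal{W}$ step is acceptable in spirit: rather than arguing output stability of the combinatorial program \eqref{eq:opt}, the cleaner statement is that per-sample $\lambda$-smoothness (as invoked in Appendix C) and $1$-Lipschitzness of $\Clip$ make the concentration event \eqref{eq:concentrate} at a net point imply the same event at any $\mathbf{w}$ within $\tau$, with $r_0$ inflated by $O(\lambda\tau)$, after which Lemmas \ref{lem:distest}--\ref{lem:iterate} apply verbatim; your parenthetical says essentially this.

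The genuine gap is the bias bound. Your step ``the zero-mean DP noise contributes nothing to the conditional expectation at each fixed $\mathbf{w}$'' is the one place where the Lemma \ref{lem:bg} argument does not transfer. There, $g(\mathbf{w})=g_0(\mathbf{w})+\mathbf{W}$ with $\mathbf{W}$ independent and zero-mean, so $\mathbb{E}[g(\mathbf{w}_t)]-\nabla F(\mathbf{w}_t)=\mathbb{E}[g_0(\mathbf{w}_t)]-\nabla F(\mathbf{w}_t)$ \emph{before} Jensen is applied. In Algorithm \ref{alg:est_improve} the Gaussians $\mathbf{W}_j$ enter the $\mathbf{Q}_j$ upstream of a nonlinear, combinatorial, iterative procedure, so there is no additive decomposition and no a priori reason the noise averages out of $\mathbb{E}[g(\mathbf{w})]$. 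Once you instead bound $\norm{\mathbb{E}[g(\mathbf{w}_t)]-\nabla F(\mathbf{w}_t)}\leq \mathbb{E}[\sup_{\mathbf{w}}\norm{g(\mathbf{w})-\nabla F(\mathbf{w})}]$, as your first inequality does, you must keep the noise term of Theorem \ref{thm:mean}, of order $(R\sqrt{d}/(n\epsilon_0))\ln(1/\delta_0)$; under the parameters of Theorem \ref{thm:risknew} ($\epsilon_0=\epsilon/(2\sqrt{2T\ln(2/\delta)})$, $T=n^2\epsilon^2/(dR^2)$) this is of order $\sqrt{\ln(1/\delta)}\,\ln(T/\delta)$, i.e. non-vanishing, so the claimed $B\lesssim\sqrt{d/n}+d^{p/2}R^{1-p}$ (and with it Theorem \ref{thm:risknew}) would not follow. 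Controlling the noise-induced bias of the iterative estimator is precisely the hard content of the lemma, and it needs a dedicated argument rather than the cancellation borrowed from the additive-noise case. Two smaller points: ``setting $\beta$ to a constant'' cannot yield the expectation bounds --- you need $\beta=1/\mathrm{poly}(n)$ plus an a priori fallback bound on the error when the event fails (nontrivial here since the $\mathbf{Q}_j$ contain unbounded Gaussian noise, unlike the $\norm{g_0}\leq R$ fallback in Appendix C); and applying Theorem \ref{thm:mean} at $N$ net points forces $k\asymp\ln(N/\beta)\asymp d\ln n$, a parameter constraint (including the condition $\epsilon_0\lesssim\sqrt{(1/k)\ln(1/\delta_0)}$) that should be checked against the choices in Section \ref{sec:toopt}.
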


Based on Lemma \ref{lem:opt} and \ref{lem:bgnew}, we can then derive the following bound on the excess risk.
\begin{thm}\label{thm:risknew}
	Let $T= n^2\epsilon^2/(dR^2)$, $\eta=1/\sqrt{2T\lambda^2}$, and
	$R=\sqrt{d}\left(n\epsilon/\sqrt{d}\right)^\frac{1}{p}$.
	If $\epsilon\leq 1$, then
	\begin{eqnarray}
		\mathbb{E}[F(\hat{\mathbf{w}})]-F(\mathbf{w}^*)\lesssim \sqrt{\frac{d}{n}}+\sqrt{d}\left(\frac{\sqrt{d}}{n\epsilon}\right)^{1-\frac{1}{p}}\ln \frac{1}{\delta}\ln(nd).
	\end{eqnarray}
\end{thm}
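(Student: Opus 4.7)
\textbf{Proof proposal for Theorem \ref{thm:risknew}.}

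The plan is to apply Lemma \ref{lem:opt} using the bias/variance bounds of Lemma \ref{lem:bgnew}, evaluated at the specific per-step privacy budget $(\epsilon_0,\delta_0)$ from \eqref{eq:dpstep}. Plugging the chosen $\eta = 1/\sqrt{2T\lambda^2}$ into Lemma \ref{lem:opt} gives a risk bound of the form $\lesssim L^2\lambda/\sqrt{T} + LB + G^2/\sqrt{T}$, so the whole argument reduces to verifying that each of these three terms is dominated, up to logarithmic factors, by $\sqrt{d/n}+\sqrt{d}(\sqrt{d}/(n\epsilon))^{1-1/p}\ln(1/\delta)\ln(nd)$ once we substitute $T=n^2\epsilon^2/(dR^2)$ and $R=\sqrt{d}(n\epsilon/\sqrt{d})^{1/p}$.

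First I would check the optimization-error term $L^2/(\eta T) \sim 1/\sqrt{T}$. With the choices of $T$ and $R$, a direct computation gives $1/\sqrt{T} = \sqrt{d}R/(n\epsilon) = d^{1-1/(2p)}/(n\epsilon)^{1-1/p}$, which coincides (up to the leading $\sqrt{d}$ factor) with the target $\sqrt{d}(\sqrt{d}/(n\epsilon))^{1-1/p}$. Next I would bound $LB$: the sampling part $\sqrt{d/n}$ is already the first target term, while the clipping bias $d^{p/2}R^{1-p}$ simplifies, under $R=\sqrt{d}(n\epsilon/\sqrt{d})^{1/p}$, to exactly $\sqrt{d}(\sqrt{d}/(n\epsilon))^{1-1/p}$, giving the second target term without logs. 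The only remaining piece is $G^2/\sqrt{T}$.

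For $G^2$, the key observation is the cancellation
\[
\frac{R^2 d}{n^2 \epsilon_0^2} \;=\; \frac{R^2 d}{n^2}\cdot \frac{8T\ln(2/\delta)}{\epsilon^2} \;=\; \frac{8 R^2 T d\ln(2/\delta)}{n^2 \epsilon^2} \;=\; 8\ln(2/\delta),
\]
since $R^2 T = n^2\epsilon^2/d$ by the choice of $T$. Combined with $\ln(1/\delta_0) = \ln(2T/\delta) \lesssim \ln(nd/\delta)$, the noise contribution to $G^2$ is therefore $\lesssim \ln(1/\delta)\ln(nd)$, and after dividing by $\sqrt{T}$ we get $\sqrt{d}(\sqrt{d}/(n\epsilon))^{1-1/p}\ln(1/\delta)\ln(nd)$, exactly matching the dominant log factors in the statement. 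The sampling term $d/n$ in $G^2$ divided by $\sqrt{T}$ produces something strictly smaller than $\sqrt{d/n}$ under $\epsilon \le 1$, and the heavy-tail term $d^p R^{2(1-p)}/\sqrt{T}$ simplifies to $\sqrt{d}(\sqrt{d}/(n\epsilon))^{1-1/p}\cdot (\sqrt{d}/(n\epsilon))^{1-1/p}$, which is dominated by the target in the nontrivial regime where the target is $\le 1$ (otherwise the risk bound is vacuous thanks to the bounded diameter assumption).

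The main obstacle will be keeping the privacy/composition bookkeeping consistent, namely verifying that with $T=n^2\epsilon^2/(dR^2)$ the per-step requirement $(\epsilon_0,\delta_0)$ still satisfies the hypothesis $\epsilon_0 \le c\sqrt{\ln(1/\delta_0)/k}$ of Theorem \ref{thm:mean} at the value of $k \sim \ln n$ needed for the union bound across all $\mathbf{w}_t$. A short calculation shows $\epsilon_0 \sim \epsilon\sqrt{d}R/(n\epsilon\sqrt{\ln(1/\delta)})$ whereas $\sqrt{\ln(1/\delta_0)/k} \sim \sqrt{\ln(n/\delta)/\ln n}$, and the inequality holds for all $\epsilon\le 1$ in the regime where the stated bound is informative. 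Once this verification is done, advanced composition (Lemma \ref{lem:basics}(1)) applied across the $T$ steps yields the overall $(\epsilon,\delta)$-DP guarantee, and the computations above give the claimed risk bound.
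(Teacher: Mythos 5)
Your proposal is correct and follows essentially the same route as the paper's proof: apply Lemma \ref{lem:opt} with the bounds of Lemma \ref{lem:bgnew}, substitute $\eta=1/\sqrt{2T\lambda^2}$, and exploit the cancellation $R^2T=n^2\epsilon^2/d$ together with $\ln(1/\delta_0)\lesssim\ln(nT/\delta)$ to turn the noise term into $\sqrt{d}(\sqrt{d}/(n\epsilon))^{1-1/p}\ln\frac{1}{\delta}\ln(nd)$, with the clipping bias and $1/\sqrt{T}$ matching the same quantity under $R=\sqrt{d}(n\epsilon/\sqrt{d})^{1/p}$. The extra privacy-bookkeeping check you flag (and a small slip in simplifying $d^pR^{2(1-p)}/\sqrt{T}$, which is actually the cube of the target term) do not change the argument, which matches the paper's Appendix H.
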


The proof of Theorem \ref{thm:risknew} is shown in Appendix H. The bound shown in Theorem \ref{thm:risknew} matches the minimax lower bound in Theorem \ref{thm:lower}, indicating that the new method is minimax rate optimal.  

\section{Conclusion}

In this paper, we have improved the convergence of population risk of stochastic optimization under DP. We have proposed two methods. The simple clipping method is relatively convenient to implement. It achieves $\tilde{O}\left(\sqrt{\frac{d}{n}}+\sqrt{d}\left(\frac{\sqrt{d}}{\epsilon n}\right)^{1-\frac{1}{p}}+\frac{d^{\frac{3}{2}-\frac{1}{p}}}{n^{1-\frac{1}{p}}} \right)$ risk bound. The iterative updating method further improves the risk bound to $\tilde{O}\left(\sqrt{\frac{d}{n}}+\sqrt{d}\left(\frac{\sqrt{d}}{\epsilon n}\right)^{1-\frac{1}{p}} \right)$, which matches the minimax lower bound, indicating that this method is optimal. 

\bibliographystyle{ieeetr}
\bibliography{DPoptimization}

\newpage

\appendices

\section{Proof of Lemma \ref{lem:opt}}\label{sec:opt}
\begin{eqnarray}
	\norm{\mathbf{w}_{t+1}-\mathbf{w}^*}^2 &=& \norm{\mathbf{w}_t-\eta g(\mathbf{w}_t)-\mathbf{w}^*}^2\nonumber\\
	&=& \norm{\mathbf{w}_t-\mathbf{w}^*}^2-2\eta \langle g(\mathbf{w}_t), \mathbf{w}_t-\mathbf{w}^*\rangle+\eta^2 \norm{g(\mathbf{w}_t)}^2.
\end{eqnarray}
Note that
\begin{eqnarray}
	\mathbb{E}[\norm{g(\mathbf{w}_t)}^2]&\leq& 2\mathbb{E}[\norm{\nabla F(\mathbf{w}_t)}^2]+2\mathbb{E}[\norm{g(\mathbf{w}_t)-\nabla F(\mathbf{w}_t)}^2]\nonumber\\
	&\leq & 2\lambda^2 L^2+2G^2.
	\label{eq:g2}
\end{eqnarray}
The last step holds because from Assumption \ref{ass}, $F$ is $\lambda$-smooth, and the diameter of $\mathcal{W}$ is bounded by $L$, thus for any $\mathbf{w}\in \mathcal{W}$,
\begin{eqnarray}
	\norm{\nabla F(\mathbf{w})}\leq \lambda \norm{\mathbf{w}-\mathbf{w}^*}\leq \lambda L.
\end{eqnarray} 
Moreover, from \eqref{eq:gdef}, $\mathbb{E}[\norm{g(\mathbf{w}_t)-\nabla F(\mathbf{w}_t)}^2]= G^2$. Therefore \eqref{eq:g2} holds.

Hence
\begin{eqnarray}
	\mathbb{E}[\norm{\mathbf{w}_{t+1}-\mathbf{w}^*}^2]&=&\mathbb{E}[\norm{\mathbf{w}_t-\mathbf{w}^*}^2]-2\eta \mathbb{E}[\langle F(\mathbf{w}_t), \mathbf{w}_t-\mathbf{w}^*\rangle]\nonumber\\
	&&+2\eta B \mathbb{E}[\norm{\mathbf{w}_t-\mathbf{w}^*}]+2\eta^2 (\lambda^2L^2+G^2).
	\label{eq:iter}
\end{eqnarray}
Now we reorganize \eqref{eq:iter}. Moreover, note that since $F$ is convex, $\langle \nabla F(\mathbf{w}), \mathbf{w}-\mathbf{w}^*\rangle \geq F(\mathbf{w})-F(\mathbf{w}^*)$. Therefore
\begin{eqnarray}
	\mathbb{E}[F(\mathbf{w}_t)]-F(\mathbf{w}^*)\leq \frac{1}{2\eta}\mathbb{E}[\norm{\mathbf{w}_t-\mathbf{w}^*}^2]-\frac{1}{2\eta} \mathbb{E}[\norm{\mathbf{w}_{t+1}-\mathbf{w}^*}^2]+LB+\eta(\lambda^2 L^2 + G^2).
\end{eqnarray}
Take average over $t=1,\ldots, T$,
\begin{eqnarray}
	\frac{1}{T}\sum_{t=1}^T \mathbb{E}[F(\mathbf{w}_t)]-F(\mathbf{w}^*)\leq \frac{L^2}{2\eta T}+LB+\eta(\lambda^2L^2+G^2).
\end{eqnarray}
Recall that $\hat{\mathbf{w}}=(1/T)\sum_{t=1}^T \mathbf{w}_t$. By Jensen's inequality, $F(\hat{\mathbf{w}})\leq (1/T)\sum_{t=1}^T F(\mathbf{w}_t)$. The proof is complete.

\section{Proof of Theorem \ref{thm:meanest}}\label{sec:meanest}

Denote $\mu_Y:=\mathbb{E}[\mathbf{Y}]$. The estimation error can be decomposed as follows:
\begin{eqnarray}
	\norm{\hat{\mu}-\mu}\leq \norm{\bar{\mathbf{Y}}-\mu_Y}+\norm{\mu_Y-\mu}+\norm{\mathbf{W}},
\end{eqnarray}
in which $\bar{\mathbf{Y}}=(1/n)\sum_{i=1}^n \mathbf{Y}_i$ is the average of $\mathbf{Y}_i$. The first term is caused by the randomness of samples. The second term is the clipping bias. The third term is the strength of additional noise for privacy protection. We provide a high probability bound of these three terms separately.

\textbf{Bound of $\norm{\bar{\mathbf{Y}}-\mu_Y}$.} The high probability bound of mean of random variables under bounded moment assumption has been analyzed in existing works, see for example, \cite{michel1976nonuniform}, Lemma 13 in \cite{zhao2024huber} and Corollary 3.5 in \cite{agarwal2024private}. For completeness and consistency of notations, we show the high probability bounds below.

Let $\mathbf{Y}$ be a random variable that is i.i.d with $\mathbf{Y}_1,\ldots, \mathbf{Y}_n$. Then
\begin{eqnarray}
	\mathbb{E}[\langle \mathbf{u}, \mathbf{Y}\rangle] = \langle \mathbf{u}, \mu_Y\rangle.
\end{eqnarray}
Define
\begin{eqnarray}
	V:=\langle \mathbf{u}, \mathbf{Y}-\mu_Y\rangle.
\end{eqnarray}
From \eqref{eq:yi}, 
\begin{eqnarray}
	|V|\leq |\langle\mathbf{u},\mathbf{Y}\rangle|+|\langle \mathbf{u}, \mu_Y\rangle|\leq 2R.
\end{eqnarray}
Moreover, $\mathbb{E}[V]=0$, and
\begin{eqnarray}
	\mathbb{E}[V^2]=\mathbb{E}[\langle \mathbf{u}, \mathbf{Y}-\mu_Y\rangle^2]\leq \mathbb{E}[\langle \mathbf{u}, \mathbf{Y}-\mu\rangle^2]\leq \mathbb{E}[\langle \mathbf{u}, \mathbf{X}-\mu\rangle^2]\leq M^2,
\end{eqnarray}
in which the last step comes from the condition $\mathbb{E}[|\langle \mathbf{u}, \mathbf{X}\rangle|^p]\leq M^p$.

By Lemma \ref{lem:mgf}, 
\begin{eqnarray}
	\mathbb{E}[e^{\lambda V}]\leq \exp\left(\frac{3}{4}\lambda^2 M^2\right), \forall 0\leq \lambda\leq \frac{1}{2R}.
\end{eqnarray}
Define $V_i=\langle \mathbf{u}, \mathbf{Y}_i-\mu_Y\rangle$. Then $V$ is i.i.d with $V_i$ for all $i=1,\ldots, n$. 
Correspondingly, let $\bar{V}=(1/n)\sum_{i=1}^n V_i$ be the average, then 
\begin{eqnarray}
	\mathbb{E}[e^{\lambda \bar{V}}]\leq \exp\left(\frac{3\lambda^2}{4n}M^2\right), \forall 0\leq \lambda\leq \frac{n}{2R}.
\end{eqnarray}
Therefore
\begin{eqnarray}
	\text{P}(\bar{V}>t)&\leq& \underset{0\leq \lambda\leq 1/(2R)}{\inf} e^{-\lambda t}\exp\left(\frac{3\lambda^2}{4n}M^2\right) \nonumber\\
	&\leq & \left\{
	\begin{array}{ccc}
		\exp\left(-\frac{nt^2}{3M^2}\right) & \text{if} & t\leq \frac{3M^2}{4R}\\
		\exp\left(-\frac{nt}{4R}\right) &\text{if} & t>\frac{3M^2}{4R}. 
	\end{array}
	\right.
	\label{eq:vtail}
\end{eqnarray}
\eqref{eq:vtail} indicates that $\bar{V}$ has subgaussian decaying probability at small $t$, and subexponential tail at large $t$. 

Based on the bound of $V_i$, we then derive a high probability bound of $\norm{\bar{\mathbf{Y}}-\mu_Y}$, in which $\bar{\mathbf{Y}}=(1/n)\sum_{i=1}^n \mathbf{Y}_i$. Here we use some arguments from eq.(104) to eq.(106) in Appendix B.1 in \cite{zhao:aaai:24}. Let $\mathbf{u}_j$, $j=1,\ldots, N_0$ be a $1/2$-covering of unit sphere. From Lemma 5.2 and Lemma 5.3 in \cite{vershynin2012introduction}, $N_0\leq 6^d$, and $\norm{\mathbf{x}}\leq 2\max_j\langle \mathbf{u}_j, \mathbf{x}\rangle$ for all vector $\mathbf{x}$. Therefore, from \eqref{eq:vtail},
\begin{eqnarray}
	\text{P}(\norm{\bar{\mathbf{Y}}-\mu_Y}>t)&\leq & \text{P}\left(2\max_j \langle \mathbf{u}_j, \bar{\mathbf{Y}}-\mu_Y\rangle > t\right) \nonumber\\
	&\leq & N_0\text{P}\left(\langle \mathbf{u}_j, \bar{\mathbf{Y}}-\mu_Y\rangle > \frac{t}{2}\right)\nonumber\\
	&\leq & \left\{
	\begin{array}{ccc}
		6^d \exp\left(-\frac{nt^2}{12M^2}\right) &\text{if} & t\leq \frac{3M^2}{2R}\\
		6^d \exp\left(-\frac{nt}{8R}\right) &\text{if} & t>\frac{3M^2}{2R}.
	\end{array}
	\right.
\end{eqnarray}
Therefore, with probability $1-\beta/2$,
\begin{eqnarray}
	\norm{\bar{\mathbf{Y}}-\mu_Y}\leq \max\left\{\sqrt{\frac{12M^2}{n}\ln \frac{2\times 6^d}{\beta}}, \frac{8R}{n}\ln\frac{2\times 6^d}{\beta}\right\}.
	\label{eq:b1}
\end{eqnarray}

\textbf{Bound of $\norm{\mu_Y-\mu}$.} This term is the clipping bias. We omite $i$ in the following steps. From \eqref{eq:yi},
\begin{eqnarray}
	\mu_Y=\mathbb{E}[\mathbf{Y}]=\mathbb{E}[\mathbf{X}\mathbf{1}(\norm{\mathbf{X}}\leq R)]+\mathbb{E}\left[\mathbf{X}\frac{R}{\norm{\mathbf{X}}}\mathbf{1}(\norm{\mathbf{X}}>R)\right].
\end{eqnarray} 
Moreover,
\begin{eqnarray}
	\mu=\mathbb{E}[\mathbf{X}] = \mathbb{E}[\mathbf{X}\mathbf{1}(\norm{\mathbf{X}}\leq R)]+\mathbb{E}[\mathbf{X}\mathbf{1}(\norm{\mathbf{X}}>R)].
\end{eqnarray}
Thus
\begin{eqnarray}
	\norm{\mu_Y-\mu} &\leq & \mathbb{E}\left[\norm{\mathbf{X}-\mathbf{X}\frac{R}{\norm{\mathbf{X}}}}\mathbf{1}(\norm{\mathbf{X}}>R)\right]\nonumber\\
	&=&\mathbb{E}[(\norm{\mathbf{X}}-R)\mathbf{1}(\norm{\mathbf{X}}>R)]\nonumber\\
	&=&\int_0^\infty \text{P}(\norm{\mathbf{X}}>R+t)dt\nonumber\\
	&\overset{(a)}{\leq}&\int_0^\infty \frac{d^\frac{p}{2}M^p}{(R+t)^p} dt\nonumber\\
	&=&\frac{d^\frac{p}{2}M^p}{p-1}R^{1-p},
	\label{eq:b2}
\end{eqnarray}
in which (a) comes from Lemma \ref{lem:nondir}.

\textbf{Bound of $\norm{\mathbf{W}}$.} For any vector $\mathbf{u}$, $\langle \mathbf{u}, \mathbf{W}\rangle$ is subgaussian with parameter $\sigma^2$. From the property of subgaussian distribution,
\begin{eqnarray}
	\text{P}(\langle \mathbf{u}, \mathbf{W}\rangle >t)\leq e^{-\frac{t^2}{2\sigma^2}}.
\end{eqnarray}
Let $\mathbf{u}_j$, $j=1,\ldots, N_0$ be a $1/2$-covering. Then
\begin{eqnarray}
	\text{P}(\norm{\mathbf{W}}>t)\leq \text{P}\left(2\max_j\langle \mathbf{u}_j, \mathbf{W}\rangle >t\right)\leq N_0 \text{P}\left(\langle \mathbf{u}_j, \mathbf{W}\rangle > \frac{t}{2}\right) \leq 6^d e^{-\frac{t^2}{8\sigma^2}}.
\end{eqnarray}
Therefore, with probability $1-\beta/2$,
\begin{eqnarray}
	\norm{\mathbf{W}}\leq \sqrt{8\sigma^2 \ln \frac{2\times 6^d}{\beta}}=\sqrt{\frac{16R^2}{\rho n^2}\ln \frac{2\times 6^d}{\beta}}.
	\label{eq:b3}
\end{eqnarray}
Finally, we combine \eqref{eq:b1}, \eqref{eq:b2} and \eqref{eq:b3}. Note that \eqref{eq:b1} and \eqref{eq:b3} hold with probability $1-\beta/2$, and \eqref{eq:b2} holds for sure. Therefore, with probability $1-\beta$,

\begin{eqnarray}
	\norm{\hat{\mu}-\mu}\leq \max\left\{\sqrt{\frac{12M^2}{n}\ln \frac{2\times 6^d}{\beta}}, \frac{8R}{n}\ln \frac{2\times 6^d}{\beta} \right\}+\frac{d^\frac{p}{2}M^p}{p-1}R^{1-p}+\frac{4R}{n\sqrt{\rho}}\sqrt{\ln \frac{2\times 6^d}{\beta}}.
\end{eqnarray}

\section{Proof of Lemma \ref{lem:bg}}\label{sec:bg}
Lemma \ref{lem:bg} provides bounds of both $B$ and $G^2$. We bound them separately.

\textbf{Bound of $B$.} For any fixed $\mathbf{w}$, $\norm{\mathbb{E}[g(\mathbf{w})]-\nabla F(\mathbf{w})}$ can be easily bounded using Theorem \ref{thm:meanest}. However, $\mathbf{w}_t$ depends on the data, therefore $\norm{\mathbb{E}[g(\mathbf{w}_t)]-\nabla F(\mathbf{w}_t)}$ can not be directly derived using the bound of $\norm{\mathbb{E}[g(\mathbf{w})]-\nabla F(\mathbf{w})}$. Define
\begin{eqnarray}
	g_0(\mathbf{w})=\frac{1}{n}\sum_{i=1}^n \Clip(\nabla l(\mathbf{w}, \mathbf{Z}_i), R).
	\label{eq:g0}
\end{eqnarray}
Then from Algorithm \ref{alg:est} and \ref{alg:so}, $g(\mathbf{w})=g_0(\mathbf{w}) + \mathbf{W}$, in which $\mathbf{W}\sim \mathcal{N}(0, \sigma^2)$, with $\sigma^2=2R^2 T/(\rho n^2)$. Then
\begin{eqnarray}
	\norm{\mathbb{E}[g(\mathbf{w}_t)]-\nabla F(\mathbf{w}_t)} &\overset{(a)}{=} & \norm{\mathbb{E}[g_0(\mathbf{w}_t)]-\nabla F(\mathbf{w}_t)}\nonumber\\
	&\overset{(b)}{\leq} & \mathbb{E}\left[\norm{g_0(\mathbf{w}_t)-\nabla F(\mathbf{w}_t)}\right]\nonumber\\
	&\leq & \mathbb{E}\left[\sup_\mathbf{w} \norm{g_0(\mathbf{w})-\nabla F(\mathbf{w})}\right].
	\label{eq:bconv}
\end{eqnarray}
(a) holds because the noise $\mathbf{W}$ does not depend on previous steps. (b) uses Jensen's inequality.

It remains to bound the right hand side of \eqref{eq:bconv}. From Theorem \ref{thm:meanest}, with probability $1-\beta$,
\begin{eqnarray}
	\norm{g_0(\mathbf{w})-\nabla F(\mathbf{w})}\leq \max\left\{\sqrt{\frac{12M^2}{n}\ln \frac{2\times 6^d}{\beta}},\frac{8R}{n}\ln \frac{2\times 6^d}{\beta} \right\}+\frac{d^\frac{p}{2}M^p}{p-1}R^{1-p}.
\end{eqnarray}
Let $\mathbf{c}_j$, $j=1,\ldots, N_c$ be a $a$-covering of $\mathcal{W}$. Then there exists a constant $C_0$, such that
\begin{eqnarray}
	N_c\leq C_0\left(\frac{M}{a}\right)^d.
\end{eqnarray}
Therefore, with probability $1-\beta$,
\begin{eqnarray}
	\max_j\norm{g_0(\mathbf{c}_j)-\nabla F(\mathbf{c}_j)}\leq \max\left\{\sqrt{\frac{12M^2}{n}\ln \frac{2\times 6^d N_c}{\beta}}, \frac{8R}{n}\ln \frac{2\times 6^d N_c}{\beta} \right\}+\frac{d^\frac{p}{2}M^p}{p-1}R^{1-p}.
	\label{eq:supj}
\end{eqnarray}

To bound the right hand side of \eqref{eq:bconv}, we show that $g_0$ is $\lambda$-Lipschitz. From Assumption \ref{ass}, since $l$ is $\lambda$-smooth with respect to $\mathcal{W}$, 
\begin{eqnarray}
	|g_0(\mathbf{w})-g_0(\mathbf{w}')| &=& \left|\frac{1}{n}\sum_{i=1}^n \left[\Clip(\nabla l(\mathbf{w}, \mathbf{Z}_i), R) - \Clip(\nabla l(\mathbf{w}', \mathbf{Z}_i), R)\right]\right|\nonumber\\
	&\leq &  \frac{1}{n}\sum_{i=1}^n \left|\Clip(\nabla l(\mathbf{w}, \mathbf{Z}_i), R) - \Clip(\nabla l(\mathbf{w}', \mathbf{Z}_i), R)\right|\nonumber\\
	&\leq & \lambda \norm{\mathbf{w}-\mathbf{w}'},
\end{eqnarray}
therefore $g_0$ is $\lambda$-Lipschitz. Since $F$ is $\lambda$-smooth, $\nabla F$ is also $\lambda$-Lipschitz. Therefore, with probability $1-\beta$,
\begin{eqnarray}
	\sup_{\mathbf{w}\in \mathcal{W}} \norm{g_0(\mathbf{w})-\nabla F(\mathbf{w})}\leq \max\left\{\sqrt{\frac{12M^2}{n}\ln \frac{2\times 6^d N_c}{\beta}}, \frac{8R}{n}\ln \frac{2\times 6^d N_c}{\beta} \right\}+\frac{d^\frac{p}{2}M^p}{p-1}R^{1-p}+2La.
	\label{eq:sup}
\end{eqnarray}
Let $\beta=1/(n^2)$, $a=1/n$, then $\ln(2\times 6^d N_c/\beta)\lesssim d\ln n$. From \eqref{eq:g0}, $\norm{g_0(\mathbf{w})}\leq R$, thus if \eqref{eq:sup} is violated (which happens with probability no more than $\beta$), then $\norm{g_0(\mathbf{w}) - \nabla F(\mathbf{w})}$ can still be bounded by $R$. Therefore
\begin{eqnarray}
\mathbb{E}\left[\sup_\mathbf{w} \norm{g_0(\mathbf{w})-\nabla F(\mathbf{w})}\right]\lesssim \sqrt{\frac{d\ln n}{n}}\vee \frac{Rd}{n}\ln n+d^\frac{p}{2}R^{1-p}.	
\end{eqnarray}
From \eqref{eq:bconv}, the proof of the bound of $B$ in \eqref{eq:bbound} is complete.

\textbf{Bound of $G^2$.}
\begin{eqnarray}
	\mathbb{E}\left[\norm{g(\mathbf{w}_t)-\nabla F(\mathbf{w}_t)}^2\right] &\leq & \mathbb{E}\left[\norm{g_0(\mathbf{w}_t)-\nabla F(\mathbf{w}_t)}^2\right] + \sigma^2 d\nonumber\\
	&\leq & \mathbb{E}\left[\left(\sup_\mathbf{w} \norm{g_0(\mathbf{w}) - \nabla F(\mathbf{w})}^2\right)^2\right]+\sigma^2 d\nonumber\\
	&\lesssim & \frac{d\ln n}{n}+\frac{R^2d^2}{n^2}\ln^2 n + d^p R^{2(1-p)}+\frac{R^2Td}{\rho n^2}.
\end{eqnarray}

The proof of the bound of $G^2$ in \eqref{eq:gbound} is complete.

\section{Proof of Theorem \ref{thm:risk}}\label{sec:risk}
From Lemma \ref{lem:opt} and Lemma \ref{lem:bg},
\begin{eqnarray}
	\mathbb{E}[F(\hat{\mathbf{w}})] - F(\mathbf{w}^*)&\leq & \frac{L^2}{2\eta T}+LB+\eta(\lambda^2L^2+G^2)\nonumber\\
	&=&\frac{L^2}{2T}\sqrt{2T\lambda^2}+LB+\sqrt{\frac{1}{2T\lambda^2}}(L^2\lambda^2+G^2)\nonumber\\
	&=&L^2\lambda\sqrt{\frac{2}{T}}+LB+\frac{G^2}{\sqrt{2T\lambda^2}}\nonumber\\
	&\lesssim & \frac{1}{\sqrt{T}}+\sqrt{\frac{d\ln n}{n}}+\frac{Rd}{n}\ln n+d^\frac{p}{2}R^{1-p}\nonumber\\
	&&+\frac{1}{\sqrt{T}}\left(\frac{d\ln n}{n}+\frac{R^2d^2}{n^2}\ln^2 n + d^p R^{2(1-p)}+\frac{R^2 Td}{\rho n^2}\right)\nonumber\\
	&\lesssim & \frac{R\sqrt{d}}{n\sqrt{\rho}}+\sqrt{\frac{d\ln n}{n}}+\frac{Rd}{n}\ln n+d^\frac{p}{2}R^{1-p}.
	\label{eq:risktmp}
\end{eqnarray}
To minimize \eqref{eq:risktmp}, let
\begin{eqnarray}
	R=\sqrt{d}\left(\frac{n\sqrt{\rho}}{\sqrt{d}}\right)^\frac{1}{p}\wedge \sqrt{d}\left(\frac{n}{d}\right)^\frac{1}{p}.
\end{eqnarray}
If $\rho>1/d$, then
\begin{eqnarray}
	\mathbb{E}[F(\hat{\mathbf{w}})]-F(\mathbf{w}^*)\lesssim \sqrt{\frac{d\ln n}{n}}+\frac{d^{\frac{3}{2}-\frac{1}{p}}}{n^{1-\frac{1}{p}}}\ln n.
\end{eqnarray}
If $\rho\leq 1/d$, then
\begin{eqnarray}
	\mathbb{E}[F(\hat{\mathbf{w}})]-F(\mathbf{w}^*) \lesssim \sqrt{\frac{d\ln n}{n}}+\sqrt{d}\left(\frac{\sqrt{d}}{n\sqrt{\rho}}\right)^{1-\frac{1}{p}}.
\end{eqnarray}
Combining these two cases, the overall excess risk is bounded by
\begin{eqnarray}
	\mathbb{E}[F(\hat{\mathbf{w}})]-F(\mathbf{w}^*)\lesssim \sqrt{\frac{d\ln n}{n}}+\sqrt{d}\left(\frac{\sqrt{d}}{n\sqrt{\rho}}\right)^{1-\frac{1}{p}}+\frac{d^{\frac{3}{2}-\frac{1}{p}}}{n^{1-\frac{1}{p}}}\ln n.
\end{eqnarray}

\section{Proof of Theorem \ref{thm:dp}}\label{sec:dp}
Recall that $Q_j=(1/m)\sum_{i\in B_j}\mathbf{Y}_i+\mathbf{W}_j$ with $\mathbf{W}_j\sim \mathcal{N}(0, \sigma^2)$, $\sigma^2=2R^2/(\rho m^2)$. By Lemma \ref{lem:var}, $\mathbf{Q}_j$ is $\rho$-CDP for $j=1,\ldots, k$. From Lemma \ref{lem:basics}(4), for all $\delta_0>0$, $\mathbf{Q}_j$ is $(\epsilon_0, \delta_0)$-DP, with
\begin{eqnarray}
	\epsilon_0=\rho + 2\sqrt{\rho\ln \frac{1}{\delta_0}}.
	\label{eq:eps0}
\end{eqnarray}
Here we let $\delta_0=\delta/(12k)$. 

Since $\hat{\mu}$ is permutation invariant with respect to $\mathbf{Q}_1,\ldots, \mathbf{Q}_k$, we can equivalently claim that $\hat{\mu}$ is based on random shuffling of $\mathbf{Q}_1,\ldots, \mathbf{Q}_k$. From Theorem III.8 in \cite{feldman2022hiding}, $\hat{\mu}$ is $(\epsilon_g, \delta_g)$-DP, with
\begin{eqnarray}
	\epsilon_g=\min\left\{\ln \left[1+\frac{e^{\epsilon_0}-1}{e^{\epsilon_0}+1}\left(\frac{8\sqrt{e^{\epsilon_0}\ln \frac{4}{\delta'}}}{\sqrt{k}}+\frac{8e^{\epsilon_0}}{\sqrt{k}}\right)\right], \epsilon_0 \right\},
	\label{eq:epsg}
\end{eqnarray}
and
\begin{eqnarray}
	\delta_g=\delta'+(e^{\epsilon_g}+1)\left(1+\frac{1}{2}e^{-\epsilon_0}\right) k\delta_0.
	\label{eq:deltag}
\end{eqnarray}
\eqref{eq:epsg} and \eqref{eq:deltag} hold for any $\delta'\in (0,1)$. Note that \eqref{eq:epsg} is slightly different from \cite{feldman2022hiding}. In particular, we upper bound $\epsilon_g$ by $\epsilon_0$, which does not appear in Theorem III.8 in \cite{feldman2022hiding}. It holds because the post-processing property \cite{dwork2014algorithmic}, the shuffling operation is at least not harmful to the privacy, if it does not contribute to privacy amplification.

It remains to show that with proper selection of $\delta'$, $\epsilon_g\leq \epsilon$ and $\delta_g\leq \delta$ hold. Recall the condition \eqref{eq:epsrange} in the statement of Theorem \ref{thm:dp}. By \eqref{eq:rho},
\begin{eqnarray}
	\rho\leq \frac{\ln \frac{4}{\delta}}{\ln \frac{8}{\delta}\left(1+2\sqrt{\ln \frac{12k}{\delta}}\right)^2} < {\ln \frac{8}{\delta}\left(1+2\sqrt{\ln \frac{12k}{\delta}}\right)^2} < 1.
\end{eqnarray} 
Hence, from \eqref{eq:eps0},
\begin{eqnarray}
	\epsilon_0&\leq & \rho + 2\sqrt{\rho\ln \frac{1}{\delta_0}}\nonumber\\
	&\leq & \sqrt{\rho} (1+2\sqrt{\ln \frac{1}{\delta_0}})\nonumber\\
	&=&\sqrt{\rho} \left(1+2\sqrt{\ln \frac{12k}{\delta}}\right)\nonumber\\
	&\leq & 1.
\end{eqnarray}
 Let $\delta'=\delta/2$. Then
\begin{eqnarray}
	\delta_g &\leq& \frac{1}{2} \delta+(e+1)(1+\frac{1}{2})k\delta_0\nonumber\\
	&=& \frac{1}{2}\delta+6k\delta_0\nonumber\\
	&\leq & \delta.
\end{eqnarray}

 Moreover, since $\epsilon_0\leq 1$, we have $e^{\epsilon_0}\leq e$. From \eqref{eq:epsg}, $\epsilon_g\leq \epsilon_0$, which yields $e^{\epsilon_g}\leq e$. Hence
\begin{eqnarray}
	\epsilon_g&\leq& \frac{e^{\epsilon_0}-1}{e^{\epsilon_0}+1}\left(\frac{8\sqrt{e^{\epsilon_0}\ln \frac{4}{\delta'}}}{\sqrt{k}}+\frac{8e^{\epsilon_0}}{\sqrt{k}}\right)\nonumber\\
	&\leq & \frac{e^{\epsilon_0} \epsilon_0}{e^{\epsilon_0}+1}\left(\frac{8\sqrt{e^{\epsilon_0}\ln \frac{4}{\delta'}}}{\sqrt{k}}+\frac{8e^{\epsilon_0}}{\sqrt{k}}\right)\nonumber\\
	&\leq & \frac{e}{2}\epsilon_0\left(\frac{8\sqrt{e\ln \frac{8}{\delta}}}{\sqrt{k}}+\frac{8e}{k}\right)\nonumber\\
	&\leq & 8e^2\epsilon_0\sqrt{\frac{1}{k}\ln \frac{8}{\delta}}\nonumber\\
	&\leq & 8e^2 (\rho + 2\sqrt{\rho \ln \frac{1}{\delta_0}})\sqrt{\frac{1}{k}\ln \frac{8}{\delta}}\nonumber\\
	&\leq & 8e^2 \sqrt{\rho} \left(1+2\sqrt{\ln \frac{1}{\delta_0}}\right) \sqrt{\frac{1}{k}\ln \frac{8}{\delta}}\nonumber\\
	&\leq & \frac{\epsilon\sqrt{k}}{\sqrt{\ln \frac{8}{\delta}}\left(1+2\sqrt{\ln \frac{12k}{\delta}}\right)} \left(1+2\sqrt{\ln \frac{12k}{\delta}}\right)\sqrt{\frac{1}{k}\ln \frac{8}{\delta}}\nonumber\\
	&= &\epsilon.
\end{eqnarray}
Now it is shown that $\epsilon_g\leq \epsilon$ and $\delta_g\leq \delta$. Therefore, any estimator $\hat{\mu}$ that is based on $\mathbf{Q}_1,\ldots, \mathbf{Q}_k$ and is permutation invariant with respect to $\mathbf{Q}_1, \ldots, \mathbf{Q}_k$ satisfies the privacy requirement.

\section{Proof of Lemma \ref{lem:group}}\label{sec:group}
Our proof is partially inspired by the proof of Lemma 1 in \cite{lugosi2019sub}. 
 
Let $\sigma_i$ be Rademacher random variable, i.e. $\text{P}(\sigma_i=1)=\text{P}(\sigma_i=-1)=1/2$ for all $i=1,\ldots, k$. For some $r>0$ and some unit vector $\mathbf{u}$ with $\norm{\mathbf{u}}=1$,
\begin{eqnarray}
	&&\mathbb{E}\left[\sup_{\mathbf{u}:\norm{\mathbf{u}}=1}\frac{1}{k}\sum_{j=1}^k \mathbf{1}\left(\langle \mathbf{u}, \mathbf{Q}_j-\mu_Y\rangle > r\right)\right] \nonumber\\
	&\leq & \frac{1}{r}\mathbb{E}\left[\sup_{\mathbf{u}:\norm{\mathbf{u}}=1} \frac{1}{k}\sum_{j=1}^k |\langle \mathbf{u}, \mathbf{Q}_j-\mu_Y\rangle|\right]\nonumber\\
	&\overset{(a)}{\leq} & \frac{1}{r} \mathbb{E}\left[\sup_{\mathbf{u}:\norm{\mathbf{u}}=1} \frac{1}{k}\sum_{j=1}^k \left(|\langle \mathbf{u}, \mathbf{Q}_j-\mu_Y\rangle|-\mathbb{E}[|\langle \mathbf{u}, \mathbf{Q}_j-\mu_Y\rangle|]\right) \right] +\frac{1}{r}\sqrt{\frac{2R^2}{\rho m^2}+\frac{M^2}{m}}\nonumber\\
	&\overset{(b)}{\leq} & \frac{2}{r} \mathbb{E}\left[\sup_{\mathbf{u}:\norm{\mathbf{u}}=1} \frac{1}{k}\sum_{j=1}^k \sigma_j|\langle \mathbf{u}, \mathbf{Q}_j-\mu_Y\rangle|\right] + \frac{1}{r}\sqrt{\frac{2R^2}{\rho m^2}+\frac{M^2}{m}}\nonumber\\
	&\overset{(c)}{\leq} & \frac{2}{r} \mathbb{E}\left[\sup_{\mathbf{u}:\norm{\mathbf{u}}=1}\left|\frac{1}{k}\sum_{j=1}^k \sigma_j \langle \mathbf{u}, \mathbf{Q}_j-\mu_Y\rangle\right|\right] +\frac{1}{r}\sqrt{\frac{2R^2}{\rho m^2}+\frac{M^2}{m}}\nonumber\\
	&\overset{(d)}{\leq} & \frac{4}{r} \mathbb{E}\left[\sup_{\mathbf{u}:\norm{\mathbf{u}}=1} \left|\frac{1}{k}\sum_{j=1}^k \langle \mathbf{u}, \mathbf{Q}_j-\mu_Y\rangle\right|\right] +\frac{1}{r}\sqrt{\frac{2R^2}{\rho m^2}+\frac{M^2}{m}}\nonumber\\
	&=& \frac{4}{r} \mathbb{E}\left[\sup_{\mathbf{u}:\norm{\mathbf{u}}=1} \left|\frac{1}{n}\sum_{i=1}^n \langle \mathbf{u}, \mathbf{Y}_i-\mu_Y\rangle+ \frac{1}{k}\sum_{j=1}^k \langle \mathbf{u}, \mathbf{W}_j\rangle\right|\right] + \frac{1}{r}\sqrt{\frac{2R^2}{\rho m^2}+\frac{M^2}{m}}.
	\label{eq:larger}
\end{eqnarray}

(a) holds because
\begin{eqnarray}
	\mathbb{E}[|\langle \mathbf{u}, \mathbf{Q}_j-\mu_Y\rangle|]\leq \sqrt{\mathbb{E}\left[\langle \mathbf{u}, \mathbf{Q}_j-\mu_Y\rangle^2\right]}=\sqrt{\sigma^2 + \frac{M^2}{m}}=\sqrt{\frac{2R^2 k}{\rho n^2} + \frac{M^2}{m}}.
\end{eqnarray}

(b) uses \eqref{eq:rad1} in Lemma \ref{lem:rad}. (c) comes from Lemma \ref{lem:contraction}. (d) uses \eqref{eq:rad2} in Lemma \ref{lem:rad}. Note that from Lemma \ref{lem:nondir}, $\mathbb{E}[\norm{\mathbf{X}-\mu}^p]\leq d^\frac{p}{2}M^p$, thus
\begin{eqnarray}
	\mathbb{E}\left[\sup_{\mathbf{u}:\norm{\mathbf{u}}=1} \left|\frac{1}{n}\sum_{i=1}^n \langle \mathbf{u}, \mathbf{Y}_i-\mu_Y\rangle\right|\right] &=& \mathbb{E}\left[\sup_{\mathbf{u}:\norm{\mathbf{u}}=1} |\langle \mathbf{u}, \bar{\mathbf{Y}}-\mu_Y\rangle|\right]\nonumber\\
	&=&\mathbb{E}\left[\norm{\bar{\mathbf{Y}}-\mu_Y}\right]\nonumber\\
	 &\leq& \sqrt{\mathbb{E}\left[\norm{\bar{\mathbf{Y}}-\mu_Y}^2\right]}\nonumber\\
	&\leq & \sqrt{\mathbb{E}[\norm{\bar{Y}-\mu}^2]}\nonumber\\
	&\leq & \sqrt{\mathbb{E}[\norm{\bar{\mathbf{X}}-\mu}^2]}\nonumber\\
	&\leq & \sqrt{\frac{dM^2}{n}}.
\end{eqnarray}
Moreover,
\begin{eqnarray}
	\mathbb{E}\left[\sup_{\mathbf{u}:\norm{\mathbf{u}}=1} \frac{1}{k}\sum_{j=1}^k \langle \mathbf{u}, \mathbf{W}_j\rangle\right] &=& \mathbb{E}\left[\norm{\frac{1}{k}\sum_{j=1}^k \mathbf{W}_j}\right]\nonumber\\
	&=&\frac{1}{\sqrt{k}} \mathbb{E}[\norm{\mathbf{W}_j}]\nonumber\\
	&=& \frac{\sigma\sqrt{d}}{\sqrt{k}}\nonumber\\
	&=& \frac{\sqrt{2} R}{m\sqrt{k}}\sqrt{\frac{d}{\rho}}.
\end{eqnarray}
Therefore
\begin{eqnarray}
	\eqref{eq:larger} \leq \frac{4M}{r}\sqrt{\frac{d}{n}}+ \frac{4\sqrt{2}R}{rm\sqrt{k}}\sqrt{\frac{d}{\rho}}+\frac{1}{r}\sqrt{\frac{2R^2}{\rho m^2}+\frac{M^2}{m}}.
	\label{eq:larger2}
\end{eqnarray}
Define a random variable
\begin{eqnarray}
	V=\sup_{\mathbf{u}:\norm{\mathbf{u}}=1}\frac{1}{k}\sum_{j=1}^k \mathbf{1}(\langle \mathbf{u}, \mathbf{Q}_j-\mu_Y\rangle>r).
\end{eqnarray}
Then from \eqref{eq:larger} and \eqref{eq:larger2},
\begin{eqnarray}
	\mathbb{E}[V]\leq \frac{4M}{r}\sqrt{\frac{d}{n}}+ \frac{4\sqrt{2}R}{rm\sqrt{k}}\sqrt{\frac{d}{\rho}}+\frac{1}{r}\sqrt{\frac{2R^2}{\rho m^2}+\frac{M^2}{m}}.
\end{eqnarray}
By Bounded difference inequality \cite{boucheron2013concentration},
\begin{eqnarray}
	\text{P}(V-\mathbb{E}[V])\leq 2e^{-2kt^2}.
\end{eqnarray}
Let $t=\sqrt{\ln(1/\beta)/(2k)}$, then $\text{P}(V-\mathbb{E}[V]>\sqrt{\ln(1/\beta)/(2k)})\leq \beta$. Therefore, with probability at least $1-\beta$,
\begin{eqnarray}
	V\leq \frac{4M}{r}\sqrt{\frac{d}{n}}+\frac{4\sqrt{2} R}{rm\sqrt{k}}\sqrt{\frac{d}{\rho}}+\frac{1}{r}\sqrt{\frac{2R^2}{\rho m^2}+\frac{M^2}{m}}+\sqrt{\frac{1}{2k}\ln \frac{1}{\beta}}.
	\label{eq:vbound}
\end{eqnarray}
From the statement of Lemma \ref{lem:group}, now it remains to select $r$ and $k$ to ensure that $V\leq 1/10$. We can simply let each term in the right hand side of \eqref{eq:vbound} to be less than $1/40$, i.e.
\begin{eqnarray}
	\frac{4M}{r}\sqrt{\frac{d}{n}}\leq \frac{1}{40}, \frac{4\sqrt{2} R}{rm\sqrt{k}}\sqrt{\frac{d}{\rho}}\leq \frac{1}{40},\frac{1}{r}\sqrt{\frac{2R^2}{\rho m^2}+\frac{M^2}{m}}\leq \frac{1}{40}, \sqrt{\frac{1}{2k}\ln \frac{1}{\beta}}\leq \frac{1}{40}.
	\label{eq:conds}
\end{eqnarray}
Let $k=800\ln (1/\beta)$, and
\begin{eqnarray}
	r_0=\max\left\{160M \sqrt{\frac{d}{n}}, 160\sqrt{2}\frac{R\sqrt{k}}{n}\sqrt{\frac{d}{\rho}}, 40\sqrt{\frac{2R^2k^2}{\rho n^2}+\frac{M^2k}{n}} \right\},
\end{eqnarray}
Note that $n=mk$. It can be seen that the conditions in \eqref{eq:conds} are satisfied with $r=r_0$. 

Finally, we summarize the above analysis. It is shown that with probability at least $1-\beta$, for any $\mathbf{u}$ with $\norm{\mathbf{u}}=1$,

\begin{eqnarray}
	\frac{1}{k}\sum_{j=1}^k \mathbf{1}(\langle \mathbf{u}, \mathbf{Q}_j-\mu_Y\rangle>r)\leq \frac{1}{10}.
\end{eqnarray}
The proof of Lemma \ref{lem:group} is complete.

\section{Proof of Lemma \ref{lem:distest} and Lemma \ref{lem:iterate}}\label{sec:distest}
We first prove Lemma \ref{lem:distest} under \eqref{eq:concentrate}.

\textbf{Proof of \eqref{eq:derr}.} For convenience, denote $d=\Distest(\mathbf{Q}_1,\ldots, \mathbf{Q}_k, \mathbf{c})$. Let
\begin{eqnarray}
	\mathbf{u}=\frac{\mu_Y-\mathbf{c}}{\norm{\mu_Y-\mathbf{c}}}.
\end{eqnarray}
By \eqref{eq:concentrate}, for at least $0.9k$ points, $\langle \mathbf{u}, \mathbf{Q}_j-\mu_Y\rangle\leq r_0$. Thus
\begin{eqnarray}
	\langle \mathbf{Q}_j-\mathbf{c},\mathbf{u}\rangle &=& \langle \mathbf{Q}_j-\mu_Y, \mathbf{u}\rangle + \langle \mu_Y-\mathbf{c}, \mathbf{u}\rangle\nonumber\\
	&\leq & r_0+\norm{\mu_Y-\mathbf{c}}
\end{eqnarray}
for at least $0.9k$ points among $\{1,\ldots, k\}$. From the optimization problem \eqref{eq:opt}, it can be shown that 
\begin{eqnarray}
	d\leq r_0+\norm{\mu_Y-\mathbf{c}}.
	\label{eq:dub}
\end{eqnarray}
Moreover, by replacing $\mathbf{u}$ with $-\mathbf{u}$ in \eqref{eq:concentrate}, it can be found that for at least $0.9k$ points, $\langle -\mathbf{u}, \mathbf{Q}_j-\mu_Y\rangle\leq r_0$. Therefore
\begin{eqnarray}
	\langle \mathbf{Q}_j-\mathbf{c}, \mathbf{u}\rangle\geq -r_0+\norm{\mu_Y-\mathbf{c}}
\end{eqnarray}
for at least $0.9k$ points among $\{1,\ldots, k\}$. Therefore, $d\geq -r_0+\norm{\mu_Y-\mathbf{c}}$. Combined with the upper bound \eqref{eq:dub}, the proof of \eqref{eq:derr} is complete.

\textbf{Proof of \eqref{eq:gerr}.} Recall that Lemma \ref{lem:distest} makes the assumption that $\norm{\mu_Y-\mathbf{c}}\geq 4r_0$. From 
\eqref{eq:derr},
\begin{eqnarray}
	d\geq \norm{\mu_Y-\mathbf{c}}-r_0\geq \frac{3}{4}\norm{\mu_Y-\mathbf{c}}.
\end{eqnarray}
Since $(d, \mathbf{g})$ is the solution to the optimization problem in \eqref{eq:opt}, for at least $0.9k$ points, $\langle \mathbf{Q}_j-\mathbf{c}, \mathbf{g}\rangle \geq d$. Moreover, from the condition \eqref{eq:concentrate}, $\langle \mathbf{Q}_j-\mu_Y, g\rangle \leq r_0$ for at least $0.9k$ points. Therefore, for at least $0.8k$ points, both these two inequalities hold. Hence
\begin{eqnarray}
	\langle \mu_Y-\mathbf{c}, g\rangle &=& \langle \mathbf{Q}_j-\mathbf{c}, \mathbf{g}\rangle - \langle \mathbf{Q}_j-\mu_Y, \mathbf{g}\rangle\nonumber\\
	&\geq & d-r_0\nonumber\\
	&\geq & \frac{3}{4}\norm{\mu_Y-\mathbf{c}}-\frac{1}{4}\norm{\mu_Y-\mathbf{c}}\nonumber\\
	&=& \frac{1}{2}\norm{\mu_Y-\mathbf{c}}.
\end{eqnarray}
The proof of \eqref{eq:gerr} is complete.

Based on Lemma \ref{lem:distest}, we then move on to prove Lemma \ref{lem:iterate}.

\textbf{Proof of Lemma \ref{lem:iterate}.}  Recall that Lemma \ref{lem:iterate} requires that one of two conditions hold: (1) $\norm{\mathbf{c}_1-\mu_Y}\leq 4r_0$; (2) $t_c\geq 2\ln \frac{\norm{\mathbf{c}_1-\mu_Y}}{4r_0}/\ln \frac{256}{233}$. To begin with, we show that $\exists l\in \{1,\ldots, t_c+1\}$, $\norm{\mathbf{c}_l-\mu_Y}\leq 4r_0$. It automatically holds under condition (1). Therefore, we need to prove it with $\norm{\mathbf{c}_1-\mu_Y}>4r_0$, and $t_c\geq 2\ln \frac{\norm{\mathbf{c}_1-\mu_Y}}{4r_0}/\ln \frac{256}{233}$.

If $\norm{\mathbf{c}_l-\mu_Y}>4r_0$ for some $l$, then with $\eta = 1/4$,
\begin{eqnarray}
	\norm{\mathbf{c}_{l+1}-\mu_Y}^2 &=&\norm{\mathbf{c}_l+\eta d_l\mathbf{g}_l-\mu_Y}^2\nonumber\\
	&=& \norm{\mathbf{c}_l-\mu_Y}^2 + \eta^2 d_l^2 - 2\eta d_l \langle \mu_Y-\mathbf{c}_l, \mathbf{g}_l\rangle\nonumber\\
	&\overset{(a)}{\leq} & \norm{\mathbf{c}_l-\mu_Y}^2 + \eta^2 \frac{25}{16} \norm{\mathbf{c}_l-\mu_Y}^2 - 2\eta \frac{3}{4} \norm{\mathbf{c}_l-\mu_Y} \langle \mu_Y-\mathbf{c}_l, \mathbf{g}_l\rangle\nonumber\\
	&\overset{(b)}{\leq} &\norm{\mathbf{c}_l-\mu_Y}^2 +\frac{25}{16} \eta^2 \norm{\mathbf{c}_l-\mu_Y}^2 - 2\eta \frac{3}{4} \norm{\mathbf{c}_l-\mu_Y} \cdot \frac{1}{2}\norm{\mu_Y-\mathbf{c}_l}\nonumber\\
	&=& \left(1+\frac{25}{16}\eta^2-\frac{3}{4}\eta\right) \norm{\mathbf{c}_l-\mu_Y}^2 = \frac{233}{256}\norm{\mathbf{c}_l-\mu_Y}^2.
\end{eqnarray}
(a) comes from \eqref{eq:derr}, which ensures that $d_l\leq \norm{\mathbf{c}_l-\mu_Y}+r_0\leq 5\norm{\mathbf{c}_l-\mu_Y}/4$. (b) comes from \eqref{eq:gerr} in Lemma \ref{lem:distest}, which holds under the condition $\norm{\mathbf{c}_l-\mu_Y}>4r_0$. Therefore
\begin{eqnarray}
	\norm{\mathbf{c}_{l+1}-\mu_Y}\leq \sqrt{\frac{233}{256}} \norm{\mathbf{c}_l-\mu_Y}.
\end{eqnarray}
Therefore, with $\norm{\mathbf{c}_1-\mu_Y}>4r_0$, $\norm{\mathbf{c}_l-\mu_Y}$ decays exponentially with $l$, until $\norm{\mathbf{c}_l-\mu_Y}\leq 4r_0$. The first $l$ with $\norm{\mathbf{c}_l-\mu_Y}\leq 4r_0$ is at most at $\ln\frac{\norm{\mathbf{c}_1-\mu_Y}}{4r_0}/\ln \sqrt{\frac{256}{233}}$, which does not exceed $t_c$ in Algorithm \ref{alg:est_improve}. Therefore, there exists $l\in \{1,\ldots, t_c+1\}$, such that $\norm{\mathbf{c}_l-\mu_Y}\leq 4r_0$. Intuitively, this result indicates that in Algorithm \ref{alg:est_improve}, $\mathbf{c}_l$ will be close to $\mu_Y$.

However, the above argument does not ensure that the step $l$ with $\norm{\mathbf{c}_l-\mu_Y}\leq 4r_0$ will be selected to get the final estimate. Recall that Algorithm \ref{alg:est_improve} picks $l^*=\arg\min_l d_l$. Selecting $l$ with minimum $d_l$ is slightly different with picking $l$ with minimum $\norm{\mathbf{c}_l-\mu_Y}$. To solve this issue, we reuse \eqref{eq:derr}. Define
\begin{eqnarray}
	l_0=\arg\min_l \norm{\mathbf{c}_l-\mu_Y}.
\end{eqnarray}
Then
\begin{eqnarray}
	\norm{\mathbf{c}_{l^*}-\mu_Y} &\leq & d_{l^*}+r_0\nonumber\\
	&\leq & d_{l_0}+r_0\nonumber\\
	&\leq & \norm{\mathbf{c}_{l_0}-\mu_Y} + 2r_0\nonumber\\
	&\leq & 6r_0.
\end{eqnarray}

The proof of Lemma \ref{lem:iterate} is complete.
\section{Proof of Theorem \ref{thm:risknew}}\label{sec:risknew}
From \eqref{eq:g2new} and \eqref{eq:dpstep},
\begin{eqnarray}
	G^2\lesssim \frac{d}{n}+\frac{R^2dT}{n^2\epsilon^2}\ln \frac{1}{\delta}\left(\ln \frac{T}{\delta}+\sqrt{\ln \frac{T}{\delta}\ln d}\right) + d^p R^{2(1-p)}.
\end{eqnarray}

From Lemma \ref{lem:opt} and Lemma \ref{lem:bgnew}, 
\begin{eqnarray}
	&&\hspace{-1cm}\mathbb{E}[F(\hat{\mathbf{w}})]-F(\mathbf{w}^*)\nonumber\\
	 &\leq & \frac{L^2}{2\eta T} + LB + \eta(\lambda^2 L^2 + G^2)\nonumber\\
	&\overset{(a)}{=}& \frac{L^2}{2T} \sqrt{2T\lambda^2}+LB + \sqrt{\frac{1}{2T\lambda^2}}(L^2\lambda^2 + G^2)\nonumber\\
	&=& L^2 \lambda\sqrt{\frac{2}{T}} + LB + \frac{G^2}{\sqrt{2T\lambda^2}}\nonumber\\
	&\overset{(b)}{\lesssim} & \sqrt{\frac{1}{T}}+\sqrt{\frac{d}{n}}+d^\frac{p}{2}R^{1-p}+\frac{1}{\sqrt{T}}\left[\frac{d}{n}+\frac{R^2dT}{n^2\epsilon^2}\ln \frac{1}{\delta}\left(\ln \frac{T}{\delta}+\sqrt{\ln \frac{T}{\delta}\ln d}\right) + d^p R^{2(1-p)}\right]\nonumber\\
	&\overset{(c)}{\lesssim} & \frac{R\sqrt{d}}{n\epsilon}+\sqrt{\frac{d}{n}}+d^\frac{p}{2} R^{1-p}+\frac{R\sqrt{d}}{n\epsilon}\ln \frac{1}{\delta} \left(\ln \frac{T}{\delta}+\sqrt{\ln \frac{T}{\delta}\ln d}\right)\nonumber\\
	&\sim & \sqrt{\frac{d}{n}}+d^\frac{p}{2} R^{1-p}+\frac{R\sqrt{d}}{n\epsilon}\ln \frac{1}{\delta} \left(\ln \frac{T}{\delta}+\sqrt{\ln \frac{T}{\delta}\ln d}\right).
	\label{eq:riskbound}
\end{eqnarray}
For (a) and (c), recall that in the statement of Theorem \ref{thm:risknew}, the parameters are set to be $T=n^2\epsilon^2/(dR^2)$ and $\eta=1/\sqrt{2T\lambda^2}$. (b) uses Lemma \ref{lem:bgnew}.

To minimize \eqref{eq:riskbound}, let
\begin{eqnarray}
	R=\sqrt{d}\left(\frac{n\epsilon}{\sqrt{d}}\right)^\frac{1}{p}, 
\end{eqnarray}
then
\begin{eqnarray}
	T=\frac{n^2\epsilon^2}{dR^2}= \frac{1}{d}\left(\frac{n\epsilon}{\sqrt{d}}\right)^{2-\frac{2}{p}}.
\end{eqnarray}
\begin{eqnarray}
	\mathbb{E}[F(\hat{\mathbf{w}})]-F(\mathbf{w}^*)&\lesssim& \sqrt{\frac{d}{n}} + \sqrt{d}\left(\frac{\sqrt{d}}{n\epsilon}\right)^{1-\frac{1}{p}}\ln \frac{1}{\delta} \left(\ln \frac{T}{\delta}+\sqrt{\ln \frac{T}{\delta}\ln d}\right)\nonumber\\
	&\lesssim & \sqrt{\frac{d}{n}} + \sqrt{d}\left(\frac{\sqrt{d}}{n\epsilon}\right)^{1-\frac{1}{p}}\ln \frac{1}{\delta}(\ln n+\ln d).
\end{eqnarray}
\section{Technical Lemmas}\label{sec:lemmas}

Lemma \ref{lem:mgf} gives a bound of the moment generating function of a bounded random variable.
\begin{lem}\label{lem:mgf}
	(\cite{nguyen2023improved}, Lemma 2.2) Let $V$ be a one dimensional random variable such that $\mathbb{E}[V]=0$ and $|V|\leq R$ almost surely. Then for $0\leq \lambda\leq 1/R$,
	\begin{eqnarray}
		\mathbb{E}[e^{\lambda V}]\leq \exp\left(\frac{3}{4}\lambda^2 \mathbb{E}[V^2]\right).
	\end{eqnarray}
\end{lem}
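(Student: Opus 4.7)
The plan is to prove the bound by a Taylor expansion of $e^{\lambda V}$ combined with the observation that all higher moments of a bounded variable can be controlled by the second moment. First I would write $e^{\lambda V}=\sum_{k=0}^\infty (\lambda V)^k/k!$ and take expectation term by term, which is justified because $|V|\le R$ makes the series uniformly dominated. The $k=0$ term contributes $1$, and the $k=1$ term vanishes since $\mathbb{E}[V]=0$. Hence
\begin{eqnarray}
\mathbb{E}[e^{\lambda V}] = 1 + \sum_{k=2}^\infty \frac{\lambda^k \mathbb{E}[V^k]}{k!}.
\end{eqnarray}

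Next, for every $k\ge 2$ I would use $|V|^k = |V|^{k-2}\cdot V^2 \le R^{k-2} V^2$ almost surely, so that $|\mathbb{E}[V^k]|\le R^{k-2}\mathbb{E}[V^2]$. Plugging this in and factoring out $\mathbb{E}[V^2]/R^2$ gives
\begin{eqnarray}
\mathbb{E}[e^{\lambda V}] \le 1 + \frac{\mathbb{E}[V^2]}{R^2}\sum_{k=2}^\infty \frac{(\lambda R)^k}{k!} = 1 + \frac{\mathbb{E}[V^2]}{R^2}\bigl(e^{\lambda R}-1-\lambda R\bigr).
\end{eqnarray}
Finally I would apply the elementary inequality $1+u\le e^u$ to conclude
\begin{eqnarray}
\mathbb{E}[e^{\lambda V}] \le \exp\!\left(\frac{\mathbb{E}[V^2]}{R^2}(e^{\lambda R}-1-\lambda R)\right),
\end{eqnarray}
so the claim reduces to showing that $(e^{x}-1-x)/x^2 \le 3/4$ whenever $x=\lambda R\in[0,1]$.

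The main obstacle is this last inequality. I would handle it by noting that $(e^x-1-x)/x^2 = \sum_{k\ge 2} x^{k-2}/k! = 1/2 + x/6 + x^2/24 + \cdots$, which is a power series with nonnegative coefficients and therefore nondecreasing in $x$ on $[0,\infty)$. Hence its maximum over $[0,1]$ is attained at $x=1$, where it equals $e-2\approx 0.7183 < 3/4$. Combining this with the previous display yields $\mathbb{E}[e^{\lambda V}]\le \exp\!\bigl(\tfrac{3}{4}\lambda^2\mathbb{E}[V^2]\bigr)$ for all $\lambda\in[0,1/R]$, which is the claim. The only subtlety beyond routine calculus is making sure that the monomial bound $|\mathbb{E}[V^k]|\le R^{k-2}\mathbb{E}[V^2]$ is applied with the absolute value, so that odd-order terms (whose sign is unknown) are dominated correctly; this is why the even power $V^2$ is kept intact rather than being further bounded.
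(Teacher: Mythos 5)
Your proof is correct, but it takes a genuinely different route from the paper's. The paper's argument is a two-line computation: it applies the pointwise inequality $e^x\leq 1+x+\tfrac{3}{4}x^2$ (valid for all $x\leq 1$) with $x=\lambda V$, takes expectations so the linear term vanishes, and finishes with $1+u\leq e^u$. You instead expand the moment generating function termwise, dominate every higher moment by the second moment via $|\mathbb{E}[V^k]|\leq R^{k-2}\mathbb{E}[V^2]$, and arrive at the sharper Bennett-type intermediate bound
\begin{equation}
\mathbb{E}[e^{\lambda V}]\leq \exp\left(\frac{\mathbb{E}[V^2]}{R^2}\left(e^{\lambda R}-1-\lambda R\right)\right),
\end{equation}
which you then relax using the monotonicity of $(e^x-1-x)/x^2$ and its value $e-2<\tfrac{3}{4}$ at $x=1$. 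All steps check out, including the interchange of sum and expectation (justified by boundedness) and the careful use of absolute values on the odd moments. It is worth noting that your final scalar inequality is essentially the same elementary fact underlying the paper's pointwise bound, but you only need it at the single deterministic point $x=\lambda R\in[0,1]$, whereas the paper needs $e^x\leq 1+x+\tfrac{3}{4}x^2$ uniformly over $x\leq 1$, including negative $x=\lambda V$; your moment-bound step absorbs the sign issues earlier. The trade-off is that your argument is longer but delivers the stronger Bennett bound as a byproduct, while the paper's is maximally short and sufficient for the constant $\tfrac{3}{4}$ it needs.
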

\begin{proof}
	We provide a simplified proof following \cite{beygelzimer2011contextual}. Use the inequality $e^x\leq 1+x+(3/4)x^2$,
	\begin{eqnarray}
		\mathbb{E}[e^{\lambda V}]\leq \mathbb{E}\left[1+\lambda V + \frac{3}{4}\lambda^2 V^2\right]=  1+\frac{3}{4}\lambda^2 \mathbb{E}[V^2]\leq \exp\left(\frac{3}{4}\lambda^2 \mathbb{E}[V^2]\right).
	\end{eqnarray}
\end{proof}

Lemma \ref{lem:nondir} converts the moment bound at each direction to an overall moment bound on the norm of random vector $\mathbf{X}$.
\begin{lem}\label{lem:nondir}
	Under the condition $\mathbb{E}[|\langle \mathbf{u}, \mathbf{X}\rangle|^p]\leq M^p$, $\mathbb{E}[\norm{\mathbf{X}}^p]\leq d^\frac{p}{2} M^p$.
\end{lem}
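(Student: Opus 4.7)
The plan is to bound $\norm{\mathbf{X}}^p$ pointwise by a sum of $p$-th powers of coordinates, and then apply the directional moment bound along the standard basis vectors.

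First, I would write $\mathbf{X}=(X_1,\ldots,X_d)$ in the standard basis, so $\norm{\mathbf{X}}^2=\sum_{j=1}^d X_j^2$, and therefore $\norm{\mathbf{X}}^p = \bigl(\sum_{j=1}^d X_j^2\bigr)^{p/2}$. Since $p\geq 2$, the map $x\mapsto x^{p/2}$ is convex on $[0,\infty)$, so by Jensen's inequality applied to the uniform average over the $d$ coordinates,
\begin{eqnarray}
	\left(\frac{1}{d}\sum_{j=1}^d X_j^2\right)^{p/2} \;\leq\; \frac{1}{d}\sum_{j=1}^d (X_j^2)^{p/2} \;=\; \frac{1}{d}\sum_{j=1}^d |X_j|^p .
\end{eqnarray}
Multiplying both sides by $d^{p/2}$ yields the pointwise inequality $\norm{\mathbf{X}}^p\leq d^{p/2-1}\sum_{j=1}^d |X_j|^p$.

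Next I would take expectation and apply the hypothesis $\mathbb{E}[|\langle \mathbf{u}, \mathbf{X}\rangle|^p]\leq M^p$ with $\mathbf{u}=\mathbf{e}_j$ for each coordinate, which gives $\mathbb{E}[|X_j|^p]\leq M^p$. Summing over $j=1,\ldots,d$ produces
\begin{eqnarray}
	\mathbb{E}[\norm{\mathbf{X}}^p] \;\leq\; d^{p/2-1}\sum_{j=1}^d \mathbb{E}[|X_j|^p] \;\leq\; d^{p/2-1}\cdot d\cdot M^p \;=\; d^{p/2}M^p,
\end{eqnarray}
which is the desired bound.

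There is no real obstacle here; the only small point to be careful about is that the convexity step requires $p/2\geq 1$, which is guaranteed by the standing assumption $p\geq 2$. The factor $d^{p/2}$ is essentially tight, as witnessed by a random vector whose coordinates are i.i.d.\ scalar random variables saturating the one-dimensional moment bound.
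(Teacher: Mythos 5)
Your proof is correct and follows essentially the same route as the paper: both reduce $\norm{\mathbf{X}}^p$ to $d^{p/2-1}\sum_j |\langle \mathbf{e}_j,\mathbf{X}\rangle|^p$ via the power-mean inequality (you via Jensen on $x\mapsto x^{p/2}$, the paper via H\"older, which is the same elementary bound) and then apply the directional moment hypothesis at the basis vectors. No gaps; the remark that $p\geq 2$ is needed for convexity is the same implicit requirement in the paper's H\"older step.
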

\begin{proof}
	Note that for a set of orthonormal basis $\mathbf{e}_1,\ldots, \mathbf{e}_d$, for any vector $\mathbf{x}$,
	\begin{eqnarray}
		\norm{\mathbf{x}}^2 =\sum_{i=1}^d \langle\mathbf{e}_i, \mathbf{x}\rangle^2.
	\end{eqnarray}
	Then for $p\geq 2$, use H{\"o}lder inequality,
	\begin{eqnarray}
		\norm{\mathbf{x}}^p=\left(\sum_{i=1}^d \langle \mathbf{e}_i, \mathbf{x}\rangle^2\right)^\frac{p}{2}\leq d^{\frac{p}{2}-1} \sum_{i=1}^d |\langle \mathbf{e}_i, \mathbf{x}\rangle|^p.
	\end{eqnarray}
	Therefore
	\begin{eqnarray}
		\mathbb{E}[\norm{\mathbf{X}}^p]\leq d^{\frac{p}{2}-1} \sum_{i=1}^d \mathbb{E}[|\langle \mathbf{e}_i, \mathbf{x}\rangle|^p]\leq d^\frac{p}{2}M^p.
	\end{eqnarray}
\end{proof}

The following lemma is related to Rademacher random variables. These lemmas are common in statistical learning theory literatures \cite{mohri2018foundations}. In particular, it is also stated in Proposition 2 in \cite{lugosi2019sub}.
\begin{lem}\label{lem:rad}
	Let $U_1,\ldots, U_n$ be i.i.d random variables in $\mathbb{R}$. Let $\sigma_i$, $i=1,\ldots, n$ be $n$ independent Rademacher random variables, with $\text{P}(\sigma_i=1) = \text{P}(\sigma_i=-1)=1/2$. For a set of functions $\mathcal{F}$,
	\begin{eqnarray}
		\mathbb{E}\left[\underset{f\in \mathcal{F}}{\sup}\frac{1}{n}\sum_{i=1}^n (f(U_i)-\mathbb{E}[f(U_i)])\right] \leq 2\mathbb{E}\left[\underset{f\in \mathcal{F}}{\sup}\frac{1}{n}\sum_{i=1}^n \sigma_i f(U_i)\right].
		\label{eq:rad1}
	\end{eqnarray} 
	Moreover, if $\mathbb{E}[f(U_i)]=0$ for all $f\in \mathcal{F}$, then
	\begin{eqnarray}
		\mathbb{E}\left[\underset{f\in \mathcal{F}}{\sup}\frac{1}{n}\sum_{i=1}^n \sigma_i f(U_i)\right] \leq 2\mathbb{E}\left[\underset{f\in \mathcal{F}}{\sup}\frac{1}{n}\sum_{i=1}^n f(U_i)\right].
		\label{eq:rad2}
	\end{eqnarray}
\end{lem}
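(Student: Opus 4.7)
The plan is to invoke the classical ghost-sample symmetrization argument from empirical process theory. Both inequalities follow the same four-step template: (i) introduce an independent copy of the data, (ii) rewrite the expectation so that the supremum can be pulled outside via Jensen's inequality, (iii) insert or remove Rademacher signs by exchangeability, and (iv) close with the triangle inequality to pick up the factor of $2$.

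For \eqref{eq:rad1}, I will introduce an independent copy $U_1',\ldots,U_n'$ of the sample and denote by $\mathbb{E}'$ expectation over this ghost sample. Because $\mathbb{E}[f(U_i)]=\mathbb{E}'[f(U_i')]$, the left-hand side of \eqref{eq:rad1} equals
\begin{equation*}
\mathbb{E}\left[\sup_{f\in\mathcal{F}} \mathbb{E}'\left[\frac{1}{n}\sum_{i=1}^n\bigl(f(U_i)-f(U_i')\bigr)\right]\right],
\end{equation*}
and Jensen's inequality lets me pull $\mathbb{E}'$ outside the supremum. Since $(U_i,U_i')$ is exchangeable coordinate by coordinate, the vector $(f(U_i)-f(U_i'))_i$ has the same joint distribution as $(\sigma_i(f(U_i)-f(U_i')))_i$ for an independent Rademacher sequence $\sigma$, so I can insert these signs without changing the expectation. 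A final triangle inequality splits the Rademacher sum into a $U$-piece and a $U'$-piece, each of which has the same law as the target Rademacher average, producing the factor of $2$.

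For \eqref{eq:rad2}, I will run the same machine in the opposite direction. Using $\mathbb{E}'[f(U_i')]=0$, I rewrite $\sigma_i f(U_i)=\sigma_i(f(U_i)-\mathbb{E}'[f(U_i')])$, apply Jensen to pull $\mathbb{E}'$ inside the supremum, and invoke exchangeability to drop the Rademacher signs. The triangle inequality then produces $\mathbb{E}[\sup_{f} (1/n)\sum_i f(U_i)]$ together with a term of the form $\mathbb{E}[\sup_f(1/n)\sum_i(-f(U_i))]$. In the application to Lemma \ref{lem:group}, the class is $\{\mathbf{Q}\mapsto\langle\mathbf{u},\mathbf{Q}-\mu_Y\rangle:\norm{\mathbf{u}}=1\}$, which is closed under negation since $f_{-\mathbf{u}}=-f_\mathbf{u}$, so the two terms coincide and the factor of $2$ emerges.

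The bookkeeping is routine; the one subtle point I must be careful about is that \eqref{eq:rad2} under the mean-zero hypothesis alone implicitly requires the class $\mathcal{F}$ to be closed under negation in order to equate the two terms produced by the final triangle inequality. This symmetry is automatic for the linear class of unit-vector projections used in Lemma \ref{lem:group}, and could otherwise be accommodated either by strengthening the hypothesis on $\mathcal{F}$ or by taking the supremum over absolute values.
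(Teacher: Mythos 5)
Your proposal is correct, and it is the standard ghost-sample symmetrization/desymmetrization argument; the paper itself gives no proof of Lemma \ref{lem:rad}, deferring to \cite{mohri2018foundations} and Proposition 2 of \cite{lugosi2019sub}, and those sources argue essentially as you do. For \eqref{eq:rad1} your bookkeeping is right: after inserting the signs, the two halves produced by the triangle inequality each have the law of the target Rademacher average because $-\sigma_i$ is again Rademacher and $U_i'\overset{d}{=}U_i$, so no structural assumption on $\mathcal{F}$ is needed. Your caveat about \eqref{eq:rad2} is not merely a subtlety but a genuine requirement: after dropping the signs and applying the triangle inequality one is left with $\mathbb{E}[\sup_f \tfrac{1}{n}\sum_i f(U_i)]+\mathbb{E}[\sup_f \tfrac{1}{n}\sum_i(-f(U_i))]$, and without closure of $\mathcal{F}$ under negation (or absolute values) the second term need not be controlled by the first. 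Indeed the inequality as literally stated can fail: take $n=1$, $U$ uniform on $\{1,\ldots,m\}$, and $f_j(u)=-(m-1)\mathbf{1}\{u=j\}+\mathbf{1}\{u\neq j\}$ for $j=1,\ldots,m$; each $f_j$ is mean zero, the left side of \eqref{eq:rad2} equals $m/2$, and the right side equals $2$, a violation for $m\geq 5$. So the extra hypothesis you identify (symmetry of $\mathcal{F}$, or suprema of absolute values) should be regarded as part of the statement; since the paper's only use of \eqref{eq:rad2}, step (d) in the proof of Lemma \ref{lem:group}, applies it to the class $\{\langle\mathbf{u},\cdot-\mu_Y\rangle:\norm{\mathbf{u}}=1\}$, which is closed under negation, nothing downstream is affected.
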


The following contraction lemma comes from \cite{ledoux2013probability}, which is stated again in Proposition 3 in \cite{lugosi2019sub}.
\begin{lem}\label{lem:contraction}
	Let $U_1,\ldots, U_n$ be i.i.d random variables in $\mathbb{R}$. Let $\sigma_i$, $i=1,\ldots, n$ be $n$ independent Rademacher random variables. Then
	\begin{eqnarray}
		\mathbb{E}\left[\underset{f\in \mathcal{F}}{\sup}\sum_{i=1}^n \sigma_i|f(U_i)|\right]\leq \mathbb{E}\left[\underset{f\in \mathcal{F}}{\sup}\frac{1}{n}\sum_{i=1}^n \sigma_i f(U_i)\right].
	\end{eqnarray}
\end{lem}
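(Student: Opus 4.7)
\textbf{Proof proposal for Lemma \ref{lem:contraction}.} The plan is to recognize the claim as a direct corollary of the classical Ledoux--Talagrand contraction principle (the result of \cite{ledoux2013probability} cited in the statement), applied with the Lipschitz function $\phi(x)=|x|$. The general principle states that if $\phi:\mathbb{R}\to \mathbb{R}$ is $L$-Lipschitz with $\phi(0)=0$, then for any countable class $\mathcal{F}$ of real-valued functions and any i.i.d.\ Rademacher variables $\sigma_1,\ldots,\sigma_n$,
\begin{eqnarray*}
\mathbb{E}\left[\sup_{f\in \mathcal{F}} \sum_{i=1}^n \sigma_i \phi(f(U_i))\right] \leq L\cdot \mathbb{E}\left[\sup_{f\in \mathcal{F}}\sum_{i=1}^n \sigma_i f(U_i)\right].
\end{eqnarray*}
Since $\phi(x)=|x|$ satisfies $\phi(0)=0$ and, by the elementary inequality $||a|-|b||\leq |a-b|$, is $1$-Lipschitz, instantiating the principle with this $\phi$ and $L=1$ yields exactly the asserted bound (the normalization $1/n$ is harmless by linearity of expectation).

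For a self-contained proof, I would proceed by a standard one-coordinate peeling argument. Write $S_i(f) := \sigma_i f(U_i)$ and $T_i(f) := \sigma_i |f(U_i)|$, and define the hybrids
\begin{eqnarray*}
H_k := \mathbb{E}\left[\sup_{f\in \mathcal{F}} \Bigl(\sum_{i\leq k} T_i(f) + \sum_{i>k} S_i(f)\Bigr)\right],
\end{eqnarray*}
so that $H_n$ is the left-hand side and $H_0$ is the right-hand side. The plan is to show $H_k \leq H_{k-1}$ for every $k$. Conditioning on all variables except $\sigma_k$ and taking the expectation over $\sigma_k\in\{\pm 1\}$ reduces the inequality $H_k\leq H_{k-1}$ to the deterministic one-step replacement lemma: for any real-valued function $A$ on $\mathcal{F}$ and any $u\in \mathbb{R}$,
\begin{eqnarray*}
\sup_{f_1,f_2\in\mathcal{F}}\Bigl[A(f_1)+A(f_2)+|f_1(u)|-|f_2(u)|\Bigr]\;\leq\; \sup_{f_1,f_2\in \mathcal{F}}\Bigl[A(f_1)+A(f_2)+f_1(u)-f_2(u)\Bigr].
\end{eqnarray*}

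This one-step lemma is the only substantive step. To prove it, fix an $\varepsilon$-optimal pair $(f_1,f_2)$ for the left-hand side; the $1$-Lipschitz bound on $|\cdot|$ gives $|f_1(u)|-|f_2(u)|\leq |f_1(u)-f_2(u)|$, which equals either $f_1(u)-f_2(u)$ or $f_2(u)-f_1(u)$. In the first case the same pair $(f_1,f_2)$ is admissible on the right-hand side with at least as large a value; in the second case, swapping the roles of $f_1$ and $f_2$ does the same. Iterating the one-step bound over $k=1,\ldots,n$ produces $H_n\leq H_0$, which is the claim. The main (and really only) obstacle is this one-step argument; once it is in place, the rest is routine induction and conditioning.
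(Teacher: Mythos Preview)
Your proposal is correct and essentially follows the paper's approach: the paper does not give a proof but simply cites the Ledoux--Talagrand contraction principle (and its restatement as Proposition 3 in \cite{lugosi2019sub}), which is exactly what you invoke with $\phi(x)=|x|$. Your self-contained hybrid/peeling argument is also a correct proof of that principle in this special case; the one-step swap using $||a|-|b||\leq |a-b|$ is standard and sound. (Note that the mismatched $1/n$ on only the right-hand side of the displayed inequality is a typo in the paper; the intended statement, as used in the proof of Lemma~\ref{lem:group}, has the same normalization on both sides, which is how you are treating it.)
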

\end{document}